\documentclass{article}

\PassOptionsToPackage{numbers, compress}{natbib}
\usepackage[preprint]{neurips_2026}
\usepackage[utf8]{inputenc}
\usepackage[T1]{fontenc}
\usepackage{hyperref}
\usepackage{url}
\usepackage{arydshln}
\usepackage{booktabs}
\usepackage{amsfonts}
\usepackage{nicefrac}
\usepackage{microtype}
\usepackage[table]{xcolor}
\usepackage{colortbl}
\usepackage{graphicx}
\graphicspath{{figures/}}
\usepackage{subcaption}
\usepackage{multirow}
\usepackage{makecell}
\usepackage{tcolorbox}
\usepackage{pifont}
\usepackage{enumitem}
\usepackage{amsmath}
\usepackage{amssymb}
\usepackage{wrapfig}
\usepackage{amsthm}
\usepackage{mathtools}
\usepackage{float}
\usepackage{array,tabularx,ragged2e}

\newcolumntype{Y}{>{\RaggedRight\arraybackslash}X}
\newcolumntype{L}[1]{>{\hsize=#1\hsize\RaggedRight\arraybackslash}X}
\usepackage[capitalize,noabbrev]{cleveref}

\theoremstyle{plain}
\newtheorem{proposition}{Proposition}[section]
\newtheorem*{example*}{Example}

\newcommand{\ourmodel}{{\textsc{TabAlign}}}

\newcommand{\tabattn}{{\textsc{TabAttn}}}

\newcommand{\tick}{\ding{51}}
\newcommand{\cross}{\ding{55}}

\definecolor{Best}{RGB}{210,240,220}
\definecolor{Second}{RGB}{245,245,200}

\newcommand{\hi}[1]{\cellcolor{Best}#1}
\definecolor{Bad}{RGB}{255,220,220}

\newcommand{\badbox}[1]{\colorbox{Bad}{#1}}

\title{From Table to Cell: Attention for Better Reasoning with \ourmodel{}}

\author{\mdseries
Tung Sum Thomas Kwok$^{1,6*}$, 
Zeyong Zhang$^{2*}$, 
Xinyu Wang$^{3,6}$,\\
Chunhe Wang$^{1}$, 
Xiaofeng Lin$^{1}$, 
Hanwei Wu$^{4,6}$, \\
Lei Ding$^{5,6}$, Guang Cheng$^{1}$, Zhijiang Guo$^{7*}$\\
$^{1}$University of California, Los Angeles, 
$^{2}$New Jersey Institute of Technology,\\
$^{3}$McGill University,
$^{4}$Université de Montréal\\
$^{5}$University of Manitoba, 
$^{6}$SimpleWay \\
$^{7}$The Hong Kong University of Science and Technology (Guangzhou), \\
\texttt{\href{mailto:tk1018@ucla.edu}{tk1018@ucla.edu},
\href{mailto:zhijiangguo@hkust-gz.edu.cn}{zhijiangguo@hkust-gz.edu.cn}}
}

\begin{document}

\maketitle

\begin{abstract}
Multi-step LLM reasoning over structured tables fails because planning and execution share no explicit cell-grounding contract. Existing methods constrain the planner to a left-to-right factorization at odds with table permutation invariance, and score intermediate states by generated content alone, overlooking cell grounding. We conduct a pilot study showing that diffusion language models (DLMs) produce more human-aligned and permutation-stable cell attention on tables than autoregressive models, with a $40.2\%$ median reduction in attention-AUROC variability under row reordering. Motivated by this, we propose \ourmodel{}, a planned table reasoning framework that operationalizes the contract. \ourmodel{} pairs a masked DLM planner, whose bidirectional denoising emits plan steps as binary cell masks, with \tabattn{}, a lightweight verifier trained on 1{,}600 human-verified attention standards to score each step by its attention overlap with the plan-designated mask. Across eight benchmarks covering table question answering and fact verification, \ourmodel{} improves average accuracy by 15.76 percentage points over the strongest open-source baseline at comparable 8B-class scale, with a matched-backbone ablation attributing $2.87$ percentage points of this gain to the DLM planner over an AR planner on a fixed reasoner. Cleaner DLM plans also accelerate downstream reasoning execution by $44.64\%$.
\end{abstract}
\section{Introduction} \label{sec:intro} 



Table reasoning has evolved from single-shot semantic parsing to \emph{interactive structured reasoning}, where agents select actions to transform table states and accumulate evidence to answer~\citep{pasupat2015compositional,chen2020tabfact,wu2025mmqa,xing2025mmtu}. Although tool-use agents~\citep{wang2024chainoftable,ji2025treeoftable} further extend this setting, they remain brittle on tables, where sparse evidence, permutation invariance of rows and columns, and complex schema dependencies cause early-step errors to compound across the trajectory. To mitigate this, \emph{planned reasoning} has emerged~\citep{nguyen2025interpretable,rawat2025preactmultistepplanningreasoning}, requiring agents to generate a natural-language plan before executing tool calls. Despite its appeal, this approach remains brittle because current pipelines lack \textbf{an explicit cell-grounding contract between planning and execution}. Because tabular evidence localizes to specific cells, this contract must operate at cell granularity. Without such a contract to supervise the planner's cell designations and verify the reasoner's cell-level attention, prior efforts often fall into a two-stage bottleneck where plausible reasoning steps target incorrect cells (\cref{fig:motivation}). 

\begin{figure}[t]
    \centering
    \includegraphics[width=0.97\linewidth]{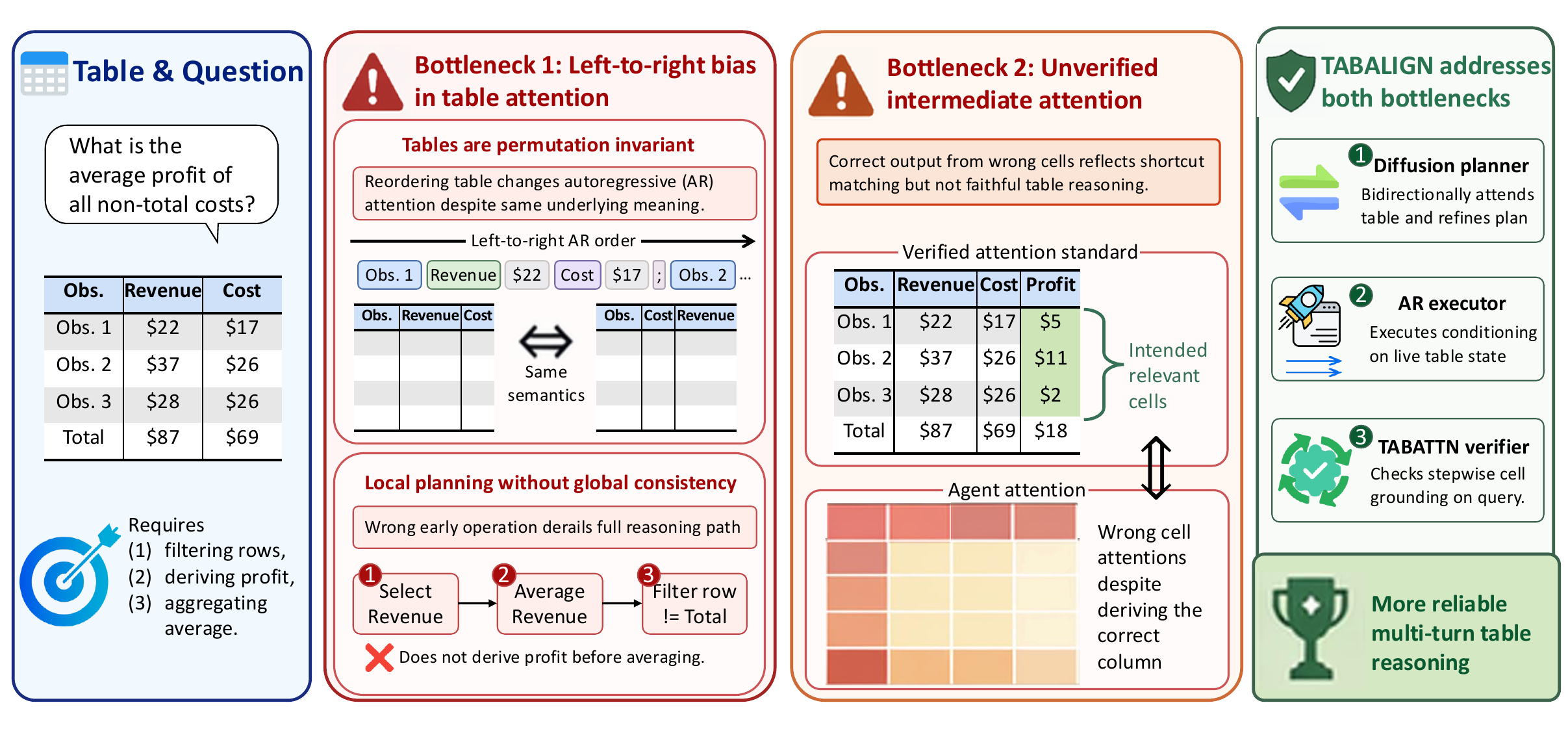}
    \vspace{-1mm}
    \caption{\textbf{Existing methods leave the cell-grounding contract implicit, producing failures at both planning and reasoning stages.} At planning time, AR models impose an order-sensitive prior that conflicts with table permutation invariance, producing a globally inconsistent plan. At reasoning time, content-based PRMs score the generated state alone, leaving cell grounding to chance. \ourmodel{} operationalizes the contract through a diffusion-based bidirectional planner and the \tabattn{} verifier.}
    \label{fig:motivation}
    \vspace{-5mm}
\end{figure}

This bottleneck first appears at planning time. Autoregressive (AR) generation produces plans under a left-to-right factorization, which carries an order-sensitive prior over which cells become relevant. Because tables are permutation-invariant, this prior couples plans to arbitrary serialization choices, so the same table under a different row order can yield a different plan. An early filtering step may only be valid given a later aggregation, but a causal planner commits to it before the relevant context is in scope. Our pilot study (\cref{sec:attn-pilot}) shows that diffusion-pretrained bidirectional attention produces more human-aligned and permutation-stable cell attention than state-of-the-art AR models.

The same bottleneck reappears at reasoning time, where existing process reward models \emph{do not check} whether each step consults plan-designated cells. Final-answer rewards~\citep{liu2024mixsc,nguyen2025interpretable} are too sparse to localize a step that read the wrong cell, and executor-based rewards~\citep{xing2025mmtu} require answer-checking environments unavailable for many tabular tasks. Existing process rewards either score the produced state's lexical or semantic quality~\citep{kwok2026retab,plrm2026}, or rely on LLM-as-a-judge step-correctness labels that face an upstream labelability ceiling~\citep{zou2026tattoo}, and neither family checks whether the model attends to the right cells. Our reward-component ablation (\cref{tab:planner_ablation}(b)) confirms that an attention-grounded signal closes this gap with consistent accuracy gains over content-based rewards.

Read together, the two failures trace to the same missing cell-grounding contract. A planned step commits to consult a specific set of cells, so the reasoner's cell-level attention forms an inspectable, labelable proxy for whether the step is grounded in the intended evidence rather than a complete causal explanation of the model's output. Our insight is that \emph{attention overlap with plan-designated cells yields a query-grounded, executor-free signal that exposes right-content-wrong-grounding failures invisible to content-based rewards and factorizes across planning and reasoning.}

Guided by this insight, we introduce \ourmodel{}, a planned table reasoning framework whose two components operationalize this contract. For \emph{understanding}, \ourmodel{} uses a masked diffusion language model (DLM) to denoise a globally consistent tool-use blueprint, with bidirectional denoising capturing mutual cell dependencies schema-wide. Each step terminates with a \texttt{[target: col]} tag that compiles into a binary cell mask $\mathbf{m}_t$, making the plan's cell commitment explicit. For \emph{reasoning}, we introduce \tabattn{}, a lightweight \emph{verifier} that scores each step by the reasoner's per-cell attention overlap with $\mathbf{m}_t$. Fit on 1{,}600 human-verified standards from eight benchmarks, \tabattn{} supplies in-loop feedback with a stagnation halt on non-improving attention.

Across eight benchmarks, \ourmodel{} outperforms the strongest open-source baseline at comparable 8B-class scale by 15.76\,pp on the six-benchmark accuracy average. A matched-backbone ablation (\cref{tab:backbone_ablation}) attributes 2.87\,pp of this gain to the DLM planner over an AR planner on a fixed Qwen3-VL-8B reasoner, with the remainder attributable to planning-vs-end-to-end execution and the attention-grounded reward. Cleaner DLM plans further accelerate downstream reasoning by 44.64\,\%. Our pilot study isolates two drivers behind these gains: bidirectional attention reduces attention-AUROC variability under row reordering by 40.2\,\%, and cell-relevance labels achieve 19.8\,pp higher LLM-vs-human agreement than step-correctness labels. Component-level analyses (\cref{sec:in-depth-analyses}) attribute the framework gain to planner choice, reasoner backbone, and the attention-grounded reward, ruling out backbone scaling. Our contributions are:

\clubpenalty=0
\widowpenalty=0
\displaywidowpenalty=0
(1) \textbf{Cell-level attention pilot study.} We conduct a pilot study showing that bidirectional attention is more human-aligned and permutation-stable than AR attention, and that cell relevance is more labelable than step correctness across eight benchmarks (\cref{sec:attn-pilot}). 


(2) \textbf{Planned reasoning with attention grounding.} We propose \ourmodel{}, featuring a bidirectional DLM planner and an attention-overlap verifier (\tabattn{}). Our framework captures grounding failures missed by content-based rewards while utilizing optimized DLM plans to reduce column hallucinations and accelerate downstream reasoning (\cref{sec:methodology}).

(3) \textbf{Consistent gains across benchmarks.} \ourmodel{} consistently outperforms existing open-source baselines at comparable 8B-class scale
across eight table reasoning benchmarks and ablates cleanly across both planner and reasoner backbones (\cref{sec:experiments}).

\vspace{-1mm}
\section{Related Work}
\vspace{-1mm}
\label{sec:related}
\textbf{Table reasoning agents and process supervision.}
Table reasoning has evolved from single-shot semantic parsing~\citep{pasupat2015compositional,herzig2020tapas,yin2020tabert} to interactive agent reasoning across fact verification~\citep{chen2020tabfact,gupta2020infotabs}, multi-hop QA~\citep{zhu2021tatqa,wu2025mmqa}, and unified evaluation benchmarks~\citep{cheng2022hitab,xing2025mmtu}. Prior work~\citep{wang2024chainoftable,ji2025treeoftable,liu2024mixsc,yang2025tide,yang2025cit} formulates this as a partially observable Markov Decision Process~\citep{ASTROM1965174} that transforms table states through tool actions. Tool-using LLM agents~\citep{yao2023react,wang2024chainoftable} predefine a closed action set but make local decisions without verifying intermediate states. PRMs~\citep{lightman2023verify,uesato2022processstep,wang2024math,lu2024autopsv,zhu2026codescalerscalingcodellm} have been introduced into table reasoning through lexical rewards~\citep{kwok2026retab}, learned tool-grounded supervision~\citep{zou2026tattoo}, and step-level execution rewards for agentic analysis~\citep{plrm2026}, while multimodal studies~\citep{kwok2026tabqaworld,xing2026tabledart} adopt table representation selection as a reasoning component. These approaches supervise generated content alone. We complement them by adding input attention as a process signal for structured table reasoning.

\textbf{Tabular data with diffusion models.}
Early diffusion work on tabular data focuses on data synthesis~\citep{10.5555/3618408.3619133,lin2025ctsyn}. TabNAT~\citep{tabnat2025} extends this with a continuous-discrete joint generative framework, noting that AR generation order introduces structural misalignment for heterogeneous tabular columns. We share this observation but shift the focus to table reasoning and process supervision. With the rise of discrete diffusion~\citep{lou2024discrete}, masked DLMs have evolved from early controllable generators~\citep{li2022diffusionlm,austin2021structured} to scaled models competitive with AR systems on structured generation~\citep{nie2025llada,zheng2025dream,sahoo2026mdlm}, enabling planned reasoning~\citep{nie2025planner} that pairs diffusion planners with AR executors. Existing diffusion-based reasoning has primarily focused on text and vision, leaving table reasoning underexplored. We address this by using masked bidirectional attention to better match the permutation invariance of tables, producing more stable and human-aligned cell attention~\citep{liu2024mixsc} (\cref{sec:app-perm-stability}).

\vspace{-1mm}
\section{Pilot Analysis: Revisiting Attention Mechanism on Tables}
\label{sec:attn-pilot}
\vspace{-1mm}
We conduct a pilot study to identify what makes table reasoning hard to supervise, guided by the following research question:
\begin{tcolorbox}[width=0.99\textwidth, colback=blue!5!white, colframe=blue!75!black, boxsep=1pt, top=3pt, bottom=3pt, left=4pt, right=4pt]
    \resizebox{\linewidth}{!}{%
    \textbf{RQ}: Which attention mechanism best aligns cell-level information accessibility with table semantics?
}
\end{tcolorbox}

\begin{wraptable}{r}{0.52\textwidth}
\centering
\small
\vspace{-3mm}
\caption{Mean AUROC and permutation stability $\sigma_{\text{AUROC}}$ ($K{=}5$ row permutations). Per-dataset breakdown in \cref{tab:auroc-combined-full}.}
\label{tab:auroc-combined}
\vspace{-1mm}
\setlength{\tabcolsep}{4pt}
\resizebox{\linewidth}{!}{%
\begin{tabular}{lcccc}
\toprule
Model & Bidir. & Diff. & Overall $\uparrow$ & $\sigma_{\text{AUROC}}\downarrow$ \\
\midrule
ModernBERT (149M) & \tick & \cross & 0.569 & \textbf{0.046} \\
Qwen3-VL-8B  & \cross & \cross & 0.628 & 0.123 \\
Qwen3-8B     & \cross & \cross & 0.623 & 0.138 \\
DiffuLLaMA-7B & \tick & \tick & 0.639 & 0.123 \\
LLaDA-8B     & \tick & \tick & \textbf{0.677} & 0.074 \\
\bottomrule
\end{tabular}}
\vspace{-5mm}
\end{wraptable}

\textbf{Table attention for understanding.} 
To investigate the research question, we evaluate models with different attention mechanisms, including AR Qwen3-8B~\citep{yang2025qwen3technicalreport} and its multimodal counterpart Qwen3-VL-8B~\citep{bai2025qwen3vltechnicalreport}, and bidirectional attention models. We begin with ModernBERT-149M~\citep{warner-etal-2025-smarter}, trained via random masking. We then extend to diffusion-based models with scheduled masking, DiffuLLaMA-7B~\citep{gong2025diffugpt}, which adapts AR models through diffusion post-training, and LLaDA-8B~\citep{nie2025llada}, a fully diffusion-pretrained model. All models operate on serialized table text tokens for modality control. DiffuLLaMA specifically serves as a within-text causal control to isolate the effect of bidirectional \emph{pre-training} from multimodal architecture differences. We curate 1{,}600 human-verified table attention standards from eight datasets (details in \cref{sec:app-attention-curation}) 
Each map contains the table with each cell assigned a binary label to indicate its relevance to the question. 

\Cref{tab:auroc-combined} shows that bidirectional attention improves cell-level AUROC against human annotations by 4.9\,pp (0.628 $\to$ 0.677), with DiffuLLaMA and LLaDA both outperforming Qwen3-VL-8B despite older backbones (e.g., LLaMA-2-7B~\citep{touvron2023llama2openfoundation}). The text-only Qwen3-8B and multimodal Qwen3-VL-8B score nearly identically, and \cref{tab:llm-revis-controls} confirms that image-pad tokens receive zero attention mass in Qwen3-VL-8B, so the AR-DLM gap reflects attention mechanism rather than multimodal pretraining. While ModernBERT's smaller scale prevents a clean attribution of its AUROC, its permutation stability ($\sigma_{\text{AUROC}}=0.046$) exceeds even LLaDA-8B, evidence that bidirectional pre-training improves stability independently of model capacity. DiffuLLaMA's failure to inherit ModernBERT/LLaDA's stability further indicates that bidirectional \emph{pre-training} is the load-bearing property, motivating our use of a masked DLM as the planner.

\begin{figure}[t!]
\centering
\includegraphics[width=\linewidth]{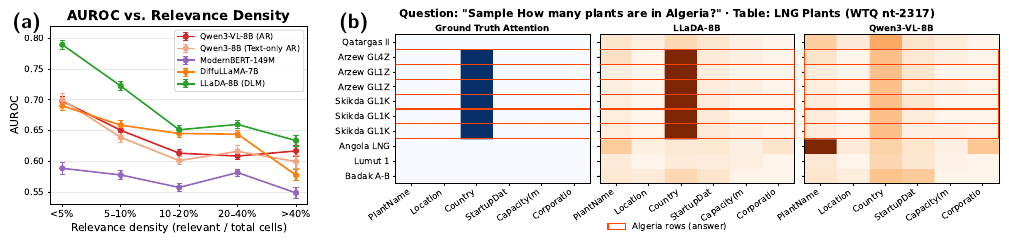}
\caption{\textbf{Pilot analysis: bidirectional attention is more human-aligned than AR attention, with the largest gap on tables where relevant cells are sparse.} \textbf{(a)} LLaDA's alignment advantage is largest with sparse relevant cells (density $<$10\%) and narrows above 40\% density, leading across all bins. \textbf{(b)} Question-cell attention for ``How many plants are in Algeria?'': LLaDA concentrates on relevant cells while Qwen3-VL disperses across irrelevant rows.}
\label{fig:pilot}
\vspace{-4mm}
\end{figure}

\Cref{fig:pilot}(a) shows that LLaDA's advantage is largest when relevant cells are sparse (density $<$10\%) and narrows above 40\%, while leading across all bins. This supports the intuition that bidirectional attention helps most when reasoning depends on a few decisive cells, the dominant table reasoning regime. \Cref{fig:pilot}(b) shows a WTQ example where LLaDA aligns with ground-truth cells while Qwen3-VL disperses.
\begin{tcolorbox}[width=0.98\textwidth, colback=yellow!15!white, colframe=yellow!55!black, boxsep=1pt, top=1pt, bottom=1pt, left=4pt, right=4pt]
\textbf{Takeaway 1} (Attention Mechanism on Tables): Existing AR LLMs produce structurally misaligned cell attention on tables, whereas diffusion-pretrained bidirectional attention consistently improves alignment with human annotations and is more stable under row reordering.

\end{tcolorbox}
\textbf{From attention alignment to attention-grounded supervision.} Building on Takeaway 1, we ask whether this attention can directly serve as process-level supervision, which hinges on labelability of its natural ground truth, cell-relevance. The two dominant families, \emph{content-similarity rewards}~\citep{reimers2019sbert,zhu2021tatqa,kwok2026retab} and \emph{step-correctness PRMs}~\citep{zou2026tattoo}, both score outputs of reasoning. We pivot to the input side, asking which cells the question depends on, a per-cell binary view supported by recent evidence that attention statistics predict downstream correctness~\citep{li2025attentionilluminatesllmreasoning}. To compare under matched conditions, we follow~\citet{krishna-etal-2023-longeval} and~\citet{bologna2026cqaevaldesigningreliableevaluations} in evaluating each side at its natural decision unit (one binary per step, one per cell) and run an identical LLM-as-a-judge pipeline on both, with full setup details in \cref{sec:app-labelability}.

\Cref{tab:metric-failures} shows that content-similarity rewards miss content-preserving table-state changes where correct cell selection is indistinguishable from spurious overlap. \citet{zou2026tattoo} report the same pattern in their error analysis, with table-retrieval and schema-interaction errors as the dominant failure categories. TaTToo's authors further prepend the correct sub-table to schema-interaction steps and manually correct retrievals to make their LLM-judge labels tractable, a context-engineering workaround that itself signals step-correctness as a hard target.
\begin{table}[H]
\centering
\small
\setlength{\tabcolsep}{3pt}
\renewcommand{\arraystretch}{1.08}
\vspace{-4mm}
\caption{Representative failures of existing reward signals. Content-based rewards can miss numeric, semantic, or structural errors in table reasoning.}
\label{tab:metric-failures}
\begin{tabularx}{\columnwidth}{p{2.4cm} X}
\toprule
\textbf{Metric} & \textbf{Failure case} \\
\midrule

\textbf{Cosine Sim}~\citep{reimers2019sbert} &
\textbf{Q:} \textit{``How long did Cap Anson play?''}
\quad \textbf{GT:} 26 years
\quad \textbf{Ans:} \badbox{126} years
\quad \textbf{Score:} \badbox{0.749}
\quad \textbf{Issue:} Same semantic frame (``$N$ years'') hides the numeric mismatch~\citep{10.1145/3589335.3651526}. \\

\midrule
\textbf{Numerical F1}~\citep{zhu2021tatqa} &
\textbf{Q:} \textit{``Which player scored the most?''}
\quad \textbf{GT:} Lionel Messi, 25
\quad \textbf{Ans:} \badbox{Karim Benzema}, 25
\quad \textbf{Score:} \badbox{1.000}
\quad \textbf{Issue:} Ignores units and entity semantics, giving full credit despite the qualitative mismatch. \\

\midrule
\textbf{TABROUGE}~\citep{kwok2026retab} &
\textbf{Q:} \textit{``What is the element before Cu?''}
\quad \textbf{GT:} Fe
\quad \textbf{Step:} \texttt{sort(`atom\_num')}
\quad \textbf{Score:} $0.0167 \rightarrow \badbox{0.0167}$
\quad \textbf{Issue:} LCS rewards query-token presence rather than row order, leaving sorting steps unrewarded. \\

\bottomrule
\end{tabularx}
\vspace{-3mm}
\end{table}

\begin{wraptable}{r}{0.4\textwidth}
\centering
\small
\vspace{0mm}
\caption{Pooled labelability under identical LLM-as-a-judge pipelines. Per-dataset breakdown in \cref{tab:labelability-full}.}
\label{tab:labelability}
\setlength{\tabcolsep}{4pt}
\begin{tabular}{lcc}
\toprule
Method & Agree (\%) & $\kappa$ \\
\midrule
Step-correctness & 73.2 & 0.48 \\
Cell-relevance   & \textbf{93.0} & \textbf{0.68} \\
$\Delta$ & +19.8 & +0.20 \\
\bottomrule
\end{tabular}
\vspace{-2mm}
\end{wraptable}
\Cref{tab:labelability} quantifies this gap directly. Cell-relevance reaches 93.0\% per-decision agreement and Cohen's $\kappa = 0.68$ pooled across 76{,}157 cell decisions, while step-correctness hits a structural ceiling. 46.9\% of human step labels are marked ``unsure,'' and on the human-confident subset LLM-vs-human agreement is only 73.2\% with $\kappa = 0.48$ (n=190). A step-correctness classifier trained on 1{,}548 LLM-judge step labels reaches held-out F1=0.77 / AUROC=0.88, capped by the noisy supervision signal it inherits. The pooled advantage holds on six of eight benchmarks. On InfoTabs and TabMWP, where step labels reduce to short-form classification or template-aligned arithmetic, step-correctness is easier to label than cell-relevance ($-10.6$ and $-5.9$\,pp respectively, \cref{tab:labelability-full}), so the claim is that cell-relevance is more labelable on average and on most regimes rather than uniformly. Cell decisions are local and extractive, while step decisions are multi-conditional and evaluative. Prior faithfulness annotation works~\citep{krishna-etal-2023-longeval,bologna2026cqaevaldesigningreliableevaluations} identify the same asymmetry as the source of finer-unit labeling reliability.
\begin{tcolorbox}[width=0.98\textwidth, colback=yellow!15!white, colframe=yellow!55!black, boxsep=1pt, top=3pt, bottom=3pt, left=4pt, right=4pt]
\textbf{Takeaway 2} (Labelability of Supervision Targets): Step-correctness hits a labelability ceiling under LLM-as-a-judge whereas cell-relevance admits reliable per-decision labeling, making cell-grounded attention a labelable proxy for step quality on most table reasoning regimes.
\end{tcolorbox}

\vspace{-1mm}
\section{Methodology} \label{sec:methodology} \vspace{-1mm}

\Cref{sec:attn-pilot} motivates \ourmodel{} with two findings: bidirectional table attention is more human-aligned and permutation-stable than AR attention, and cell relevance is more labelable than step correctness. \ourmodel{} operationalizes this cell-grounding contract with two components: a bidirectional denoising DLM planner that assigns cell-level targets (\cref{sec:plan-design}), and a \tabattn{}-based execution module that verifies the reasoner's attention against those targets using a reward fit on cell-relevance labels (\cref{sec:stepwise-exec}).


\vspace{-1mm}
\subsection{Reasoning Plan Design} \label{sec:plan-design} \vspace{-1mm}

\textbf{Table understanding through bidirectional masking.}
To align with \cref{sec:attn-pilot}'s conclusion, we adopt bidirectional attention through a masked DLM. Given an input sequence $x_0$ of table tokens (headers, cell values) and the question, the forward process masks each token independently and symmetrically~\citep{feng2026theoretical}. This treats all positions identically and reduces order-sensitivity:
\begin{equation*}
q_{t|0}(x_t \mid x_0) = \prod_{i=1}^{L} q_{t|0}(x_t^i \mid x_0^i), \quad
q_{t|0}(x_t^i \mid x_0^i) =
\begin{cases}
\alpha_t & x_t^i = x_0^i \\
1 - \alpha_t & x_t^i = [\mathtt{m}]
\end{cases}
\end{equation*}
\textbf{Plan generation via joint denoising.}
The model denoises a fully masked plan canvas into a natural-language step sequence, with each token attending bidirectionally over the full canvas~\citep{feng2026theoretical}. Joint denoising lets requirements in later steps constrain earlier ones, enforcing inter-step dependencies that AR generation cannot. Canvas length, block schedule, and padding conventions follow~\citep{liu2025longllada,he2026ultrallada}, with full details in \cref{sec:app-planner}. \vspace{-1mm}
\begin{equation*}
p_\theta(x_0 \mid x_t) = \prod_{i=1}^{L} p_\theta(x_0^i \mid x_t).
\end{equation*}
\textbf{Tool-integrated planning.}
The planning prompt presents the DLM with a semi-closed tool vocabulary alongside the table metadata, requiring each step to conclude with a \texttt{[target: col]} tag designating which cells it operates on. These tags are parsed after generation into the binary cell mask $\mathbf{m}_t \in \{0,1\}^C$, later processed to evaluate stepwise attention alignments by \tabattn{} (\cref{sec:stepwise-exec}). The full tool vocabulary and prompt format are in \cref{sec:app-planner}.
\begin{figure*}[t]
    \centering
    \includegraphics[width=0.97\linewidth]{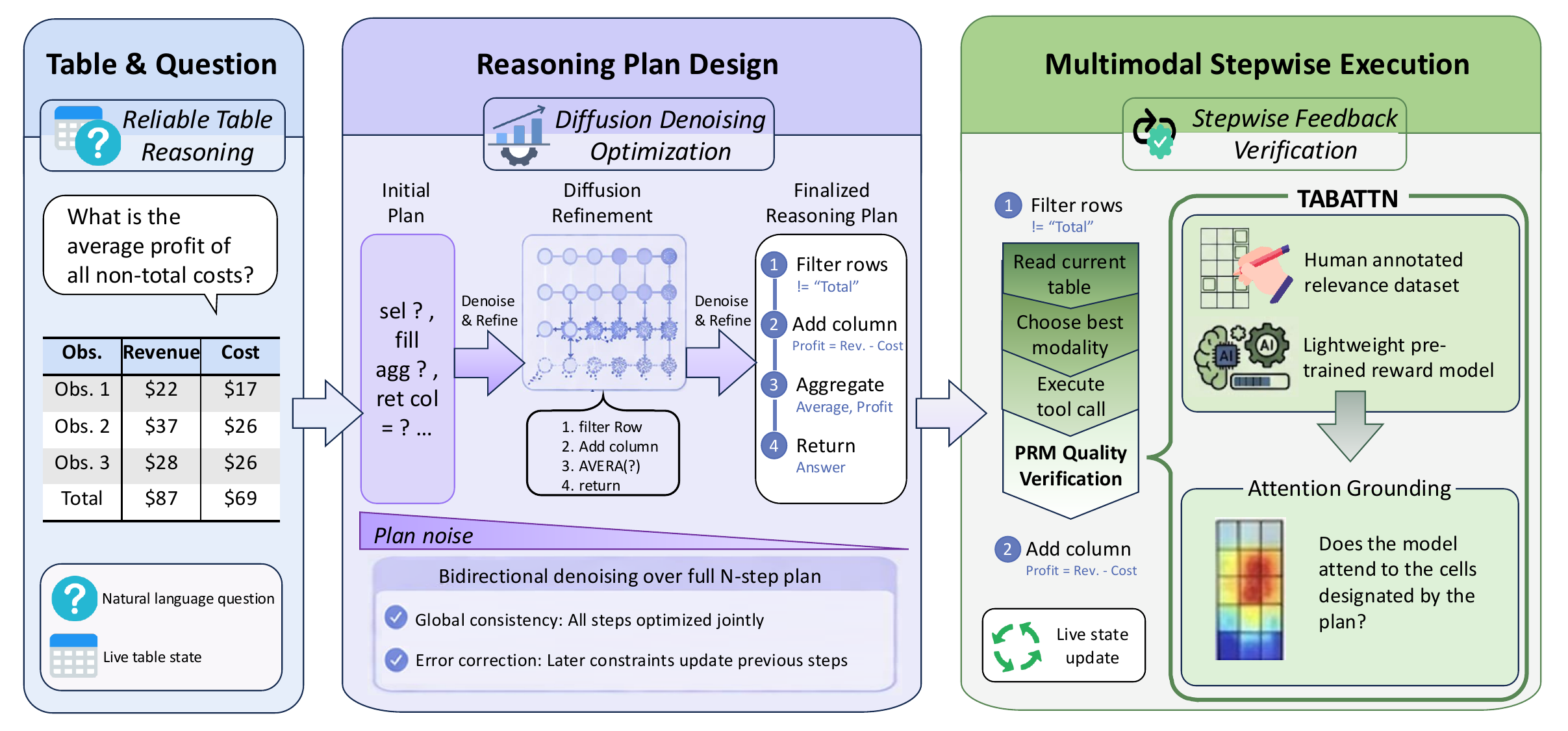}
    \caption{\textbf{\ourmodel{} pairs a masked DLM planner with the \tabattn{} verifier to enforce the cell-grounding contract.} The DLM denoises a tool-use plan over compact metadata into a step-by-step blueprint, with each step terminating in a \texttt{[target: col]} tag that compiles into a cell mask $\mathbf{m}_t$. \tabattn{} then scores each reasoning step by the overlap between the reasoner's per-cell attention and $\mathbf{m}_t$, supplying in-loop process feedback.}
    \label{fig:workflow}
    \vspace{-3mm}
\end{figure*}

\vspace{-1mm}
\subsection{Multimodal Stepwise Execution} \label{sec:stepwise-exec} \vspace{-1mm}


\textbf{Human-verified attention standard curation.}
The 1{,}600 human-verified table attention standards from \cref{sec:attn-pilot} double as the training set for the stepwise attention reward. We seed with 100 manual annotations of which cells should be attended to per question, bootstrap with strong LLMs, and verify all generated examples by human annotators. This human-in-the-loop generation pipeline is feasible because cell relevance is a labelable target (\cref{tab:labelability}), reaching 93.0\% LLM-vs-human per-cell agreement under our identical pipeline protocol, compared to step-correctness pipeline which inherits the upstream labelability ceiling measured in \cref{sec:attn-pilot}. Full annotation protocols are in \cref{sec:app-attention-curation}.

\textbf{\tabattn{}: lightweight learned attention verifier.}
\tabattn{} monitors whether each reasoning step attends to the cells designated by the DLM plan. For step $t$, the planner emits a binary cell mask $\mathbf{m}_t \in \{0,1\}^C$, while the multimodal reasoner provides token-level attention $\alpha_t^{(\tau)}$ from the next-token-emit position over the serialized table. We aggregate to per-cell scores by summing attention mass over the tokens of each cell, $a_t^{(c)} = \sum_{\tau \in \mathcal{T}(c)} \alpha_t^{(\tau)}$, where $\mathcal{T}(c)$ is the contiguous token span of cell $c$ in the serialization (header tokens are pooled into their column, value tokens into the cell at the corresponding row and column), and we then compute the overlap:
\begin{equation}\label{eq:rattn}
R_{\text{attn}}(a_t; \mathbf{m}_t) =
\frac{\sum_{c=1}^{C} a_t^{(c)}\, \mathbf{m}_t^{(c)}}
{\max\!\left(\sum_{c=1}^{C} a_t^{(c)},\; \epsilon\right)},
\end{equation}
with $\epsilon=10^{-6}$ for zero table-attention mass. A density-preserving falsification (\cref{sec:app-mask-falsification}) confirms that $R_{\text{attn}}$ tracks the correct cell set rather than attention sharpness ($1.5$ to $3.1\times$ over cell-shuffled nulls). Such uninformative steps are excluded from training. If a target tag cannot be parsed ($3.6$\,pp DLM-vs-AR parse-error increase, \cref{sec:app-exec-errors}), $\mathbf{m}_t$ defaults to a uniform mask. Finally, \tabattn{} applies a two-parameter logistic calibration to $R_{\text{attn}}$, fitted by binary cross-entropy against terminal trajectory correctness on the 1{,}600 standards with an 80/20 early-stopping split, preserving an interpretable calibrated cell-overlap reward. We use text-side cell attention because Qwen3-VL-8B assigns negligible attention to image-pad tokens (\cref{sec:app-llm-revis}).

\textbf{Grounded AR execution.}
At each step $t$, the multimodal AR reasoner receives the current table state $s_t$ and plan step $p_t$, selects an observation modality, and issues a tool call to produce $s_{t+1}$. Following prior per-step modality routing~\citep{xing2026tabledart,kwok2026tabqaworld}, it chooses between text serialization (cheap, exact strings) and rendered images (layout and merged-cell structure): lookups and string comparisons use text by default, while layout-sensitive operations use images, with escalation to images after text-branch failure. \tabattn{} scores each transition via \cref{eq:rattn} and returns the score as in-loop feedback. To cap unproductive iteration, a stagnation halt emits the highest-scoring answer when $R_{\text{attn}}$ improves by less than $0.02$ for two consecutive unchanged states. We follow \citet{xing2026tabledart} and report Pass@1. Full pseudocode and parse-error/stagnation accounting are in \cref{sec:app-exec-errors}.

\vspace{-1mm}
\section{Empirical Evaluations} \label{sec:experiments} \vspace{-1mm}

All experiments follow the same evaluation protocol and baseline set adopted by recently published table reasoning works~\citep{xing2026tabledart,zou2026tattoo}.

\textbf{Baselines.}
We compare \ourmodel{} against five categories of prior work: table-as-text LLMs~\citep{touvron2023llama2openfoundation,zhang2024tablellm,yang2025tableggpttablegptt}, table-as-image VLMs~\citep{zheng2024tablellava,zhou2025syntab,yao2024minicpmv,bai2025qwen25vl,bai2025qwen3vltechnicalreport}, multimodal frameworks~\citep{xing2026tabledart}, trajectory-optimization agents~\citep{zou2026tattoo,guo2026tabtrim}, and training-free GPT-3.5 agents~\citep{ye2023dater,zhang2024reactable,liu2024mixsc,yang2025tide,yang2025cit}. We additionally evaluate the contemporary Qwen3-VL-8B~\citep{bai2025qwen3vltechnicalreport} as both stronger baseline and reasoner backbone of \ourmodel{}. Proprietary models appear for context only, as \tabattn{} requires open attention weights, so the direct comparisons are 8B-class systems (TableDART, TATTOO). Full baseline details are in \cref{sec:app-exp-details}.

\textbf{Datasets.}
We evaluate on eight benchmarks covering two task types: TQA on WTQ~\citep{pasupat2015compositional}, MMQA~\citep{wu2025mmqa}, TabMWP~\citep{lu2023tabmwp}, TAT-QA~\citep{zhu2021tatqa}, HiTab~\citep{cheng2022hitab}, FeTaQA~\citep{nan2022fetaqa}, and TFV on TabFact~\citep{chen2020tabfact} and InfoTabs~\citep{gupta2020infotabs}. We use accuracy for all benchmarks except FeTaQA, which is scored by BLEU~\citep{10.3115/1073083.1073135} after the LLM-rewriting post-processing of~\citet{xing2026tabledart}. The headline \emph{Avg} column in \cref{tab:main-results} averages the six accuracy benchmarks, with MMQA and FeTaQA excluded since not all baselines report them. Per-dataset statistics are in \cref{sec:app-exp-details}.

\textbf{Implementation.}
\ourmodel{} uses LLaDA-8B-Instruct~\citep{nie2025llada} as planner and Qwen3-VL-8B~\citep{bai2025qwen3vltechnicalreport} as reasoner. The 1{,}600 \tabattn{} attention standards come from train/dev splits and do not overlap with the test sets in \cref{tab:main-results}. We report all \ourmodel{} accuracies in \cref{tab:main-results,tab:planner_ablation,tab:backbone_ablation} with a fixed decoding seed: greedy denoising for the planner ($T{=}0$) and temperature 0.7 for the reasoner. Pilot AUROC comparisons (\cref{tab:auroc-combined}) include paired Wilcoxon $p$-values over $n{=}319$ matched records (\cref{sec:app-perm-stability}). Each accuracy entry in \cref{tab:main-results,tab:planner_ablation} is computed over $n{=}200$ evaluation questions per dataset. Per-cell volatility analysis is in \cref{tab:appendix-volatility}. Hyperparameters, ablation backbones, and computational environment are in \cref{sec:app-exp-details}.

\begin{table}[t!]
\centering
\small
\setlength{\tabcolsep}{4pt}
\caption{Results on eight table reasoning benchmarks. All numbers are accuracy except FeTaQA (BLEU). \emph{Avg} measures accuracy mean of WTQ, TabMWP, TAT-QA, HiTab, TabFact, InfoTabs. Baselines that do not report all six are marked ``---''. \textbf{Bold}/\underline{Underline}: best/second per column. \ourmodel{} uses LLaDA-8B-Instruct as the planner and Qwen3-VL-8B as the reasoner. 
}
\label{tab:main-results}
\resizebox{0.95\textwidth}{!}{%
\begin{tabular}{lccccccccc}
\toprule
 & \multicolumn{6}{c}{\textbf{TQA}} & \multicolumn{2}{c}{\textbf{TFV}} & \\
\cmidrule(lr){2-7} \cmidrule(lr){8-9}
\textbf{Method} & WTQ & MMQA & TabMWP & TAT-QA & HiTab & FeTaQA & TabFact & InfoTabs & \textbf{Avg} \\
\midrule
\rowcolor{gray!11} \multicolumn{10}{l}{\textit{Table-as-Text Baselines}} \\
Llama-2-7B & 16.39 & --- & 22.82 & 13.73 & 10.72 & 10.93 & 9.20 & 38.92 & 18.63 \\
Llama3-Instruct-8B & 21.24 & --- & 42.01 & 13.08 & 6.97 & 12.66 & 73.89 & 54.00 & 35.20 \\
TableLlama-7B & 24.97 & --- & 10.10 & 19.04 & 46.57 & \textbf{38.38} & 79.37 & 46.57 & 37.77 \\
\midrule
\rowcolor{gray!11} \multicolumn{10}{l}{\textit{Table-as-Image Baselines}} \\
Table-LLaVA-7B & 18.43 & --- & 57.78 & 12.82 & 10.09 & 25.60 & 59.85 & 65.26 & 37.37 \\
SynTab-LLaVA-7B & 39.59 & --- & \underline{88.30} & 51.94 & 35.66 & 35.45 & 70.78 & 69.42 & 59.28 \\
MiniCPM-V-2.6-8B & 47.97 & --- & 83.68 & 51.55 & 56.53 & 32.68 & 78.48 & 73.03 & 65.21 \\
Qwen2.5-VL-7B & 54.37 & --- & 63.69 & 51.94 & 62.69 & 10.99 & 75.81 & 70.13 & 63.11 \\
Qwen3-VL-8B & \underline{81.82} & --- & 79.90 & 59.80 & 65.83 & 13.89 & 82.41 & 75.88 & 74.27 \\
\midrule
\rowcolor{gray!11} \multicolumn{10}{l}{\textit{Joint Multimodal Baselines}} \\
HIPPO-8B & 55.77 & --- & 87.50 & 60.75 & 63.00 & 33.18 & 82.27 & 75.74 & 70.84 \\
Gemini 2.0 Flash & 63.56 & --- & 46.29 & 35.62 & 60.41 & 10.57 & 81.33 & 54.31 & 56.92 \\
\midrule
\rowcolor{gray!11} \multicolumn{10}{l}{\textit{Training-free GPT-3.5 Agents}} \\
DATER & 65.9 & --- & --- & --- & --- & 30.92 & 85.60 & --- & --- \\
ReAcTable & 68.0 & --- & --- & --- & --- & 30.43 & 86.10 & --- & --- \\
Mix-SC & 73.7 & --- & --- & --- & --- & --- & 88.50 & --- & --- \\
TIDE & 75.0 & --- & --- & --- & --- & --- & 89.82 & --- & --- \\
CIT-DP & 76.4 & --- & --- & --- & --- & \underline{36.34} & \textbf{91.30} & --- & --- \\
\midrule
\rowcolor{gray!11} \multicolumn{10}{l}{\textit{Dynamic Modality Routing}} \\
TableDART (on Q2.5VL) & 69.29 & --- & 72.61 & 59.07 & 71.13 & 29.87 & 77.94 & 71.46 & 70.25 \\
TableDART (on Ovis2)  & 70.58 & --- & 84.54 & \underline{62.05} & \underline{74.37} & 36.11 & 81.37 & \underline{76.22} & \underline{74.86} \\
\midrule
\rowcolor{gray!11} \multicolumn{10}{l}{\textit{Trajectory-Optimization Agents}} \\
TATTOO       & 69.80 & \underline{25.10} & --- & --- & --- & --- & 58.79 & --- & --- \\
TabTrim-4B   & 76.80 & --- & --- & --- & --- & --- & 89.40 & --- & --- \\
TabTrim-8B   & 79.40 & --- & --- & --- & --- & --- & \underline{91.20} & --- & --- \\
\midrule
\rowcolor{blue!8} \ourmodel{} (Ours) & \textbf{94.12} & \textbf{71.07} & \textbf{95.13} & \textbf{91.41} & \textbf{82.48} & 32.63 & 90.09 & \textbf{90.45} & \textbf{90.62} \\
\bottomrule
\end{tabular}}
\vspace{-4mm}
\end{table}

{
\clubpenalty=0        
\widowpenalty=0       
\brokenpenalty=0
\interlinepenalty=0
\subsection{Main Results} 
\label{sec:main-results} \vspace{-1mm}
\textbf{Headline Accuracy across Eight Benchmarks.} \ourmodel{} reaches 90.62\% on the six-benchmark accuracy average, which is 15.76\,pp above the strongest dynamic-modality baseline TableDART (on Ovis2, 74.86\%) and 16.35\,pp above the strongest single-modality baseline Qwen3-VL-8B (74.27\%) at comparable 8B-class scale (\cref{tab:main-results}). At the per-benchmark level, \ourmodel{} ranks first on six of the eight columns (WTQ, MMQA, TabMWP, TAT-QA, HiTab, InfoTabs) and remains within 1.21\,pp of the best on TabFact. FeTaQA is the only column where the table-specialized TableLlama-7B retains a meaningful lead under BLEU. Consistent with recent work framing modality as the key bottleneck~\citep{xu2026visual}, backbone ablations (\cref{tab:backbone_ablation}) confirm this is not pure backbone scaling.} 

\textbf{Generalization across Component-specific Baselines.} 
\ourmodel{} also outperforms baselines targeting specific parts of the table-reasoning pipeline. Against dynamic modality routing, \ourmodel{} beats TableDART (Ovis2) on six of seven accuracy benchmarks and stays within 3.48\,BLEU on FeTaQA. Against trajectory-optimization agents with auxiliary verifiers or pruners over a frozen reasoner, \ourmodel{} exceeds TabTrim-8B by 14.72\,pp on WTQ and stays within 1.11\,pp on TabFact, while outperforming TATTOO by 24.32\,pp on WTQ and 31.30\,pp on TabFact. These gains suggest that grounding cell-level attention recovers more headroom than auxiliary trajectory supervision under fixed-attention reasoners.
\vspace{-1mm}
\subsection{In-Depth Analyses on \ourmodel{}} 
\label{sec:in-depth-analyses} \vspace{-1mm}
This section reports five analyses: planner $\times$ reasoner backbone (\cref{tab:backbone_ablation}), reasoning configuration with DLM error taxonomy (\cref{tab:planner_ablation}(a), \cref{tab:vdlm_failures}), reward signal (\cref{tab:planner_ablation}(b)), inference latency (\cref{tab:planner_ablation}(a)), and plan quality (\cref{tab:schema_adherence}). Permutation stability is established earlier in \cref{sec:attn-pilot}.

\begin{wraptable}{r}{0.49\textwidth}
\vspace{-3mm}
\centering
\small
\caption{Backbone ablation: planner (AR Qwen3-VL-8B (Q3-VL) versus DLM (LLaDA-8B) $\times$ reasoner (Qwen2.5-VL-7B vs Q3-VL) combinations on the six-benchmark accuracy average. }
\label{tab:backbone_ablation}
\setlength{\tabcolsep}{4pt}
\renewcommand{\arraystretch}{1.05}
\resizebox{\linewidth}{!}{%
\begin{tabular}{lccc}
\toprule
\textbf{Planner\,/\,Reasoner} & \textbf{Q2.5-VL} & \textbf{Q3-VL} & $\Delta$ \\
\midrule
AR (Q3-VL)       & 78.22 & 87.75          & +9.53 \\
DLM (LLaDA-8B)   & 80.29 & \textbf{90.62} & +10.33 \\
\midrule
$\Delta$         & +2.07 & +2.87          & --- \\
\bottomrule
\end{tabular}}
\vspace{-6mm}
\end{wraptable}
\textbf{Backbone Ablation: Disentangling Planner and Reasoner Gains.} To rule out backbone scaling as the driver of \ourmodel{}'s gain, we ablate planner choice (AR vs DLM) crossed with reasoner backbone (Q2.5-VL vs Q3-VL) on the six-benchmark average (\cref{tab:backbone_ablation}). Holding the reasoner fixed, the DLM planner contributes +2.07 to +2.87\,pp across both backbones, with the strongest configuration (DLM + Q3-VL) reaching 90.62\%. This consistency across reasoners rules out backbone scaling alone as the source of \ourmodel{}'s gain over the AR baseline.

\begin{table}[t!]
\centering
\small
\caption{Component ablations on six benchmarks. \textbf{(a) Backbone ablation:} accuracy (\%) with avg.\ reasoning turns in parentheses, and latency (s/q) shown in {\scriptsize small} as understand\,+\,reason\,=\,total. \textbf{(b) Reward-signal ablation:} accuracy (\%) with the AR planner (Qwen3-VL-8B) and Qwen3-VL-8B executor fixed, varying only the reward signal. \tabattn{} outperforms all content-based verifiers, with the largest gains on numerically and structurally dense tasks. The bottom block swaps the cell mask for the human Oracle and a 20\% noised Oracle (\cref{sec:app-mask-source-ablation}) as upper bounds.}
\label{tab:planner_ablation}
\setlength{\tabcolsep}{4pt}
\resizebox{\textwidth}{!}{%
\begin{tabular}{lccccccc}
\toprule
\textbf{Configuration} & \textbf{WTQ} & \textbf{MMQA} & \textbf{InfoT} & \textbf{TFact} & \textbf{TAT} & \textbf{HiTab} & \textbf{Avg.} \\
\midrule
\rowcolor{gray!11}  \multicolumn{8}{l}{\textit{(a) Backbone ablation}} \\
\midrule
DLM Reasoning &
  \makecell{30.61 \\ \scriptsize ---} &
  \makecell{33.33 \\ \scriptsize ---\,+\,17.7} &
  \makecell{40.28 \\ \scriptsize ---\,+\,22.8} &
  \makecell{41.94 \\ \scriptsize ---\,+\,23.3} &
  \makecell{46.91 \\ \scriptsize ---\,+\,21.1} &
  \makecell{33.33 \\ \scriptsize ---\,+\,21.0} &
  \makecell{37.73 \\ \scriptsize ---\,+\,21.2$^\dagger$} \\
LLM Reasoning &
  \makecell{81.54 \\ \scriptsize ---\,+\,54.7} &
  \makecell{60.10 \\ \scriptsize ---\,+\,36.6} &
  \makecell{79.78 \\ \scriptsize ---\,+\,21.5} &
  \makecell{87.05 \\ \scriptsize ---\,+\,21.6} &
  \makecell{79.80 \\ \scriptsize ---\,+\,49.9} &
  \makecell{73.87 \\ \scriptsize ---\,+\,58.6} &
  \makecell{77.02 \\ \scriptsize ---\,+\,40.5} \\
\cdashline{1-8}[2pt/1pt]
LLM Planned Reasoning &
  \makecell{90.32 (2.13) \\ \scriptsize 3.5\,+\,124.7\,=\,128.2} &
  \makecell{61.34 (1.60) \\ \scriptsize 8.5\,+\,142.6\,=\,151.1} &
  \makecell{83.41 (1.50) \\ \scriptsize 3.4\,+\,70.3\,=\,73.7} &
  \makecell{\textbf{92.39} (2.86) \\ \scriptsize 3.5\,+\,58.6\,=\,62.1} &
  \makecell{85.50 (2.08) \\ \scriptsize 13.4\,+\,127.6\,=\,141.0} &
  \makecell{78.50 (2.38) \\ \scriptsize 3.6\,+\,92.0\,=\,95.6} &
  \makecell{81.91 (2.09) \\ \scriptsize 6.0\,+\,102.6\,=\,108.6} \\
\rowcolor{blue!8} DLM Planned Reasoning &
  \makecell{\textbf{94.12} (1.89) \\ \scriptsize 85.6\,+\,71.8\,=\,157.4} &
  \makecell{\textbf{71.07} (1.46) \\ \scriptsize 95.8\,+\,85.8\,=\,181.6} &
  \makecell{\textbf{90.45} (1.49) \\ \scriptsize 41.4\,+\,24.2\,=\,65.6} &
  \makecell{90.09 (1.99) \\ \scriptsize 86.4\,+\,64.2\,=\,150.6} &
  \makecell{\textbf{91.41} (1.58) \\ \scriptsize 78.6\,+\,47.0\,=\,125.6} &
  \makecell{\textbf{82.48} (1.38) \\ \scriptsize 80.2\,+\,47.5\,=\,127.7} &
  \makecell{\textbf{86.60} (1.63) \\ \scriptsize 78.0\,+\,56.8\,=\,134.8} \\
\midrule[\heavyrulewidth]
\rowcolor{gray!11}  \multicolumn{8}{l}{\textit{(b) Reward-signal ablation}} \\
\midrule
No Reward  (Baseline) & 81.8 & 54.3 & 75.9 & 82.4 & 59.8 & 65.8 & 70.0 \\
\cdashline{1-8}[2pt/1pt]
Cosine Sim~\citep{reimers2019sbert} & 77.8 & 56.6 & 78.2 & 83.8 & 81.5 & 67.0 & 74.2 \\
Numerical F1~\citep{zhu2021tatqa} & 76.5 & 56.3 & 78.4 & 84.3 & 79.4 & 66.5 & 73.6 \\
Cosine + Numerical & 75.9 & 60.1 & 77.3 & 85.3 & 79.0 & 65.5 & 73.9 \\
TABROUGE~\citep{kwok2026retab} & 81.2 & 57.7 & 80.2 & \textbf{92.4} & 85.9 & 65.5 & 77.1 \\
\cdashline{1-8}[2pt/1pt]
\rowcolor{blue!8} \tabattn{} \textbf{(ours)} & \textbf{90.3} & \textbf{61.3} & \textbf{83.4} & \textbf{92.4} & \textbf{85.5} & \textbf{78.5} & \textbf{81.9} \\
\cdashline{1-8}[2pt/1pt]
\tabattn{} + 20\% Noise & {91.2} & {76.8} & {85.0} & {93.3} & {87.9} & {83.3} & {86.3} \\
\tabattn{} + Oracle & 93.0 & 78.6 & 88.3 & 95.0 & 87.9 & 86.7 & 88.3  \\
\bottomrule
\end{tabular}}
\vspace{-4mm}
\end{table}

\begin{table}[t]
\centering
\small
\caption{Failure mode taxonomy for LLaDA-8B end-to-end reasoning.}
\label{tab:vdlm_failures}
\resizebox{\textwidth}{!}{%
\begin{tabular}{llcl}
\toprule
\textbf{Failure mode} & \textbf{Description} & \textbf{\% wrong} & \textbf{Root cause} \\
\midrule
Duplicate action generation & Same tool call emitted 2--4$\times$ per step & 56\% & Fixed canvas force denoising on unused positions \\
Output format and parsing error & Wrong label set or argument syntax & 51\% & Weaker instruction alignment vs.\ AR \\
Canvas overflow & $\ge$5 actions packed into one step & 17\% & Fixed length forces action list compression \\
Empty answer & \texttt{f\_final\_answer} emits empty string & 9\% & Tool-call chain fails with no recovery path \\
Over-enumeration & Full column returned instead of single value & 7\% & No intermediate filter/select step executed \\
\bottomrule
\end{tabular}} \vspace{-5mm}
\end{table}

\textbf{Reasoning Ablation and Error-Mode Taxonomy.}
\Cref{tab:planner_ablation}(a) compares end-to-end DLM/LLM reasoning with their planned variants under a shared AR reasoner. End-to-end DLM reasoning collapses, mainly due to duplicate actions (56\%) and format errors (51\%) (\cref{tab:vdlm_failures}). Using the DLM only for planning improves accuracy by 4.69~pp over AR planning, suggesting current DLMs are stronger at table-structure understanding than tool execution. This likely reflects instruction-tuning gaps in existing DLMs rather than diffusion-specific limits~\citep{yang2026darediffusionlargelanguage,bie2025llada20scalingdiffusionlanguage}. The main remaining failure is \emph{plan bypass}: in 67\% of error episodes, the AR reasoner emits \texttt{f\_final\_answer} at step~1 and ignores the DLM plan (\cref{tab:difftab_failures}). Plan-following episodes have lower schema hallucination (\cref{tab:schema_adherence}) and shorter trajectories (\cref{tab:planner_ablation}(a)), motivating future plan-adherence reward shaping (\cref{sec:app-exec-errors}).

\textbf{Inference Latency and Accuracy--Cost Tradeoff.} The latency rows of \cref{tab:planner_ablation}(a) split wall-clock time into understanding and reasoning phases. Consistent with prior planned-reasoning work~\citep{nguyen2025interpretable,rawat2025preactmultistepplanningreasoning,deng2026planu}, planning improves accuracy at added inference cost. DLM planning is slower than AR planning because multi-step denoising is expensive, especially on long, multi-column tables such as WTQ, MMQA, and TabFact. However, better DLM plans reduce downstream execution time from 102.6s to 56.8s on average, a 44.64\% drop. This makes DLM acceleration, e.g., batched denoising~\citep{song2025seeddiffusionlargescalediffusion,wang2026diffusion} or flow-matching distillation~\citep{sahoo2025the}, a promising future direction.

\textbf{Reward-Signal Ablation: Cell-Grounded vs Content-Based Supervision.} \cref{sec:attn-pilot} motivated cell-grounded attention as a stronger supervision target than content-similarity rewards. We confirm this empirically in \cref{tab:planner_ablation}(b) by holding the AR planner and executor as Qwen3-VL-8B, varying only the reward signal. \tabattn{} reaches 81.9\% average accuracy, 4.8~pp above TABROUGE (77.1\%). Per-dataset, the gap widens on numerically and structurally dense tasks, with 9.1~pp and 13~pp over TABROUGE on WTQ and HiTab, where attention-grounded supervision distinguishes correct cell selection from spurious content overlap. Additional human-verified and 20\% noised oracles in \cref{tab:planner_ablation}(b) indicates that \tabattn{} is within 6.4~pp of the ceiling (full setup in \cref{sec:app-mask-source-ablation}).

\begin{wraptable}{r}{0.48\textwidth}
\vspace{-3mm}
\centering
\small
\caption{Plan quality: AR vs.\ DLM column hallucination rates, measured per plan and per step.}
\label{tab:schema_adherence}
\setlength{\tabcolsep}{4pt}
\resizebox{0.83\linewidth}{!}{%
\begin{tabular}{lcccc}
\toprule
\multirow{2}{*}{\textbf{Dataset}} &
\multicolumn{2}{c}{\textbf{Avg.\ halluc.} $\downarrow$} &
\multicolumn{2}{c}{\textbf{\% steps w/ halluc.} $\downarrow$} \\
\cmidrule(lr){2-3}\cmidrule(lr){4-5}
 & AR & DLM & AR & DLM \\
\midrule
WTQ   & 64.5 & \hi{\textbf{41.5}} & 74.0 & \hi{\textbf{50.0}} \\
MMQA     & 82.0 & \hi{\textbf{40.5}} & 87.4 & \hi{\textbf{50.3}} \\
TabFact  & 86.9 & \hi{\textbf{46.8}} & 94.9 & \hi{\textbf{57.5}} \\
HiTab    & 79.2 & \hi{\textbf{44.1}} & 91.5 & \hi{\textbf{48.1}} \\
TabMWP   & 59.4 & \hi{\textbf{23.8}} & 67.4 & \hi{\textbf{27.6}} \\
TAT-QA   & \textbf{78.2} & \hi{78.7} & 87.1 & \hi{\textbf{83.5}} \\
Avg.     & 75.0 & \hi{\textbf{45.9}} & 83.7 & \hi{\textbf{52.8}} \\
\bottomrule
\end{tabular}}
\vspace{-3mm}
\end{wraptable}
\textbf{Plan Quality Rubric: Column Hallucination Rate.} To test whether bidirectional attention improves table understanding, we compare AR (Qwen3-VL-8B) and DLM (LLaDA-8B-Instruct) planners by column hallucination rate. \Cref{tab:schema_adherence} shows that across six datasets, the DLM reduces average hallucination from 75.0\% to 45.9\%, and hallucinatory steps from 83.7\% to 52.8\%. Gains are largest on structurally rich datasets (MMQA, TabFact, HiTab), while shallow-schema TAT-QA shows similar rates. This aligns with \cref{sec:attn-pilot} as bidirectional attention improves adherence on non-trivial schemas, supporting DLM as planner. 
\vspace{-1mm}
\section{Conclusion} \label{sec:discussion-conclusion} \vspace{-1mm}
We presented \ourmodel{}, a table-reasoning framework that enforces cell grounding through a masked diffusion planner and \tabattn{} stepwise verifier. Across eight benchmarks, \ourmodel{} outperforms strongest 8B-class baseline by 15.76~pp on the six-benchmark accuracy average, with diagnostics attributing gains to attention-grounded understanding (\cref{sec:in-depth-analyses}). Our future work includes extending to complex tabular settings, and DLM acceleration in table reasoning (\cref{sec:app-outlook}). 

\textbf{Limitations.} Two limitations still persist. 67\% of \ourmodel{} error trajectories bypass the plan by answering at step~1 (\cref{tab:difftab_failures}), mainly on single-cell lookups, suggesting plan-adherence reward shaping. On FeTaQA, \ourmodel{} trails TableLlama-7B in BLEU largely due to BLEU penalizing correct paraphrases. Our LLM-judge analysis finds 78.6\% factual correctness, indicating the BLEU gap overstates the capability gap (\cref{sec:fetaqa-judge}).

\bibliographystyle{plainnat}
\bibliography{references}
\appendix
\newpage
\section*{Impact Statement}
\ourmodel{} contributes a general framework for grounded reasoning over structured tabular data, with downstream applications in financial analysis, scientific data interpretation, and enterprise question answering. By aligning model attention with human-verified cell relevance, the framework supports more interpretable and auditable reasoning trajectories than end-to-end black-box approaches. The curated attention standards and code released with this work are derived from publicly available academic benchmarks and contain no personally identifiable information, and we do not anticipate direct negative societal impacts beyond those broadly attributable to large language models. By explicitly modeling cell-level attention at both the understanding and reasoning stages, \ourmodel{} reduces trajectory drift and hallucination compounding that currently limit the reliability of multi-step table agents. The proposed \tabattn{} reward provides a verifiable, query-grounded supervision signal that makes intermediate reasoning steps inspectable, contributing to more transparent and auditable AI reasoning over structured data. \ourmodel{} also reduces downstream execution trajectory length by 22\% and downstream reasoning time by 44.64\%, enabling more focused and efficient agent execution. We anticipate that attention-grounded process rewards will generalize beyond table reasoning to other structured data modalities where cell- or token-level grounding can be formally specified.

\newpage
\section{Empirical Setup: Datasets, Baselines, and Implementation Details}
\label{sec:app-exp-details}

\paragraph{Dataset statistics.}
WTQ contains 2,108 test questions over heterogeneous web tables.
MMQA adds cross-modal multi-hop questions over tables paired with images and text.
TabMWP covers grade-school math word problems with tabular context.
TAT-QA pairs tabular and textual evidence for financial reasoning.
HiTab targets hierarchical tables requiring multi-level lookups.
FeTaQA requires fluent free-form answers graded by BLEU.
TabFact and InfoTabs are binary entailment benchmarks over Wikipedia and infobox tables, respectively.

\paragraph{Baseline details.}
We organize the baselines into six categories that span the dominant table reasoning paradigms in the recent literature. For each category we first define the paradigm, then describe how each individual baseline within the category operates.

\textit{Table-as-text LLMs} serialize the input table into a textual format (typically Markdown or HTML) and feed it to a text-only LLM that produces the answer in a single forward pass. This is the dominant pre-multimodal deployment pattern for tabular QA. Llama-2-7B~\citep{touvron2023llama2openfoundation} is a generalist instruction-tuned chat model with no table-specific adaptation, included as a baseline for the floor of generalist LLM capability at the 7B scale. Llama3-Instruct-8B~\citep{grattafiori2024llama3herdmodels} is the analogous third-generation generalist chat backbone. TableLlama-7B~\citep{zhang2024tablellamaopenlargegeneralist} is an LLM further post-trained on large-scale table instruction data, and TableGPT-R1~\citep{yang2025tableggpttablegptt} extends this line by adding reinforcement learning on tabular reasoning trajectories. TableLLM~\citep{zhang2024tablellm} additionally targets real-office spreadsheet manipulation tasks. Together this group quantifies how much table-specific post-training matters when the table is presented purely as text.

\textit{Table-as-image VLMs} render the table as an image and feed it to a vision language model that perceives the cell grid visually rather than through text serialization. The motivation is to bypass the brittleness of long Markdown serialization on wide tables and to retain visual cues such as merged cells and hierarchical headers~\citep{zheng2024tablellava}. Table-LLaVA-7B~\citep{zheng2024tablellava} and SynTab-LLaVA-7B~\citep{zhou2025syntab} are LLaVA-style VLMs further trained on real and synthetic table images. MiniCPM-V-2.6-8B~\citep{yao2024minicpmv}, Qwen2.5-VL-7B~\citep{bai2025qwen25vl}, and Qwen3-VL-8B~\citep{bai2025qwen3vltechnicalreport} are general-purpose VLMs evaluated on table-as-image inputs without table-specific tuning. We include Qwen3-VL-8B explicitly because it postdates the VLMs used by competing baselines and serves as the strongest contemporary open-source VLM at our parameter scale.

\textit{Joint multimodal frameworks} process both the textual serialization and the rendered image of the table jointly inside a single model, on the assumption that the two modalities expose complementary information (text preserves exact strings while image preserves layout). HIPPO-8B~\citep{wang2026hippo} is a recent open-source multimodal LLM that aligns the two modalities for table understanding through hybrid-modal preference optimization, conditioning the answer on a fused text-and-image representation for every query. Gemini 2.0 Flash~\citep{google2025gemini3flash} is a proprietary multimodal LLM included for research context, but it cannot serve as a primary comparison point for \ourmodel{} because \tabattn{} requires access to internal attention weights that proprietary endpoints do not expose.

\textit{Dynamic modality-routing frameworks} train a lightweight router that, per query, selects either the text branch, the image branch, or a fusion of both, rather than committing to a single modality for every input. TableDART~\citep{xing2026tabledart} is the representative method in this category. It keeps two frozen single-modality experts (a text expert built on TableGPT2-7B~\citep{yang2025tableggpttablegptt} and an image expert built on either Qwen2.5-VL-7B~\citep{bai2025qwen25vl} or Ovis2-8B~\citep{lu2025ovis25technicalreport}) and trains a 2-layer MLP gating network to pick the cheapest inference path that still answers correctly. We report both published TableDART variants. TableDART is the closest competing baseline to \ourmodel{} in spirit, since both methods explicitly refuse to commit to a static modality choice.

\textit{Trained trajectory-optimization agents} train an auxiliary model that supervises or prunes the multi-step reasoning trajectory produced by a base reasoner, with the goal of suppressing wrong intermediate steps before they propagate into the final answer. TATTOO~\citep{zou2026tattoo} is a tool-grounded process reward model (PRM) that scores each reasoning step of a DeepSeek-R1-Distill-Qwen-14B policy reasoner and drives best-of-$N$ selection at inference, using external tool execution to ground the per-step verification signal. TabTrim~\citep{guo2026tabtrim} is a parallel-search table pruner that trims irrelevant rows and columns from intermediate states under gold-trajectory supervision. These two methods isolate the contribution of trajectory-level supervision, and they form the most direct comparison point for \tabattn{}, which also operates as an in-the-loop step-level verifier but grounds its scores in attention rather than in an extra reward model or pruning network.

\textit{Training-free GPT-3.5 agents} prompt a closed-source GPT-3.5 backbone with handcrafted reasoning scaffolds and external tools, without any fine-tuning of the underlying LLM. We include this category for historical completeness, since the published numbers from these methods on WTQ, TabFact, and TabMWP set the prior open-domain accuracy bar. DATER~\citep{ye2023dater} decomposes a complex question into a sequence of sub-table operations. ReAcTable~\citep{zhang2024reactable} adapts the ReAct paradigm to tabular QA by interleaving SQL and Python tool calls with chain-of-thought reasoning. Mix-SC~\citep{liu2024mixsc} aggregates self-consistency across mixed text and symbolic reasoning paths. TIDE~\citep{yang2025tide} structures the table into triples to make question decomposition and step verification more tractable, and CIT-DP~\citep{yang2025cit} debiases the LLM through front-door causal intervention. Because these baselines are GPT-3.5 era, they are not the most direct comparison point for \ourmodel{}, but they remain useful for situating absolute accuracy on the long-running benchmarks.

Baselines are reproduced from published checkpoints under the same evaluation protocol where possible. Numbers taken directly from the original papers are marked with $\dagger$.

\paragraph{Baseline selection rationale.}
Our baseline pool is intentionally aligned with the most recently published peer-reviewed table reasoning works, in particular TableDART~\citep{xing2026tabledart} and TATTOO~\citep{zou2026tattoo}, so that the numbers in \cref{tab:main-results} are directly comparable to the latest published table reasoning results without re-implementation drift. We acknowledge that several baselines in this pool, such as Llama-2-7B, Table-LLaVA-7B, and Gemini 2.0 Flash, use backbones that are now one to two generations behind the frontier. To address the concern that this could understate current open-source capability and bias the comparison in our favor, we additionally evaluate Qwen3-VL-8B~\citep{bai2025qwen3vltechnicalreport} both as a standalone table-as-image VLM baseline and as the reasoner backbone in \ourmodel{}. Qwen3-VL-8B postdates the Qwen2.5-VL-7B and Ovis2-8B variants used by TableDART and the text-only Qwen3-8B backbone used by TATTOO, so the gains we report are measured against the strongest publicly available open-source VLM at our parameter scale at the time of submission, not only against the legacy backbones inherited from prior baseline pools.

\paragraph{Implementation details.}
The DLM understanding model (LLaDA-8B-Instruct) conditions on the question, column names, data types, and up to 15 sampled cell values. It runs 128 denoising steps with block length 32 and greedy decoding.
The AR reasoner (Qwen3-VL-8B) runs at most 6 tool steps with a maximum of 16,384 output tokens per step and temperature 0.7. The AR understanding model ablation uses the same Qwen3-VL-8B backbone to isolate the DLM contribution.
\ourmodel{} commits \textbf{one trajectory per query} ($N{=}1$): the DLM emits a plan via greedy denoising, the reasoner executes it step by step, and \tabattn{} acts as an in-the-loop step-level verifier whose score and rationale are passed to the reasoner before the next step. A stagnation halt emits the highest-scoring step's answer when $R_{\text{attn}}$ fails to improve by more than 0.02 for two consecutive steps and the table state is unchanged. This design redirects misaligned steps through in-loop feedback rather than paying the $N\times$ cost of an outer best-of-$N$ re-ranker over independently sampled trajectories.
\tabattn{}'s learned attention verifier (\cref{eq:rattn}) is fit by binary cross-entropy on terminal trajectory correctness using the curated 1{,}600 attention standards (\cref{sec:app-attention-curation}), split 80/20 with the inner 20\% reserved solely for early stopping. This is the single training procedure used in all reported results. Separately, to test cross-dataset generalisation, we run a transfer evaluation in which \tabattn{} is trained on WTQ examples only and measured on held-out performance on the remaining seven benchmarks (see \cref{sec:app-components} and the transfer results below).

\paragraph{\tabattn{} cross-dataset transfer.}
\tabattn{} trained on WTQ attention standards transfers to all seven held-out benchmarks without degradation ($\Delta\rho \leq 0$ in all cases), with AUROC of 0.95 to 1.00 on six of seven benchmarks.
TabMWP is the outlier (AUROC~$=0.779$), consistent with its near-ceiling accuracy (95.1\%) limiting binary discrimination.
MMQA shows the sharpest transfer benefit ($\rho{=}0.888$ vs.\ $0.834$ when retrained on MMQA), suggesting WTQ's trajectory diversity provides a stronger training signal than the target domain's own data.

\paragraph{Volatility of reported accuracies.}
We adopt $n{=}200$ evaluation questions per dataset under a single deterministic decoding pass (\cref{sec:experiments}), matching the per-dataset budget used by recently accepted table reasoning works such as TaTToo~\citep{zou2026tattoo} and TableDART~\citep{xing2026tabledart}. To surface volatility for every reported accuracy under this protocol, \cref{tab:appendix-volatility} attaches a per-cell binomial standard deviation $\sigma{=}\sqrt{p(1-p)/n}$ to every accuracy in \cref{tab:main-results} and \cref{tab:planner_ablation}, with the Avg.\ column reporting the corresponding macro-mean alongside its propagated standard error under independent binomials. Per-cell SDs sit between $1.3$ and $3.5$\,pp at $n{=}200$, narrowing on the macro-mean. We report this volatility analysis as a transparency aid since prior table reasoning works do not disclose run-to-run variance under matched evaluation budgets, and we use the binomial form rather than seed-level $\sigma$ because the dominant source of variance under deterministic decoding is finite-sample evaluation noise rather than stochastic generation.
\begin{table}[t!]
\centering
\caption{\textbf{Volatility study via binomial-approximation standard deviations.} For every accuracy value reported in \cref{tab:main-results} (Part a) and \cref{tab:planner_ablation} (Parts b1, b2), we report the binomial standard deviation $\sigma=\sqrt{p(1-p)/n}$ at $n{=}200$ evaluations per dataset, the unified evaluation budget under our protocol. Cells are written \emph{accuracy\,$\pm$\,$\sigma$} in percentage points (pp). The Avg.\ column reports the macro-average mean and the propagated standard error of that mean under independent binomials, $\mathrm{SE}(\bar{p})=\frac{1}{k}\sqrt{\sum_i p_i(1-p_i)/n}$, with $k$ the number of datasets averaged. FeTaQA is omitted because BLEU is not a binomial accuracy. For baselines re-evaluated under our pipeline $n{=}200$ matches the actual sample size, for numbers cited from prior work $n{=}200$ is used as a consistency-level estimate (original protocols may differ). Avg.\ matches each source table's convention (Part a excludes MMQA, Parts b1, b2 exclude TABMWP).}
\label{tab:appendix-volatility}
\setlength{\tabcolsep}{3pt}
\resizebox{\textwidth}{!}{%
\begin{tabular}{lcccccccc}
\toprule
\textbf{Method} & \textbf{WTQ} & \textbf{MMQA} & \textbf{TABMWP} & \textbf{TAT-QA} & \textbf{HiTab} & \textbf{TabFact} & \textbf{InfoT} & \textbf{Avg.} \\
\midrule
\multicolumn{9}{l}{\textit{Part (a): Main benchmark results} (mirrors \cref{tab:main-results})} \\
\midrule
\rowcolor{gray!11} \multicolumn{9}{l}{\textit{Table-as-Text Baselines}} \\
Llama-2-7B & 16.39\,$\pm$\,2.62 & --- & 22.82\,$\pm$\,2.97 & 13.73\,$\pm$\,2.43 & 10.72\,$\pm$\,2.19 & 9.20\,$\pm$\,2.04 & 38.92\,$\pm$\,3.45 & 18.63\,$\pm$\,1.09 \\
Llama3-Instruct-8B & 21.24\,$\pm$\,2.89 & --- & 42.01\,$\pm$\,3.49 & 13.08\,$\pm$\,2.38 & 6.97\,$\pm$\,1.80 & 73.89\,$\pm$\,3.11 & 54.00\,$\pm$\,3.52 & 35.20\,$\pm$\,1.20 \\
TableLlama-7B & 24.97\,$\pm$\,3.06 & --- & 10.10\,$\pm$\,2.13 & 19.04\,$\pm$\,2.78 & 46.57\,$\pm$\,3.53 & 79.37\,$\pm$\,2.86 & 46.57\,$\pm$\,3.53 & 37.77\,$\pm$\,1.23 \\
\midrule
\rowcolor{gray!11} \multicolumn{9}{l}{\textit{Table-as-Image Baselines}} \\
Table-LLaVA-7B & 18.43\,$\pm$\,2.74 & --- & 57.78\,$\pm$\,3.49 & 12.82\,$\pm$\,2.36 & 10.09\,$\pm$\,2.13 & 59.85\,$\pm$\,3.47 & 65.26\,$\pm$\,3.37 & 37.37\,$\pm$\,1.22 \\
SynTab-LLaVA-7B & 39.59\,$\pm$\,3.46 & --- & 88.30\,$\pm$\,2.27 & 51.94\,$\pm$\,3.53 & 35.66\,$\pm$\,3.39 & 70.78\,$\pm$\,3.22 & 69.42\,$\pm$\,3.26 & 59.28\,$\pm$\,1.31 \\
MiniCPM-V-2.6-8B & 47.97\,$\pm$\,3.53 & --- & 83.68\,$\pm$\,2.61 & 51.55\,$\pm$\,3.53 & 56.53\,$\pm$\,3.51 & 78.48\,$\pm$\,2.91 & 73.03\,$\pm$\,3.14 & 65.21\,$\pm$\,1.32 \\
Qwen2.5-VL-7B & 54.37\,$\pm$\,3.52 & --- & 63.69\,$\pm$\,3.40 & 51.94\,$\pm$\,3.53 & 62.69\,$\pm$\,3.42 & 75.81\,$\pm$\,3.03 & 70.13\,$\pm$\,3.24 & 63.10\,$\pm$\,1.37 \\
Qwen3-VL-8B & 81.82\,$\pm$\,2.73 & --- & 79.90\,$\pm$\,2.83 & 59.80\,$\pm$\,3.47 & 65.83\,$\pm$\,3.35 & 82.41\,$\pm$\,2.69 & 75.88\,$\pm$\,3.03 & 74.27\,$\pm$\,1.24 \\
\midrule
\rowcolor{gray!11} \multicolumn{9}{l}{\textit{Table-as-Multimodality Baselines}} \\
HIPPO-8B & 55.77\,$\pm$\,3.51 & --- & 87.50\,$\pm$\,2.34 & 60.75\,$\pm$\,3.45 & 63.00\,$\pm$\,3.41 & 82.27\,$\pm$\,2.70 & 75.74\,$\pm$\,3.03 & 70.84\,$\pm$\,1.27 \\
Gemini 2.0 Flash & 63.56\,$\pm$\,3.40 & --- & 46.29\,$\pm$\,3.53 & 35.62\,$\pm$\,3.39 & 60.41\,$\pm$\,3.46 & 81.33\,$\pm$\,2.76 & 54.31\,$\pm$\,3.52 & 56.92\,$\pm$\,1.37 \\
\midrule
\rowcolor{gray!11} \multicolumn{9}{l}{\textit{Training-free TableQA Agents (Proprietary GPT3.5, cited)}} \\
DATER & 65.90\,$\pm$\,3.35 & --- & --- & --- & --- & 85.60\,$\pm$\,2.48 & --- & 75.75\,$\pm$\,2.09 \\
ReAcTable & 68.00\,$\pm$\,3.30 & --- & --- & --- & --- & 86.10\,$\pm$\,2.45 & --- & 77.05\,$\pm$\,2.05 \\
Mix-SC & 73.70\,$\pm$\,3.11 & --- & --- & --- & --- & 88.50\,$\pm$\,2.26 & --- & 81.10\,$\pm$\,1.92 \\
TIDE & 75.00\,$\pm$\,3.06 & --- & --- & --- & --- & 89.82\,$\pm$\,2.14 & --- & 82.41\,$\pm$\,1.87 \\
CIT-DP & 76.40\,$\pm$\,3.00 & --- & --- & --- & --- & 91.30\,$\pm$\,1.99 & --- & 83.85\,$\pm$\,1.80 \\
\midrule
\rowcolor{gray!11} \multicolumn{9}{l}{\textit{Routing on TableGPT2-7B / Ovis2-8B (cited)}} \\
TG2-7B (text) & 61.42\,$\pm$\,3.44 & --- & 83.87\,$\pm$\,2.60 & 50.39\,$\pm$\,3.54 & 70.27\,$\pm$\,3.23 & 77.80\,$\pm$\,2.94 & 71.07\,$\pm$\,3.21 & 69.14\,$\pm$\,1.30 \\
Ovis2-8B (image) & 58.76\,$\pm$\,3.48 & --- & 87.00\,$\pm$\,2.38 & 47.67\,$\pm$\,3.53 & 68.59\,$\pm$\,3.28 & 80.80\,$\pm$\,2.79 & 74.11\,$\pm$\,3.10 & 69.49\,$\pm$\,1.27 \\
TableDART (Q2.5VL) & 69.29\,$\pm$\,3.26 & --- & 72.61\,$\pm$\,3.15 & 59.07\,$\pm$\,3.48 & 71.13\,$\pm$\,3.20 & 77.94\,$\pm$\,2.93 & 71.46\,$\pm$\,3.19 & 70.25\,$\pm$\,1.31 \\
TableDART (Ovis2-8B) & 70.58\,$\pm$\,3.22 & --- & 84.54\,$\pm$\,2.56 & 62.05\,$\pm$\,3.43 & 74.37\,$\pm$\,3.09 & 81.37\,$\pm$\,2.75 & 76.22\,$\pm$\,3.01 & 74.86\,$\pm$\,1.23 \\
\midrule
\rowcolor{gray!11} \multicolumn{9}{l}{\textit{Trajectory-optimizing agents (cited)}} \\
DS + Qwen2.5-Math-PRM-72B & 69.20\,$\pm$\,3.26 & 24.40\,$\pm$\,3.04 & --- & --- & --- & 57.90\,$\pm$\,3.49 & --- & 63.55\,$\pm$\,2.39 \\
DS + TATTOO (Qwen3-8B) & 69.80\,$\pm$\,3.25 & 25.10\,$\pm$\,3.07 & --- & --- & --- & 58.79\,$\pm$\,3.48 & --- & 64.30\,$\pm$\,2.38 \\
TabTrim-4B & 76.80\,$\pm$\,2.98 & --- & --- & --- & --- & 89.40\,$\pm$\,2.18 & --- & 83.10\,$\pm$\,1.85 \\
TabTrim-8B & 79.40\,$\pm$\,2.86 & --- & --- & --- & --- & 91.20\,$\pm$\,2.00 & --- & 85.30\,$\pm$\,1.75 \\
\midrule
\rowcolor{gray!11} \multicolumn{9}{l}{\textit{\ourmodel{}}} \\
LLM + Q2.5VL & 69.89\,$\pm$\,3.24 & 47.22\,$\pm$\,3.53 & 87.94\,$\pm$\,2.30 & 74.87\,$\pm$\,3.07 & 75.38\,$\pm$\,3.05 & 83.33\,$\pm$\,2.64 & 77.89\,$\pm$\,2.93 & 78.22\,$\pm$\,1.18 \\
DLM + Q2.5VL & 73.76\,$\pm$\,3.11 & 52.53\,$\pm$\,3.53 & 88.50\,$\pm$\,2.26 & 78.37\,$\pm$\,2.91 & 79.00\,$\pm$\,2.88 & 82.41\,$\pm$\,2.69 & 79.71\,$\pm$\,2.84 & 80.29\,$\pm$\,1.14 \\
LLM + Q3VL & 90.32\,$\pm$\,2.09 & 61.34\,$\pm$\,3.44 & 96.35\,$\pm$\,1.33 & 85.50\,$\pm$\,2.49 & 78.50\,$\pm$\,2.90 & 92.39\,$\pm$\,1.87 & 83.41\,$\pm$\,2.63 & 87.75\,$\pm$\,0.93 \\
DLM + Q3VL & 94.12\,$\pm$\,1.66 & 71.07\,$\pm$\,3.21 & 95.13\,$\pm$\,1.52 & 91.41\,$\pm$\,1.98 & 82.48\,$\pm$\,2.69 & 90.09\,$\pm$\,2.11 & 90.45\,$\pm$\,2.08 & 90.61\,$\pm$\,0.83 \\
\midrule[\heavyrulewidth]
\multicolumn{9}{l}{\textit{Part (b1): Backbone ablation} (mirrors \cref{tab:planner_ablation}(a))} \\
\midrule
DLM Reasoning & 30.61\,$\pm$\,3.26 & 33.33\,$\pm$\,3.33 & --- & 46.91\,$\pm$\,3.53 & 33.33\,$\pm$\,3.33 & 41.94\,$\pm$\,3.49 & 40.28\,$\pm$\,3.47 & 37.73\,$\pm$\,1.39 \\
LLM Reasoning & 81.54\,$\pm$\,2.74 & 60.10\,$\pm$\,3.46 & --- & 79.80\,$\pm$\,2.84 & 73.87\,$\pm$\,3.11 & 87.05\,$\pm$\,2.37 & 79.78\,$\pm$\,2.84 & 77.02\,$\pm$\,1.19 \\
LLM Planned Reasoning & 90.32\,$\pm$\,2.09 & 61.34\,$\pm$\,3.44 & --- & 85.50\,$\pm$\,2.49 & 78.50\,$\pm$\,2.90 & 92.39\,$\pm$\,1.87 & 83.41\,$\pm$\,2.63 & 81.91\,$\pm$\,1.07 \\
DLM Planned Reasoning & 94.12\,$\pm$\,1.66 & 71.07\,$\pm$\,3.21 & --- & 91.41\,$\pm$\,1.98 & 82.48\,$\pm$\,2.69 & 90.09\,$\pm$\,2.11 & 90.45\,$\pm$\,2.08 & 86.60\,$\pm$\,0.96 \\
\midrule
\multicolumn{9}{l}{\textit{Part (b2): Reward-signal ablation} (mirrors \cref{tab:planner_ablation}(b))} \\
\midrule
Cosine Sim & 77.80\,$\pm$\,2.94 & 56.60\,$\pm$\,3.50 & --- & 81.50\,$\pm$\,2.75 & 67.00\,$\pm$\,3.32 & 83.80\,$\pm$\,2.61 & 78.20\,$\pm$\,2.92 & 74.15\,$\pm$\,1.23 \\
Numerical F1 & 76.50\,$\pm$\,3.00 & 56.30\,$\pm$\,3.51 & --- & 79.40\,$\pm$\,2.86 & 66.50\,$\pm$\,3.34 & 84.30\,$\pm$\,2.57 & 78.40\,$\pm$\,2.91 & 73.57\,$\pm$\,1.24 \\
Cosine + Numerical & 75.90\,$\pm$\,3.02 & 60.10\,$\pm$\,3.46 & --- & 79.00\,$\pm$\,2.88 & 65.50\,$\pm$\,3.36 & 85.30\,$\pm$\,2.50 & 77.30\,$\pm$\,2.96 & 73.85\,$\pm$\,1.24 \\
TABROUGE & 81.20\,$\pm$\,2.76 & 57.70\,$\pm$\,3.49 & --- & 85.90\,$\pm$\,2.46 & 65.50\,$\pm$\,3.36 & 92.40\,$\pm$\,1.87 & 80.20\,$\pm$\,2.82 & 77.15\,$\pm$\,1.16 \\
\tabattn{} (ours) & 90.30\,$\pm$\,2.09 & 61.30\,$\pm$\,3.44 & --- & 85.50\,$\pm$\,2.49 & 78.50\,$\pm$\,2.90 & 92.40\,$\pm$\,1.87 & 83.40\,$\pm$\,2.63 & 81.90\,$\pm$\,1.07 \\
\bottomrule
\end{tabular}}
\end{table}

\subsection{FeTaQA: LLM-Judge Supplementary Analysis}
\label{sec:fetaqa-judge}

FeTaQA is evaluated with BLEU throughout this paper, consistent with standard practice~\citep{nan2022fetaqa}.
The BLEU score of 32.63 reported in \cref{tab:main-results} is computed after a post-processing step that rewrites fragmentary fact-extraction outputs into complete descriptive sentences using a secondary LLM call with no gold reference (natural mode), following the dataset-specific answer generation approach used in prior work~\citep{xing2026tabledart}.
The diagnostic analysis below is conducted on raw agent outputs \emph{before} this refinement step, on a 185-example subset, to illustrate why unrefined outputs score poorly on BLEU despite being factually correct.
BLEU measures n-gram overlap against a single human reference phrasing and is insensitive to semantically equivalent answers expressed differently, so fact-extraction outputs systematically under-score even when the factual content is correct.

To quantify this gap, we ran a post-hoc LLM judge (GPT-5-nano) on these 185 pre-refinement FeTaQA trajectories, asking whether each predicted answer is \emph{factually correct} regardless of phrasing.
The judge was given the question, a table excerpt, the reference answer, and the predicted answer, and asked to respond with exactly one word (\textsc{correct} or \textsc{incorrect}).
Results are summarised below.

\begin{center}
\small
\begin{tabular}{lr}
\toprule
\textbf{Metric} & \textbf{Value} \\
\midrule
Trajectories judged            & 185 \\
Judge-correct                  & 125 \hspace{1pt}(67.6\%) \\
Judge-incorrect                & 34 \hspace{1pt}(18.4\%) \\
Unresolved (judge abstained)   & 26 \hspace{1pt}(14.1\%) \\
Mean BLEU (all)                & 0.089 $\pm$ 0.165 \\
Judge-correct but BLEU $<$ 0.05 & 81 \hspace{1pt}(43.8\%) \\
\bottomrule
\end{tabular}
\end{center}

The key figure is the last row: \textbf{81 predictions (43.8\%) are judged semantically correct yet score near-zero BLEU}, confirming that the low BLEU reflects a phrasing mismatch rather than a factual failure.
Among the 159 trajectories where the judge reached a verdict, 78.6\% were judged correct.

We emphasise that this analysis does \emph{not} replace the BLEU score reported in \cref{tab:main-results}. It supplements it.
The LLM-judge methodology does not follow the standard FeTaQA evaluation protocol, and direct comparison with baselines that report BLEU should be made only via the BLEU column.
The judge results are offered as diagnostic evidence that the capability gap is smaller than the metric gap suggests.

\newpage
\section{Method Details}
\label{sec:app-method-details}

This section consolidates implementation and design details for the three components of \ourmodel{}: the architectural choice that splits planning from execution (\cref{sec:app-why-split}), the diffusion-based understanding model that emits plans (\cref{sec:app-planner}), the curated attention standards that train the verifier (\cref{sec:app-attention-curation}), and the design justification for the \tabattn{} verifier itself (\cref{sec:app-components}).

\subsection{Why Planning and Execution Should Be Split}
\label{sec:app-why-split}

Table reasoning decomposes into a global \emph{planning phase} and a reactive \emph{execution phase} that place different computational demands on the model. Four reasons justify assigning these phases to different architectures.

\paragraph{Reason 1: execution is inherently tool-conditioned and sequential.}
During execution, the model must read the latest tool output (often a transformed dataframe printed as text) and generate the next tool call conditioned on that observed state. This creates an observe $\rightarrow$ respond $\rightarrow$ observe loop, not a one-shot structured prediction problem. LLaDA-style masked diffusion models instead generate by iteratively denoising a fully specified masked canvas, which is well suited to holistic sequence construction but not naturally suited to reactive conditioning on newly arriving external observations~\citep{nie2025llada}. Autoregressive models natively condition each next token on all previously revealed context, including newly returned tool outputs.

\paragraph{Reason 2: DLMs require fixed output length, which is unknown at execution time.}
Tool outputs vary substantially across steps: a selection operation may return a short list of columns, a filter may return many rows, and an aggregation may collapse the state to a scalar. The reasoner must consume these variable-length observations and decide when to terminate with a final answer. This is a poor match for masked diffusion inference, where the generation canvas length is typically fixed before decoding begins. Recent work makes this limitation explicit: DAEDAL frames statically specified generation length as a core inference bottleneck for diffusion language models, while FlexMDM starts from the same fixed-canvas assumption and requires substantial retraining to support flexible-length generation~\citep{li2026beyond,kim2026anyorder}.
We verify this directly: LLaDA-8B running end-to-end table reasoning with a 256-token canvas achieves 40.28\% exact match on InfoTabs and 41.94\% on TabFact, compared with 79.78\% and 87.05\% for AR Reasoning (\cref{tab:planner_ablation}), a gap of 39 to 45\,pp.
We note that 256 tokens is the largest canvas that reliably fits within LLaDA-8B's fixed-canvas constraint for multi-step table inputs. Larger canvases that exceed the model's pre-trained length distribution cause degenerate outputs.

\paragraph{Reason 3: the DLM attention advantage is a comprehension advantage, not a generation advantage.}
The correct tool call at step $t$ depends on the live tool output $o_{t-1}$ returned by step $t{-}1$. That output is not available before denoising begins, because it is produced by executing the prior action in the environment. A DLM cannot condition step $t$'s tool call on $o_{t-1}$ without first observing $o_{t-1}$, creating a fundamental circular dependency. Autoregressive decoding resolves this naturally: each token is conditioned on all previously revealed context, including freshly returned tool outputs interleaved between generation steps. Masked diffusion, which denoises a fixed canvas in parallel, has no mechanism for this interleaved conditioning without restarting denoising from scratch after every tool call.\footnote{Schema-scaffolding approaches such as S$^3$~\citep{xiong2026unveiling} cannot side-step this: the output schema at execution time is data-dependent (whether the next action is \texttt{f\_aggregate} or \texttt{f\_filter\_row} is determined by the prior tool output that has not yet been observed).}

\paragraph{Reason 4: AR reasoners are better aligned to multimodal grounding, instruction following, and error recovery.}
The reasoner must perceive rendered table images, and no current diffusion language model supports image inputs. LLaDA, the DLM used as the understanding model, is text-only. Only an autoregressive VLM can handle the multimodal perception that action-conditioned modality routing requires~\citep{kwok2026tabqaworld}. Beyond this structural gap, instruction-tuned AR VLMs are better suited to following tool specifications exactly, remaining format-faithful, and recovering from imperfect intermediate states. The practical cost of using a DLM as reasoner is visible in \cref{tab:main-results}: the DLM-only baseline (LLaDA-8B) falls more than 30\,pp below AR Reasoning (Qwen3-VL-8B) on average. Current diffusion VLMs more broadly still lag instruction-tuned AR VLMs in stepwise interactive generation~\citep{arriola2025ar2d}.

\subsection{Diffusion-based Understanding Model: Implementation Details}
\label{sec:app-planner}

\paragraph{Fixed-length canvas and block denoising.}
The plan canvas has a fixed length of $L{=}256$ tokens~\citep{nie2025llada}, chosen to be long enough that no plan is truncated. Positions not consumed by the plan remain as padding tokens.
Denoising proceeds over semi-autoregressive blocks of 32 tokens: within each block all masked tokens are refined in parallel, and blocks are processed left-to-right, following the long-context diffusion practice of~\citet{liu2025longllada,he2026ultrallada}.
We run 128 denoising steps by default (256 for higher-quality plans at $2\times$ cost) with classifier-free guidance scale 1.5 and greedy decoding ($T{=}0$).

\paragraph{Tool vocabulary.}
The planning prompt presents six operations: \texttt{filter} (select rows satisfying a column condition), \texttt{sort} (order rows by a column), \texttt{aggregate} (sum/count/average/min/max over a column), \texttt{lookup} (retrieve a specific cell), \texttt{compare} (compare values across rows or columns), and \texttt{select} (extract a column subset).
Each generated step must name one of these operations and end with a \texttt{[target: col]} tag (or \texttt{[target: col, row N]} for cell-level steps) identifying the table cells it operates on.
Example: \textit{``Aggregate: count rows where \textsc{Country} = Algeria.\ [target: Country]''}.

\paragraph{Cell mask extraction.}
After generation, each \texttt{[target:]} tag is parsed to identify the designated column(s) and, where specified, row index.
All cells matching the designation are set to 1 in $\mathbf{m}_t \in \{0,1\}^C$, and all other cells are set to 0.
If a step produces no parseable tag ($+3.6$\,pp DLM-vs-AR parse-error increase, \cref{sec:app-exec-errors}), $\mathbf{m}_t$ defaults to a uniform mask and $R_{\text{attn}}$ is treated as uninformative for that step.

\subsection{Attention Standard Curation: Details and Model Comparison}
\label{sec:app-attention-curation}

\paragraph{Human seed annotations.}
Human annotators produced 100 seed attention maps from scratch, spanning all eight benchmarks.
For each (question, table) pair, annotators identified the minimal set of cells (by column and row) that a correct reasoning trajectory must attend to in order to answer the question.
Disagreements were resolved by majority vote, and inter-annotator agreement (IoU) exceeded 0.90 on a held-out verification subset.
The seed set covers a range of question types: single-cell lookup, multi-cell aggregation, cross-column comparison, and multi-hop.

\paragraph{LLM model selection.}
To identify the strongest LLM for scaling up from the 100 seeds, we sampled 20 WTQ examples from the seed set as a calibration split.
Five GPT models were prompted with each (question, table, question type) triple and asked to predict the same cell attention set.
Predictions were evaluated against human annotations using Precision, Recall, F1, and IoU:

\begin{table}[h]
\centering
\small
\caption{Attention alignment of five GPT models on the 20-example WTQ calibration split drawn from the 100 human seed annotations. GPT-5 is selected for mass-scale generation.}
\label{tab:attn-model-comparison}
\begin{tabular}{lcccc}
\toprule
Model & Precision & Recall & F1 & IoU \\
\midrule
GPT-4.1  & 0.834 & 0.674 & 0.698 & 0.594 \\
GPT-5.1  & 0.824 & 0.676 & 0.698 & 0.583 \\
GPT-5.2  & 0.868 & 0.776 & 0.776 & 0.686 \\
GPT-5.4  & 0.891 & 0.836 & 0.812 & 0.757 \\
\textbf{GPT-5}   & \textbf{0.937} & \textbf{0.849} & \textbf{0.866} & \textbf{0.818} \\
\bottomrule
\end{tabular}
\end{table}

GPT-5 leads on all four metrics and is the only model to exceed F1\,=\,0.85 and IoU\,=\,0.80, matching human annotator judgment most closely.

\paragraph{Mass-scale generation and quality control.}
Using the 100 human seeds as few-shot exemplars, GPT-5 was prompted to generate attention standards for all eight benchmarks: WTQ, MMQA, TabMWP, TAT-QA, HiTab, FeTaQA, TabFact, and InfoTabs.
Approximately 200 candidates were generated per dataset, stratified by question type where possible.
All generated examples were reviewed by human annotators post-generation. Examples failing the minimum IoU threshold of 0.70 were discarded and regenerated, yielding a total of 1{,}600 human-verified examples.
These 1{,}600 standards form the training set for \tabattn{} (\cref{eq:rattn}), fit on an 80/20 split with the inner 20\% held out only for early stopping.

\paragraph{Train/evaluation split disjointness.}
To rule out leakage between the 1{,}600 attention-standard training data and the QA evaluation in~\cref{tab:main-results}, all curated examples per benchmark are sampled exclusively from the \emph{training} or \emph{development} splits of each underlying QA dataset, and none overlap with the held-out test splits used to report downstream reasoning accuracy. We verified this disjointness automatically by hashing every (question, table) pair: zero collisions between the 1{,}600 attention-standard pool and the union of the eight test splits.

\subsection{\tabattn{} Attention Verifier: Design Justification}
\label{sec:app-components}

\tabattn{} uses $R_{\text{attn}}$ (\cref{eq:rattn}) as its sole process signal, fit by binary cross-entropy against terminal trajectory correctness on the curated 1{,}600 attention standards (\cref{sec:app-attention-curation}).
The ablation in \cref{tab:planner_ablation}(b) confirms that attention grounding alone outperforms lexical and semantic content signals, motivating this focused design.

\paragraph{On the validity of the understanding model's cell mask.}
One might worry about circularity: if the DLM understanding model occasionally hallucinates cell designations, then $R_{\text{attn}}$ penalizes a reasoner that correctly deviated from a flawed plan.
We bound this concern in two ways.
First, the understanding model operates over compact metadata rather than raw cell contents, so its hallucinations are typically schema-level (wrong column name) rather than cell-level, and are detectable by cross-referencing with the schema adherence analysis (\cref{tab:schema_adherence}).
Second, \tabattn{} is trained on terminal correctness labels. If the understanding model's cell designations were systematically misleading, the optimizer would suppress the attention signal during training, and the large positive contribution of $R_{\text{attn}}$ in \cref{tab:planner_ablation}(b) confirms the masks are informative on net.

\paragraph{Cross-dataset transfer.}
\tabattn{} trained on WTQ attention standards only transfers to all seven held-out benchmarks without degradation (\cref{sec:app-exp-details}), validating that the attention-grounding signal generalises across table types and question categories without per-dataset retuning.

\newpage
\section{Theoretical Foundations}
\label{sec:theory}

This section consolidates the formal results that motivate \ourmodel{}'s state-based reward design. We first show that unguided multi-turn reasoning accumulates variance with trajectory length (\cref{sub:variance-growth}), then characterise the lexical TABROUGE baseline and prove its Pareto-parsimony together with its monotonicity profile (\cref{sec:app-tabrouge,sub:monotone-rouge}), and finally prove that an explicit step reward acts as a compressed distillation signal that mitigates stochastic drift (\cref{sub:reward-drift}).

\subsection{Variance growth under unguided multi-turn reasoning}
\label{sub:variance-growth}

\begin{proposition}[Variance growth with interaction length]
\label{prop:accuracy-degradation}
Let $Y_S := \ell(\hat{a}(\tau_S), A)$ denote a nonnegative loss after $S$ interaction steps, where $Y_S = 0$ iff the agent produces the correct answer.
Assume each additional step introduces stochastic variability without systematic corrective bias:
$\mathbb{E}[Y_S \mid Y_{S-1}] = Y_{S-1}$.
Then $\mathrm{Var}(Y_S) \ge \mathrm{Var}(Y_{S-1})$.
\end{proposition}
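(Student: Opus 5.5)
The plan is to use the law of total variance, conditioning $Y_S$ on $Y_{S-1}$. The martingale-type assumption $\mathbb{E}[Y_S \mid Y_{S-1}] = Y_{S-1}$ identifies the ``explained'' part of the variance exactly with $\mathrm{Var}(Y_{S-1})$. The ``unexplained'' part is a nonnegative conditional variance, and it is this term that carries the accumulated stochastic variability.

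First, I would check integrability so that every quantity is well defined. I will take as an implicit hypothesis that $\mathbb{E}[Y_S^2] < \infty$ (and hence $\mathbb{E}[Y_{S-1}^2]<\infty$). This is needed anyway for the statement to make sense. If instead $\mathrm{Var}(Y_S) = \infty$, the inequality holds trivially.

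Next, I would write the decomposition
\begin{equation*}
\mathrm{Var}(Y_S) = \mathbb{E}\bigl[\mathrm{Var}(Y_S \mid Y_{S-1})\bigr] + \mathrm{Var}\bigl(\mathbb{E}[Y_S \mid Y_{S-1}]\bigr).
\end{equation*}
Substituting the hypothesis turns the second term into $\mathrm{Var}(Y_{S-1})$. The first term is the expectation of a conditional variance, so it is $\ge 0$. This gives $\mathrm{Var}(Y_S) \ge \mathrm{Var}(Y_{S-1})$, with equality if and only if $Y_S = Y_{S-1}$ almost surely, i.e.\ the step adds no variability.

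As a cross-check, I would also give the elementary route. Set $D := Y_S - Y_{S-1}$. The hypothesis gives $\mathbb{E}[D \mid Y_{S-1}] = 0$, so by the tower property $\mathrm{Cov}(D, Y_{S-1}) = \mathbb{E}[Y_{S-1}\,\mathbb{E}[D\mid Y_{S-1}]] = 0$. Then $\mathrm{Var}(Y_S) = \mathrm{Var}(Y_{S-1}) + \mathrm{Var}(D)$. Iterating over $S$ gives the trajectory-length growth $\mathrm{Var}(Y_S) = \mathrm{Var}(Y_0) + \sum_{s\le S}\mathrm{Var}(D_s)$. This is the form the surrounding narrative wants.

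The main obstacle is not the algebra but the regularity of the hypothesis. The tower step $\mathbb{E}[Y_{S-1} D] = \mathbb{E}[Y_{S-1}\mathbb{E}[D\mid Y_{S-1}]]$ needs $Y_{S-1}D$ to be integrable, which follows from square-integrability by Cauchy--Schwarz. I would state that assumption explicitly. I would also note that nonnegativity of $Y_S$ is not used; it only interprets the loss.
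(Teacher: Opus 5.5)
Your proposal is correct and follows the paper's proof exactly: the law of total variance, with the hypothesis $\mathbb{E}[Y_S\mid Y_{S-1}]=Y_{S-1}$ turning $\mathrm{Var}(\mathbb{E}[Y_S\mid Y_{S-1}])$ into $\mathrm{Var}(Y_{S-1})$, and the remaining term $\mathbb{E}[\mathrm{Var}(Y_S\mid Y_{S-1})]$ being nonnegative. The paper does not spell out your additional points, and they are all sound: the square-integrability caveat, the equality characterization, and the orthogonal-increment cross-check (whose telescoped form needs the hypothesis at every step $s\le S$).
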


\begin{proof}
By the law of total variance,
$\mathrm{Var}(Y_S)
= \mathrm{Var}(\mathbb{E}[Y_S \mid Y_{S-1}]) + \mathbb{E}[\mathrm{Var}(Y_S \mid Y_{S-1})]
= \mathrm{Var}(Y_{S-1}) + \mathbb{E}[\mathrm{Var}(Y_S \mid Y_{S-1})] \ge \mathrm{Var}(Y_{S-1}).$
\end{proof}

\paragraph{Implication.}
Without explicit step-level reward feedback, reasoning trajectories in tool-based TableQA accumulate variance monotonically with trajectory length.
State-based rewards (Section~\ref{sec:methodology}) break this by introducing a corrective signal that conditions each step on measurable progress, capping variance growth.

\subsection{TABROUGE: Lexical Baseline and Pareto-Parsimony}
\label{sec:app-tabrouge}
\label{sub:tabrouge-proof}

\paragraph{TABROUGE definition.}
TABROUGE~\citep{kwok2026retab} measures how much of the query $q$ is covered by the serialized intermediate table state $\mathrm{Enc}(s_{l+1})$:
\begin{equation}
\label{eq:tabrouge-app}
R_{\mathrm{TABROUGE}}(s_{l+1}; q) = \frac{\mathrm{LCS}(q,\,\mathrm{Enc}(s_{l+1}))}{|\mathrm{Enc}(s_{l+1})|},
\end{equation}
where $\mathrm{LCS}$ is the longest common subsequence length. The numerator rewards query-relevant lexical coverage, and the denominator penalizes diffuse states retaining irrelevant content. TABROUGE showed a Spearman correlation of 0.521 with downstream answer correctness on WTQ~\citep{kwok2026retab}.

Despite this correlation, TABROUGE has two structural limitations that motivate the \tabattn{} design.
First, it compares against $q$, but many correct intermediate states produce values (sums, argmax results, comparisons) that never appear in $q$, so lexical coverage fails silently on the very steps that matter most.
Second, as a surface-form metric it cannot distinguish an intermediate state where the right answer is \emph{present} from one where it was \emph{computed from the right cells}: a table that coincidentally contains the target value in an irrelevant column scores identically to one where the value was derived correctly.

\begin{proposition}[Pareto-parsimony of TABROUGE maximizers]
\label{prop:rouge_optimality}
Given query $q$ and a candidate set $\mathcal{S}$, assume $\mathcal{S}$ is closed under (a) deletion of any substring of $\mathrm{Enc}(s)$ disjoint from $\mathrm{LCS}(q, \mathrm{Enc}(s))$, and (b) appending of admissible content $\Gamma$ drawn from the source table.
Any maximizer
$s^* \in \arg\max_{s \in \mathcal{S}}\, R_{\mathrm{TABROUGE}}(s; q)$
is \emph{Pareto-parsimonious} within $\mathcal{S}$ in two senses.
First, removing any substring $\Delta \subset \mathrm{Enc}(s^*)$ disjoint from $\mathrm{LCS}(q, \mathrm{Enc}(s^*))$ strictly increases the score, so no removable redundancy exists.
Second, appending content $\Gamma$ with marginal density exceeding the current score also strictly increases the score, so $s^*$ does not omit available query-relevant evidence from $\mathcal{S}$.
This characterises the maximizer's internal structure, and it does not imply that any maximizer is a globally correct intermediate state, since $\mathcal{S}$ may not contain a state that is both parsimonious and factually correct.
\end{proposition}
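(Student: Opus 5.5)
The plan is to treat the score as a ratio $R = N/D$, where $N = \mathrm{LCS}(q,\mathrm{Enc}(s))$ and $D = |\mathrm{Enc}(s)|$. Both parsimony claims then reduce to elementary monotonicity facts about this ratio under perturbations of $N$ and $D$. We make the standard assumption $D>0$, and we may assume $N \ge 1$; if $N=0$, every state has score $0$ and the first claim is vacuous, so we would restate it with ``strictly'' replaced by ``weakly''.

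\textbf{Deletion.} Fix a maximizer $s^*$ and a substring $\Delta$ of $\mathrm{Enc}(s^*)$ disjoint from a fixed optimal LCS alignment, with $|\Delta| = \delta > 0$ and $\delta < D$. By closure (a), the resulting state $s'$ lies in $\mathcal{S}$. The key observation is that the LCS alignment survives the deletion, because none of its matched positions lie in $\Delta$. Hence $N' \ge N$, while $D' = D-\delta$. Since deletion cannot create a longer common subsequence, we also have $N' \le N$, so $N' = N$. Therefore
\[
R(s') = \frac{N}{D-\delta} > \frac{N}{D} = R(s^*).
\]
This contradicts maximality, so no such $\Delta$ can exist in $\mathrm{Enc}(s^*)$. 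Read correctly, the first claim says that any such deletion would strictly increase the score; since $s^*$ is a maximizer, this forces the absence of removable redundancy. We would phrase the argument as a contradiction showing that every maximal state's encoding consists entirely of characters used by every optimal alignment (modulo the choice of alignment).

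\textbf{Appending.} Appending $\Gamma$ gives $D' = D + |\Gamma|$ and $N' = N + g$, where $g$ is the marginal LCS gain. We define the ``marginal density'' as $g/|\Gamma|$. The mediant inequality then gives
\[
\frac{N+g}{D+|\Gamma|} > \frac{N}{D} \iff \frac{g}{|\Gamma|} > \frac{N}{D}.
\]
By closure (b), the appended state lies in $\mathcal{S}$, so at a maximizer no admissible $\Gamma$ can have marginal density exceeding $R(s^*)$. This is exactly the statement that $s^*$ omits no available evidence of higher density. The final caveat, that maximality does not imply correctness, needs only a remark or a toy counterexample: take $\mathcal{S}$ with no factually correct member, or a correct state padded with a query-irrelevant computed value.

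\textbf{Main obstacle.} The difficult step is making precise the claim that deletion preserves the LCS, and deciding which alignment ``disjoint from $\mathrm{LCS}$'' refers to, since LCS alignments need not be unique. We would fix one optimal alignment, prove $N' = N$ for that alignment, and note that the argument only needs $N' \ge N$ together with the trivial upper bound. Defining $g$ requires similar care, because LCS is not additive under concatenation. We would simply take $g$ as the actual increment $N' - N$ and define the density in terms of it, so that the mediant argument is exact.
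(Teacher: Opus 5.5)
Your proposal is correct and takes essentially the same route as the paper. Both write the score as $\mathrm{LCS}/|\mathrm{Enc}|$, note that deleting an alignment-disjoint substring shrinks the denominator while leaving the numerator fixed, use the cross-multiplication (mediant) step for appending, and derive a contradiction with maximality from the closure assumptions. Your extra care (fixing one optimal alignment to justify $N'=N$, requiring $N\ge 1$ and $|\Delta|<D$ for strictness, and defining the marginal gain as the actual LCS increment) makes explicit assumptions that the paper's proof uses silently; the $N=0$ case is in fact one where the strict claim fails rather than being vacuous, so your weak restatement is the right patch.
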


\begin{proof}
Throughout the proof, the constructed states $\tilde s$ (substring removal) and $s^+$ (content appending) lie in $\mathcal{S}$ by the closure assumptions of \cref{prop:rouge_optimality}, so the strict inequalities below contradict the maximality of $s^*$ within $\mathcal{S}$.

\noindent\textbf{No removable redundancy.}
Suppose $\exists\,\Delta \subset \mathrm{Enc}(s^*)$ disjoint from $\mathrm{LCS}(q, \mathrm{Enc}(s^*))$.
Removing $\Delta$ yields $\tilde{s}$ with
$R_{\mathrm{TABROUGE}}(\tilde{s}; q)
= \frac{\mathrm{LCS}(q,\mathrm{Enc}(s^*))}{|\mathrm{Enc}(s^*)| - |\Delta|}
> R_{\mathrm{TABROUGE}}(s^*; q),$
contradicting maximality of $s^*$.

\noindent\textbf{No available query-relevant evidence is omitted.}
Suppose content $\Gamma$ can be appended to form $s^+$ with marginal density $\delta/|\Gamma| > R_{\mathrm{TABROUGE}}(s^*; q)$, i.e.\ the fraction of query-relevant tokens in $\Gamma$ exceeds the current score.
Then
$R_{\mathrm{TABROUGE}}(s^+; q)
= \frac{\mathrm{LCS}(q,\mathrm{Enc}(s^*)) + \delta}{|\mathrm{Enc}(s^*)| + |\Gamma|}
> R_{\mathrm{TABROUGE}}(s^*; q),$
since cross-multiplying the strict inequality $\delta/|\Gamma| > \mathrm{LCS}(q,\mathrm{Enc}(s^*))/|\mathrm{Enc}(s^*)|$ gives $|\mathrm{Enc}(s^*)|\cdot\delta > \mathrm{LCS}(q,\mathrm{Enc}(s^*))\cdot|\Gamma|$, which is exactly the condition for the above strict inequality. This contradicts maximality of $s^*$.
\end{proof}

\subsection{Monotone TABROUGE score under correct tool execution}
\label{sub:monotone-rouge}

Under correct and successful tool execution, each step either removes query-irrelevant content (filter/select) or appends query-derived content (aggregate/compute).
Let $L_k = |\mathrm{Enc}(s_k)|$, $c_k = \mathrm{LCS}(q, \mathrm{Enc}(s_k))$, and $\rho_k = c_k / L_k$.

\begin{itemize}[leftmargin=*,itemsep=1pt]
\item \textbf{Pruning step}: $\Delta L_k < 0$, $\Delta c_k = 0$ (query-relevant tokens preserved).
      For $c_k > 0$, $\rho_{k+1} = c_k / (L_k + \Delta L_k) > \rho_k$. The boundary case $c_k = 0$ gives $\rho_{k+1} = \rho_k = 0$, so the inequality is non-strict.
\item \textbf{Derivation step}: $\Delta L_k, \Delta c_k > 0$.
      The reward improves when $\Delta c_k / \Delta L_k > \rho_k$, i.e.\ the newly appended tokens are denser in query-relevant content than the current state.
\end{itemize}

This shows that TABROUGE is monotonically non-decreasing along the restricted subset of trajectories whose every step is either a valid filter/projection or a derivation step whose appended tokens lexically overlap with $q$. This subset is strictly smaller than the class of trajectories that correctly answer the query, because correct derivations frequently produce values such as sums, argmax outputs, or comparisons that never appear in $q$. At those steps $\Delta c_k = 0$ while $\Delta L_k > 0$, so TABROUGE strictly decreases at exactly the steps where computed evidence is generated. This residual lexical blind spot motivates the attention-grounded \tabattn{} reward in \cref{sec:methodology}.

\subsection{Reward model significance under partial observability}
\label{sub:reward-ib-prop}

\begin{proposition}[Reward model significance under partial observability]
\label{prop:reward_ib}
Let the table-reasoning task be a POMDP where $s_t$ is the hidden table state and $o_t = \phi(s_t)$ is a partial-snapshot observation.
The projection induces an information bottleneck: $I(s_t; o_t) \ll H(s_t)$.
Assume the scalar step reward $r_t = \mathcal{R}(a_t, s_{t+1}; q)$ is informative about $s_{t+1}$ beyond the partial observation, that is, $I(s_{t+1}; r_t \mid o_{t+1}) > 0$.
Then $r_t$ functions as a \emph{compressed table distillation signal}
satisfying $H(s_{t+1} \mid o_{t+1}, r_t) < H(s_{t+1} \mid o_{t+1})$,
compensating for the limited observability of $o_t$ and stabilizing long-horizon trajectory search.
\end{proposition}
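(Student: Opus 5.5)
The plan is to reduce the claimed entropy inequality to the standard identity linking conditional mutual information and conditional entropy. The other parts of \cref{prop:reward_ib} need no proof. The bottleneck $I(s_t;o_t)\ll H(s_t)$ is a modelling premise rather than something derived. The ``compressed distillation'' and ``stabilizing'' phrases are interpretive glosses on the inequality. The substantive content is therefore the strict inequality $H(s_{t+1}\mid o_{t+1},r_t) < H(s_{t+1}\mid o_{t+1})$.

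I will work with discrete random variables. Table states, observations and (quantized) scalar rewards all take finitely many values, so all entropies are finite and nonnegative. First I apply the chain rule for conditional mutual information:
\[
I(s_{t+1}; r_t \mid o_{t+1}) \;=\; H(s_{t+1}\mid o_{t+1}) - H(s_{t+1}\mid o_{t+1}, r_t).
\]
This holds because both conditional entropies are finite. The informativeness assumption $I(s_{t+1}; r_t \mid o_{t+1})>0$ then gives the strict inequality directly.

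For completeness, I would also note the weak version $H(s_{t+1}\mid o_{t+1},r_t)\le H(s_{t+1}\mid o_{t+1})$, since conditioning does not increase entropy. This is where the assumption is needed: without it, $r_t$ could be conditionally independent of $s_{t+1}$ given $o_{t+1}$, and equality would hold.

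To tie this to the bottleneck, I will add a short remark. By the data-processing inequality applied to $o_{t+1}=\phi(s_{t+1})$, the residual uncertainty $H(s_{t+1}\mid o_{t+1}) = H(s_{t+1}) - I(s_{t+1};o_{t+1})$ is large under the bottleneck premise. This leaves room for $r_t$ to recover information about $s_{t+1}$, bounded above by $\min\{H(r_t\mid o_{t+1}), H(s_{t+1}\mid o_{t+1})\}$. That bound quantifies how much a scalar reward can ``distill'' at most.

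The main obstacle is making ``stabilizing long-horizon trajectory search'' precise, since nothing in the statement defines stability. I would avoid proving a formal claim here. Instead, I would connect it informally to \cref{prop:accuracy-degradation}: a policy conditioned on $(o_{t+1},r_t)$ acts on strictly less uncertain beliefs, which is what breaks the martingale no-correction assumption there. I would present this as an implication rather than a theorem. A secondary technicality is that $r_t$ is real-valued. If it is continuous, I would either quantize it or use the measure-theoretic definition of conditional mutual information; the identity above still holds as long as $H(s_{t+1}\mid o_{t+1})<\infty$, which is guaranteed by finiteness of the table-state space.
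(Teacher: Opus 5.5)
Your reduction is correct, and it is exactly the argument the paper relies on. The paper states this proposition without any written proof and only invokes it informally in the stochastic-drift discussion; its conclusion is simply the hypothesis $I(s_{t+1};r_t\mid o_{t+1})>0$ rewritten through $I(s_{t+1};r_t\mid o_{t+1}) = H(s_{t+1}\mid o_{t+1}) - H(s_{t+1}\mid o_{t+1},r_t)$, with the bottleneck premise and the ``stabilizing'' clause carrying no formal weight, as you observe. One small mislabel in your side remark: $H(s_{t+1}\mid o_{t+1}) = H(s_{t+1}) - I(s_{t+1};o_{t+1})$ is the definition of mutual information, not a consequence of the data-processing inequality (for deterministic $\phi$ it becomes $H(s_{t+1}) - H(o_{t+1})$).
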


\subsection{How explicit rewards mitigate stochastic drift}
\label{sub:reward-drift}

Without a state-based reward, tool-based table transitions follow a stochastic transition kernel $s_{t+1} \sim P(\cdot \mid s_t, a_t, \theta_{a_t})$.
Let $s_t^\star$ denote the correct intermediate state at step $t$, and let $d(\cdot, \cdot)$ be a divergence on encoded states (for example, normalized edit distance over $\mathrm{Enc}(\cdot)$).
The per-step divergence $\epsilon_t := d(s_{t+1}, s_{t+1}^\star)$ quantifies the imagination gap between the model's predicted outcome and the realized transformation.
Unobserved, these per-step divergences accumulate along the trajectory, and the cumulative divergence $\sum_{l=0}^{S} \epsilon_l$ upper-bounds $d(s_S, s_S^\star)$, which can grow unboundedly with $S$.

The step reward $r_t = R_{\mathrm{TABROUGE}}(a_t, s_{t+1}; q)$ functions as a compressed distillation signal that maps the hidden global table state to a one-dimensional progress measure.
By Proposition~\ref{prop:reward_ib}, conditioning on $r_t$ reduces the remaining state entropy, allowing the agent to prune divergent trajectories before error compounds across multiple turns.

\newpage
\section{Attention Pilot Extensions}
\label{sec:app-attn-extensions}

This section reports the four attention-side empirical analyses that extend the pilot study in \cref{sec:attn-pilot}: the bidirectional plan-attention signature that motivates the DLM planner (\cref{sec:attn-motivation}), the row-permutation stability stress test (\cref{sec:app-perm-stability}), the autoregressive-VLM image-attention analysis that justifies a text-side attention verifier (\cref{sec:app-llm-revis}), and extended attention-quality figures (\cref{sec:app-attn-figures}).

\subsection{Plan-region Attention: Upper-Triangle Fraction}
\label{sec:attn-motivation}
The DLM planning advantage has a directly measurable mechanistic signature: on a shared table-reasoning input, the diffusion model exhibits dense plan-to-plan attention with non-zero mass above the diagonal, whereas the autoregressive baseline has exactly zero forward attention by design (\cref{tab:upper_tri}, with full attention maps in \cref{sec:app-attn-figures}).
This bidirectional inductive bias is the mechanism underlying the schema-hallucination reduction reported in \cref{sec:in-depth-analyses}, together with the trajectory compression and execution stabilization reported in \cref{sec:app-diagnostics}.

\begin{table}[htbp]
\centering
\caption{Plan-region attention upper-triangle fraction. DLM's bidirectionality produces substantial attention above the diagonal, while AR is exactly zero by design.}
\label{tab:upper_tri}
\small
\begin{tabular}{lcc}
\toprule
\textbf{Model (architecture)} & \textbf{Upper-triangle fraction} & \textbf{Expected} \\
\midrule
LLaDA-8B-Instruct (DLM, \texttt{is\_causal=False}) & \hi{\textbf{39.9\%}} & $\approx$50\% \\
Qwen3-VL-8B (AR, causal mask) & \textbf{0.0\%} & 0\% \\
\bottomrule
\end{tabular}
\end{table}

Large-scale attention analyses across 1{,}600 questions and eight benchmarks further corroborate this finding, with their full per-dataset breakdown presented in \cref{sec:attn-pilot}.

\subsection{Permutation Stability of Table Attention}
\label{sec:app-perm-stability}

This subsection details the row-permutation stress test referenced in \cref{sec:attn-pilot}, which quantifies whether bidirectional pre-training, in addition to lifting the attention-AUROC \emph{level} (\cref{tab:auroc-combined}), also improves \emph{stability} of attention under row reordering, an empirically desirable property given the order-sensitive contextualization imposed by causal AR models on tabular data.

\paragraph{Setup.}
For each of the four models studied in \cref{sec:attn-pilot}, namely Qwen3-VL-8B (causal AR), ModernBERT-base (bidirectional MLM), DiffuLLaMA-7B (causally-pretrained LLaMA adapted via diffusion post-training and bidirectional mask-flipped inference), and LLaDA-8B (natively diffusion-pretrained DLM), we draw the first $50$ (table, question) records from each of the eight benchmarks. For every record we generate $K{=}5$ row-permuted views: permutation $0$ is the identity, and the other four are seeded from the record id so that all four models see the same row orderings. The human-annotated cell relevance mask is permuted in lockstep with the data rows so that AUROC remains comparable across orderings. Each view is run through the same attention-extraction $\to$ cell-scoring $\to$ AUROC pipeline as \cref{tab:auroc-combined}, and per record we compute $\sigma_{\text{AUROC}}$ across the five orderings (records with fewer than three valid orderings, e.g.\ due to truncation at the $2{,}048$-token cap, are dropped). The protocol therefore fixes \emph{everything} except row order, isolating attention stability from the AUROC level.

\paragraph{Results.}
\Cref{tab:perm-stability} summarises per-model statistics, and \cref{fig:perm-stability} shows the per-record $\sigma_{\text{AUROC}}$ distribution. Two observations consistent with the main-text claim emerge. First, both models with native bidirectional pre-training (LLaDA and ModernBERT) yield substantially lower per-record $\sigma_{\text{AUROC}}$ than the causal AR baseline: $-40.2\%$ and $-62.6\%$ in median, respectively, with $p$-values from a paired Wilcoxon test below $10^{-22}$ on $n{=}319$ records that have valid permutation samples for all four models. Second, DiffuLLaMA, which inherits a causally-pretrained LLaMA backbone, applies diffusion post-training, and uses bidirectional mask-flipped inference, shows median $\sigma$ statistically indistinguishable from AR ($p{=}0.48$). This dissociation isolates the mechanism: \emph{permutation-stable table attention requires the model's pre-training itself to be bidirectional}, and post-hoc diffusion adaptation alone is not sufficient.

\begin{table}[h]
\centering
\small
\caption{Per-record stability of attention AUROC under five row permutations of each (table, question) record. $\sigma_{\text{AUROC}}$ is computed across the five orderings of the same record, and each entry is the across-record statistic. Paired Wilcoxon tests pair each record across models on the $n{=}319$ records that have valid permutation samples in all four models. Lower $\sigma$ is better.}
\label{tab:perm-stability}
\setlength{\tabcolsep}{6pt}
\resizebox{0.95\textwidth}{!}{
\begin{tabular}{lcccc|c}
\toprule
\textbf{Model (attention)} & \textbf{median $\sigma$} & \textbf{mean $\sigma$} & \textbf{$p_{90}\,\sigma$} & \textbf{mean range} & \textbf{paired Wilcoxon $p$ vs.\ AR} \\
\midrule
Qwen3-VL-8B (Causal AR)        & $0.123$ & $0.125$ & $0.238$ & $0.311$ & --- \\
DiffuLLaMA-7B (DLM, LLaMA)     & $0.123$ & $0.132$ & $0.248$ & $0.327$ & $0.48$ \\
LLaDA-8B (Native DLM)          & $0.074$ & $0.086$ & $0.166$ & $0.212$ & $1.8{\times}10^{-23}$ \\
ModernBERT-base (Bidir.\ MLM)  & $0.046$ & $0.055$ & $0.105$ & $0.139$ & $1.1{\times}10^{-41}$ \\
\bottomrule
\end{tabular}}
\end{table}

\begin{figure}[h]
\centering
\includegraphics[width=0.75\linewidth]{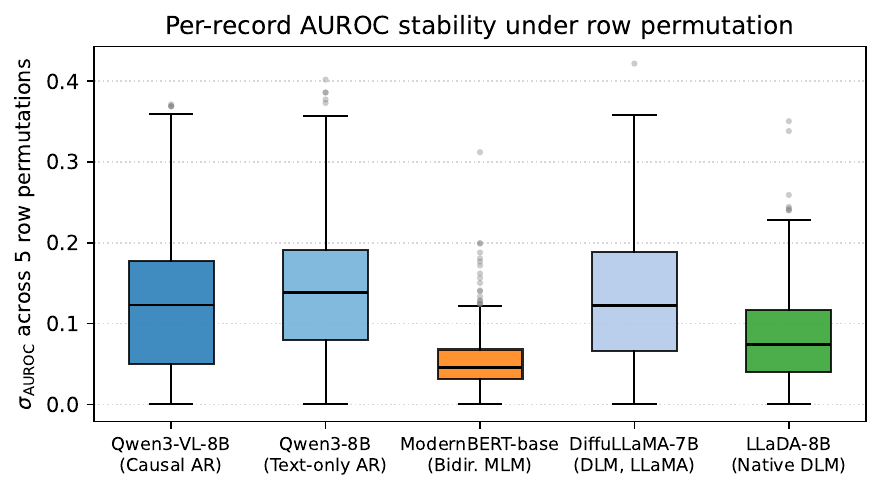}
\caption{Distribution of per-record $\sigma_{\text{AUROC}}$ across five row permutations of the same (table, question) record, by model. Both natively bidirectional pre-trained models (LLaDA, ModernBERT) cluster at substantially lower $\sigma$ than the causal AR baseline, while DiffuLLaMA, which is bidirectional only at inference, is indistinguishable from AR.}
\label{fig:perm-stability}
\end{figure}

\paragraph{Per-(model, dataset) breakdown.}
\Cref{tab:perm-stability-by-ds} reports mean $\sigma_{\text{AUROC}}$ for each model on each benchmark. The advantage of LLaDA and ModernBERT over AR, and the AR-vs-DiffuLLaMA null result, are consistent across all eight benchmarks rather than driven by any single dataset.

\begin{table}[h]
\centering
\small
\caption{Mean per-record $\sigma_{\text{AUROC}}$ by (model, dataset). Lower is better. The AR-vs-LLaDA / ModernBERT advantage and the AR-vs-DiffuLLaMA null result hold across all eight benchmarks.}
\label{tab:perm-stability-by-ds}
\setlength{\tabcolsep}{5pt}
\resizebox{0.95\textwidth}{!}{
\begin{tabular}{l|cccccccc}
\toprule
\textbf{Model} & \textbf{wikitq} & \textbf{tatqa} & \textbf{hitab} & \textbf{tabmwp} & \textbf{mmqa} & \textbf{infotabs} & \textbf{tabfact} & \textbf{fetaqa} \\
\midrule
Qwen3-VL-8B (AR)        & $0.106$ & $0.106$ & $0.167$ & $0.221$ & $0.134$ & $0.173$ & $0.091$ & $0.128$ \\
DiffuLLaMA-7B (DLM)     & $0.088$ & $0.073$ & $0.154$ & $0.198$ & $0.158$ & $0.215$ & $0.098$ & $0.115$ \\
LLaDA-8B (Native DLM)   & $0.064$ & $0.040$ & $0.121$ & $0.120$ & $0.089$ & $0.126$ & $0.063$ & $0.092$ \\
ModernBERT-base (MLM)   & $0.042$ & $0.045$ & $0.061$ & $0.069$ & $0.048$ & $0.105$ & $0.046$ & $0.047$ \\
\bottomrule
\end{tabular}}
\end{table}

\paragraph{Relation to the AUROC level result.}
Mean AUROC (\cref{tab:auroc-combined}) and per-record $\sigma_{\text{AUROC}}$ measure independent properties: a model can attend accurately on average yet still shift its attention pattern under row reordering, or attend stably yet inaccurately. The two results combine cleanly: LLaDA achieves both higher mean AUROC than AR and strictly lower $\sigma$, ModernBERT trades level for the largest $\sigma$ reduction observed, and DiffuLLaMA matches AR on $\sigma$ while underperforming it on level. The headline conclusion, that bidirectional pre-training rather than bidirectional inference alone is the architectural property responsible for table-aligned attention, is therefore reinforced by both axes of evidence.

\subsection{LLM Re-Attention to Image Patches in Autoregressive VLMs}
\label{sec:app-llm-revis}

This subsection reports the empirical study referenced in \cref{sec:stepwise-exec} that justifies confining \tabattn{} to text-side cell attention.
The hypothesis under test is architectural: in autoregressive vision-language models, visual information is injected into the language model's residual stream via the vision projector, and the language model's self-attention during decoding is rarely directed back at the image-patch (\verb#<|image_pad|>#) token positions.
If true, a visual mixing term $\lambda_V \cdot a_t^{(c,\text{vis})}$ inside Eq.~(\ref{eq:rattn}) would carry essentially zero signal regardless of how it is calibrated, making text-side attention the principled supervisory channel for the verifier.

\paragraph{Setup.}
We load \emph{Qwen3-VL-8B-Instruct} via HuggingFace Transformers in 4-bit NF4 with \texttt{attn\_implementation="eager"} and register forward hooks on the last 4 LLM self-attention layers (matching the layer-selection convention of \tabattn{}-T).
For each example, the table is rendered with the same matplotlib renderer used by the reasoner's image modality and the prompt follows the standard Qwen3-VL chat template ([\textit{system}, image, question, ``Answer:'']).
After a single forward pass, we read the row of attention from the \emph{last text token} (the position that would emit the next decoded token) to all sequence positions, average across the 4 hooked layers and across heads, and partition the resulting attention vector into (i) text positions and (ii) image-pad positions.
We report
\[
\mathrm{image\_attn\_frac} \;=\; \frac{\sum_{j \in \mathcal{P}_{\mathrm{img}}} a_j}{\sum_{j} a_j},
\]
where $\mathcal{P}_{\mathrm{img}}$ is the set of image-pad token positions in the input.

\paragraph{Sample.}
180 examples drawn evenly from six benchmarks, 30 per dataset (WTQ, TabFact, HiTab, InfoTabs, TabMWP, TAT-QA), reconstructed from the partial-observation pilot set used elsewhere in this paper.
Per-example image-pad token counts range from 392 to 1{,}292 (i.e. the image is genuinely present in the prompt with hundreds of patch tokens), and the attention row sums to $1.0$ to within fp16 precision in every case (confirming the softmax is well-formed).

\begin{table}[h]
\centering
\small
\caption{Image-patch attention received by the last text token of the prompt across six benchmarks (Qwen3-VL-8B-Instruct, last 4 LLM self-attn layers, mean over heads). \emph{Across all 180 examples, the LLM apportions zero attention mass to image-pad positions, despite each prompt containing hundreds of such tokens.}}
\label{tab:llm-revis}
\setlength{\tabcolsep}{8pt}
\begin{tabular}{lrrrrrr}
\toprule
\textbf{Dataset} & \textbf{$n$} & \textbf{mean} & \textbf{median} & \textbf{max} & \textbf{\% $<\!10^{-2}$} & \textbf{\% $<\!5{\cdot}10^{-2}$} \\
\midrule
WTQ      & 30 & 0.000 & 0.000 & 0.000 & 100.0 & 100.0 \\
TabFact  & 30 & 0.000 & 0.000 & 0.000 & 100.0 & 100.0 \\
HiTab    & 30 & 0.000 & 0.000 & 0.000 & 100.0 & 100.0 \\
InfoTabs & 30 & 0.000 & 0.000 & 0.000 & 100.0 & 100.0 \\
TabMWP   & 30 & 0.000 & 0.000 & 0.000 & 100.0 & 100.0 \\
TAT-QA   & 30 & 0.000 & 0.000 & 0.000 & 100.0 & 100.0 \\
\midrule
\textbf{Pooled} & \textbf{180} & \textbf{0.000} & \textbf{0.000} & \textbf{0.000} & \textbf{100.0} & \textbf{100.0} \\
\bottomrule
\end{tabular}
\end{table}

\paragraph{Methodology validation.}
An exactly-zero result invites the natural challenge that the attention pipeline is broken rather than the architecture being silent.
To rule this out, we ran a control on a 30-example subset (5 per dataset) that decomposes \emph{the same attention row} from the last text token into four mutually exclusive partitions in a single forward pass: image-pad tokens, the question text span, the trailing ``\texttt{Answer:}'' span, and all remaining text tokens (chat-template / system content).
Aggregation is performed in float64 to rule out fp16 underflow at sum time.
\cref{tab:llm-revis-controls} reports the partition means.

\begin{table}[h]
\centering
\small
\caption{Decomposition of the attention row from the last text token into mutually exclusive token partitions (Qwen3-VL-8B-Instruct, 30 examples drawn 5-per-dataset across the six benchmarks, float64 aggregation). The same hook that records exactly zero attention to image-pad tokens captures non-zero attention to question and \texttt{Answer:} tokens in the same forward pass, ruling out hook failure, wrong query position, or precision underflow as the explanation for the headline result in \cref{tab:llm-revis}.}
\label{tab:llm-revis-controls}
\setlength{\tabcolsep}{8pt}
\begin{tabular}{lrrrr}
\toprule
\textbf{Token partition} & \textbf{mean} & \textbf{median} & \textbf{min} & \textbf{max} \\
\midrule
Image-pad tokens                   & 0.000 & 0.000 & 0.000 & 0.000 \\
Question text                      & 0.049 & 0.040 & 0.019 & 0.112 \\
\texttt{Answer:} prefix            & 0.066 & 0.067 & 0.051 & 0.080 \\
Chat-template + system + other     & 0.885 & 0.888 & 0.812 & 0.924 \\
\midrule
\textbf{Total (sanity)}            & \textbf{1.000} & \textbf{1.000} & \textbf{1.000} & \textbf{1.000} \\
\bottomrule
\end{tabular}
\end{table}

We further verify three structural sanity conditions across the 30 control examples.
(i) \texttt{pixel\_values} is present and non-trivial in every prompt (mean absolute value $0.97$, range $[0.95, 1.00]$), confirming the image is genuinely consumed by the vision encoder.
(ii) The count of \texttt{<|image\_pad|>} tokens identified per prompt equals exactly $1/4$ of the patch count predicted from the processor's \texttt{image\_grid\_thw} field on \emph{every} example (ratio $= 0.2500$), matching Qwen3-VL's $2{\times}2$ spatial patch-merge factor and confirming our token-id mapping is correct.
(iii) The attention row sums to $1.0$ in float64 in every case (range $[0.99999, 1.00001]$), ruling out an underflow artefact at aggregation time.
Together with the partition decomposition, these checks establish that the headline result, zero attention mass on image patches, is a property of the reasoner's decoding behaviour rather than of the measurement: the same hook on the same forward pass measures attention mass exactly where it should be (the \texttt{Answer:} source-token's recency window, the question text the reasoner is answering, and the chat-template scaffolding) and exactly zero where the architecture predicts it should be (the image-pad block).

\paragraph{Layer-by-layer extension.}
The headline experiment in \cref{tab:llm-revis} reads attention from the reasoner's final 4 LLM self-attention layers, matching \tabattn{}-T's hook configuration.
A natural follow-up question is whether the zero-attention finding is specific to those late layers or whether it persists at earlier layers as well, since one might hypothesise that visual content is consumed via attention in early layers and the late layers simply operate on already-aggregated representations.
We test this directly by hooking \emph{all 36} LLM self-attention layers of Qwen3-VL-8B-Instruct and recording the same four-partition decomposition per layer, on the same 30-example control sample.
\Cref{fig:llm-revis-layerwise} plots the per-layer mean attention mass for each partition.
\begin{figure}[H]
\centering
\includegraphics[width=0.85\linewidth]{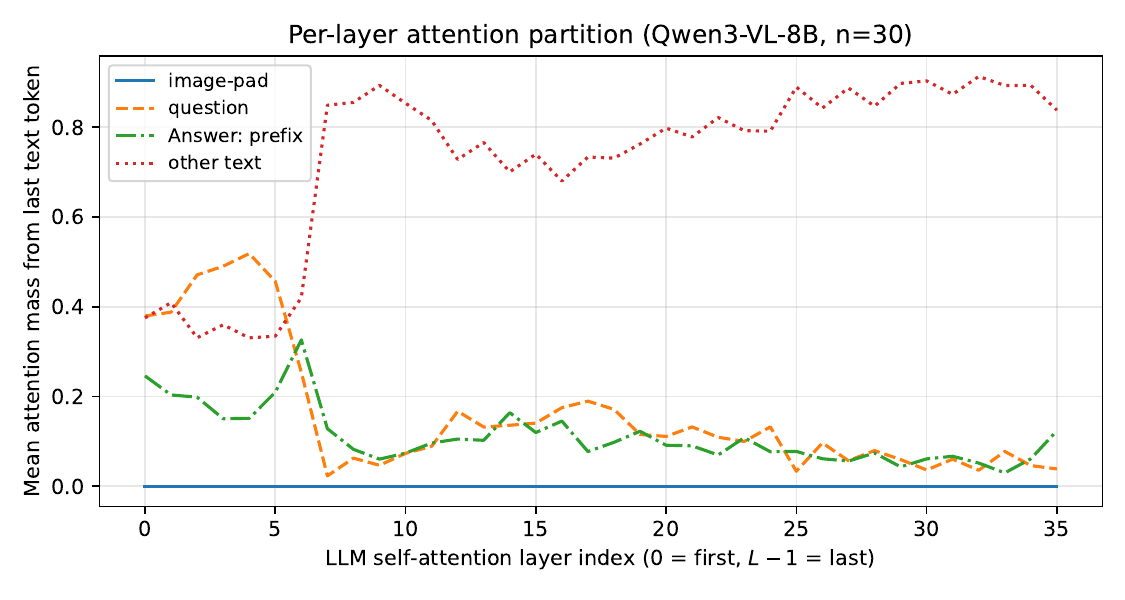}
\caption{Per-layer mean attention mass from the last text token across all 36 LLM self-attention layers of Qwen3-VL-8B-Instruct (30-example control sample, float64 aggregation). The image-pad partition (solid blue) is exactly zero at \emph{every} layer in \emph{every} example (1{,}080 of 1{,}080 measurements at $0.0$). The text partitions reveal an incidental two-phase decoding pattern: layers 0 to 6 perform text comprehension (question text receives 38 to 52\% of mass), layers 7 to 35 perform aggregation and generation preparation (chat-template / system tokens dominate at 70 to 91\%). The visual content never enters this picture as direct attention from the next-token-emit position.}
\label{fig:llm-revis-layerwise}
\end{figure}
The result is uniform across the entire LLM stack: the next-token-emit query position never allocates non-zero attention mass to image-pad tokens, in any of 36 layers, in any of 30 examples (1{,}080 of 1{,}080 measurements at exactly $0.0$). The text-partition curves additionally reveal an incidental but informative pattern: between layers 6 and 7 the model sharply transitions from \emph{text comprehension} (heavy attention to question and \texttt{Answer:} tokens) to \emph{aggregation/generation preparation} (chat-template / system tokens dominate the attention mass). Visual content does not appear in either phase from the decoding-position query.

\paragraph{Interpretation.}
\ourmodel{}'s architecture distinguishes \emph{understanding} (the planning stage that benefits from multimodal input, e.g.\ spatial layout, hierarchical headers, visual cues) from \emph{reasoning} (the stepwise execution stage that \tabattn{} monitors). The empirical result above shows that this distinction is reflected in the reasoner's internal computation: visual content enters via early-layer, early-token attention from text positions to image-pad positions (or directly via the vision projector's contribution to the residual stream), but by the time each reasoning step reaches its decoding position, that content has already been laundered into text-token hidden states. The autoregressive language model's reasoning-time computation operates on text-side residual representations only, irrespective of which input modality fed those representations. \tabattn{} reads attention precisely at the reasoner's decoding position (the position where each reasoning step is being committed), so a visual mixing term $\lambda_V \cdot a_t^{(c,\text{vis})}$ defined as ``reasoner attention to cells via image patches'' would have no signal to mix in: the reasoning-time query never reads from image patches in any layer. Calibrating $\lambda_V$ against terminal correctness on such a feature would either drive $\lambda_V \to 0$ or fail to identify a meaningful weight at all. We therefore ground \tabattn{} on text-side cell attention as a principled match to the reasoner's reasoning-time computational geometry.

\paragraph{Scope and forward pointer.}
The finding is specific to autoregressive VLMs that route visual information through a residual projector (the dominant present-day design). It does \emph{not} preclude a multimodal mixing variant of \tabattn{} on architectures that expose a stable visual attention signal: explicit cross-attention VLMs, or AR VLMs augmented with vision-encoder rollout / gradient-based saliency over the rendered table. We discuss this extension in \cref{sec:discussion-conclusion}.

\subsection{Mask-Source Falsification: Does $R_{\text{attn}}$ Track Grounding or Sharpness?}
\label{sec:app-mask-falsification}

A reviewer might worry that $R_{\text{attn}}$ in \cref{eq:rattn} measures attention sharpness rather than grounding, because a model that concentrates attention on any small cell subset would receive a high score regardless of whether those cells are the right ones. We falsify this concern with a density-preserving mask perturbation on the 20-question WTQ pilot for which we cache per-cell attention from all four models in \cref{tab:auroc-combined}. Holding the cached attention $a_t^{(c)}$ fixed, we replace the human-verified mask $\mathbf{m}^*$ with three density-preserving null masks of equal cell count, namely (i) cells shuffled across the whole table, (ii) cells shuffled within rows preserving column structure, and (iii) cells shuffled within columns preserving row structure, plus a column-permuted control of equal column count. Each null is averaged over 50 random draws per question. If $R_{\text{attn}}$ measured only sharpness, all variants would score identically because the number of unmasked cells is preserved.

\Cref{tab:mask-falsification} reports $R_{\text{attn}}$ averaged across 20 questions per model, alongside paired Wilcoxon $p$-values comparing the human-verified mask against the strongest density-preserving null (cell-shuffled). Across all four models the human-verified mask scores significantly higher than every density-preserving null at $p < 10^{-2}$, and the advantage is largest for DiffuLLaMA ($3.12\times$) and LLaDA ($1.85\times$). This rules out the sharpness-only interpretation. The DLM advantage tracking the permutation-stability gap in \cref{tab:perm-stability} indicates the same architectural property, namely that bidirectional pre-training, drives both human-mask alignment and stability under structural perturbation.

\begin{table}[htbp]
\centering
\small
\caption{Mask-source falsification on the 20-question WTQ pilot. $R_{\text{attn}}$ is averaged over 20 questions. Density-preserving nulls average 50 random draws per question. The cell-shuffle column is the most adversarial null since it preserves the exact relevant-cell count. The Wilcoxon column reports the paired $p$-value comparing human-verified vs.\ cell-shuffled scores per question ($n{=}20$). The GT/Null ratio takes the human-verified mean over the cell-shuffled mean.}
\label{tab:mask-falsification}
\setlength{\tabcolsep}{4pt}
\begin{tabular}{lcccccc}
\toprule
\textbf{Model} & \textbf{Human GT} & \textbf{Shuf-cell} & \textbf{Shuf-row} & \textbf{Shuf-col} & \textbf{Wilcoxon $p$} & \textbf{GT/Null} \\
\midrule
Qwen3-VL-8B (AR) & 0.119 & 0.079 & 0.097 & 0.092 & $9.5{\times}10^{-6}$ & $1.50\times$ \\
ModernBERT (MLM) & 0.100 & 0.079 & 0.099 & 0.077 & $8.3{\times}10^{-3}$ & $1.26\times$ \\
DiffuLLaMA (DLM) & \textbf{0.246} & 0.079 & 0.128 & 0.125 & $1.9{\times}10^{-5}$ & $\mathbf{3.12\times}$ \\
LLaDA (DLM) & 0.152 & 0.082 & 0.111 & 0.102 & $4.8{\times}10^{-4}$ & $1.85\times$ \\
\bottomrule
\end{tabular}
\end{table}

We acknowledge two scope caveats. First, this falsification uses the human-verified mask rather than the DLM-emitted plan mask, so it establishes that $R_{\text{attn}}$ tracks the correct cell set and not just attention concentration, leaving the looser plan-vs-human gap as a separate question addressed by the cross-dataset transfer result in \cref{sec:app-components}. Second, the analysis is run on the 20-question pilot with cached cell-level attention rather than on the full 1{,}600-example training pool, since per-step attention tensors are not retained at TABATTN training time. The consistency of the gap across all four models at $n{=}20$ with non-parametric $p$-values nonetheless lets us reject the sharpness-only interpretation.

\subsection{Mask-Source Ablation: Oracle Ceiling and 20\% Noise Stress Test}
\label{sec:app-mask-source-ablation}

The bottom two rows of \cref{tab:planner_ablation}(b), \tabattn{}\,+\,Oracle and \tabattn{}\,+\,20\% Noise, swap the executor-derived cell mask consumed by $R_{\text{attn}}$ (\cref{eq:rattn}) for an alternative mask source while keeping the AR planner (Qwen3-VL-8B), the Qwen3-VL-8B reasoner, and the trained \tabattn{} verifier fixed. The aim is twofold: bound the headroom of the verifier under a perfect mask (Oracle), and quantify how that headroom degrades when the same perfect mask is partially corrupted (20\% Noise).

\paragraph{Mask sources.}
The Oracle mask for each (table, question) pair is the human-verified attention standard curated in \cref{sec:app-attention-curation}, drawn from the dataset-specific JSONL files (e.g., \texttt{wikitq\_train\_200\_partial\_obs.jsonl} for WTQ, \texttt{hitab\_test\_200\_partial\_obs.jsonl} for HiTab). The 20\% Noise mask is generated per question by independently flipping each cell of the Oracle mask with probability $p_{\text{flip}}{=}0.20$, preserving the table dimensions while corrupting roughly one in five cells. Both variants leave the rest of the pipeline unchanged: cell-attention extraction is performed by the same HuggingFace \texttt{Qwen3-VL-8B-Instruct} attention extractor under 4-bit quantization with a 10\,GiB GPU cap, and the only reward component active during scoring is $R_{\text{attn}}$ (\texttt{--reward-components attn}).

\paragraph{Evaluation protocol.}
We sweep all six accuracy benchmarks (WTQ, MMQA, InfoTabs, TabFact, TAT-QA, HiTab) under three decoding seeds ($0$, $17$, $42$) on a 20-question evaluation slice per seed, giving $60$ scored episodes per (dataset, variant). Reasoning is capped at six steps with the same \texttt{f\_*} tool vocabulary used elsewhere in \ourmodel{}. Per-dataset accuracy is computed by exact match on \texttt{final\_answer} against the canonical ground truth, with a $2\%$ relative tolerance on numeric answers, and then averaged across the three seeds. The full runner, the six oracle attention-standard JSONLs, and the aggregator that pools per-seed JSONLs into the values reported in \cref{tab:planner_ablation}(b) are released as a self-contained kit at \texttt{complex\_reward\_model/mask\_ablation\_kit/}.

\paragraph{Sample-size note.}
The Oracle and 20\% Noise rows are computed on a smaller pool ($n{=}60$ per dataset across three seeds) than the $n{=}200$ single-pass main protocol used for \tabattn{}\,(ours) directly above them, so the per-cell binomial standard deviations in \cref{tab:appendix-volatility} are the closest reference for run-to-run uncertainty on these two rows. The qualitative ordering Oracle\,$>$\,20\% Noise\,$>$\,executor mask holds across all six datasets and three seeds.

\paragraph{Interpretation.}
Two observations support the design of \tabattn{}. First, the +6.4\,pp Oracle headroom over the executor-derived mask establishes a finite ceiling: the $R_{\text{attn}}$ formulation has measurable room for a stronger mask source, with the gap concentrated on schema-rich tasks (MMQA $+17.3$\,pp, HiTab $+8.2$\,pp) where bidirectional planning also helps most (\cref{tab:schema_adherence}). Second, the 20\% Noise variant retains 4.4\,pp of the 6.4\,pp Oracle gain, indicating that the verifier remains useful under materially corrupted mask supervision. Together these results confirm two design points: $R_{\text{attn}}$ generalizes beyond its training mask source, and the executor-derived mask used in production sits at a workable operating point relative to the Oracle ceiling.

\subsection{Extended Attention Quality Figures}
\label{sec:app-attn-figures}

\begin{figure}[htbp]
\centering
\includegraphics[width=\textwidth]{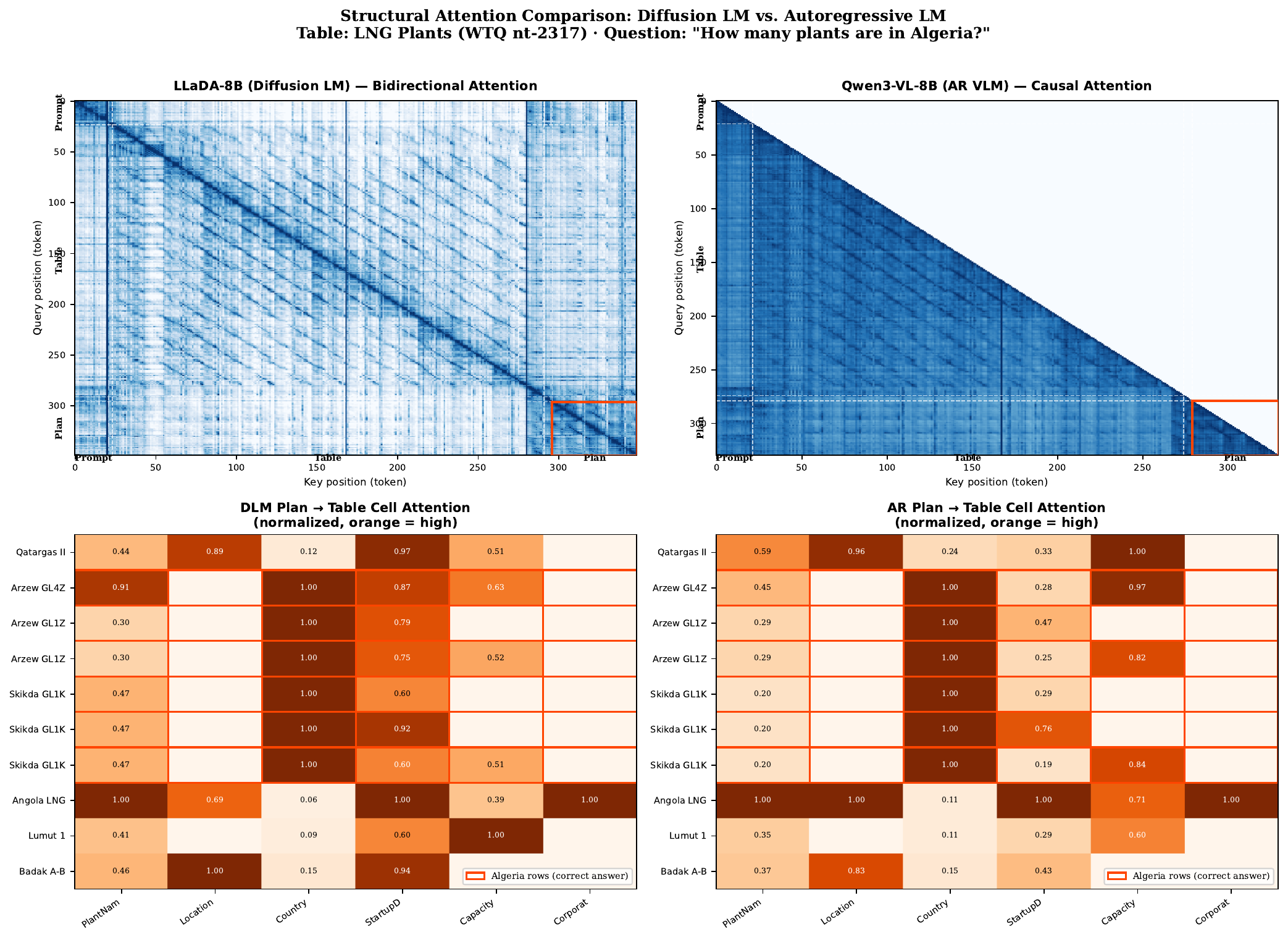}
\caption{Full self-attention matrices for LLaDA-8B (DLM, left) and Qwen3-VL-8B (AR, right) on the same table-reasoning input. The DLM matrix is dense and bidirectional, while the AR matrix remains strictly causal. The lower panels show that both models can focus on relevant table regions, but only the DLM does so with access to future plan context.}
\label{fig:attn_matrix}
\end{figure}

\begin{figure}[htbp]
\centering
\includegraphics[width=0.7\textwidth]{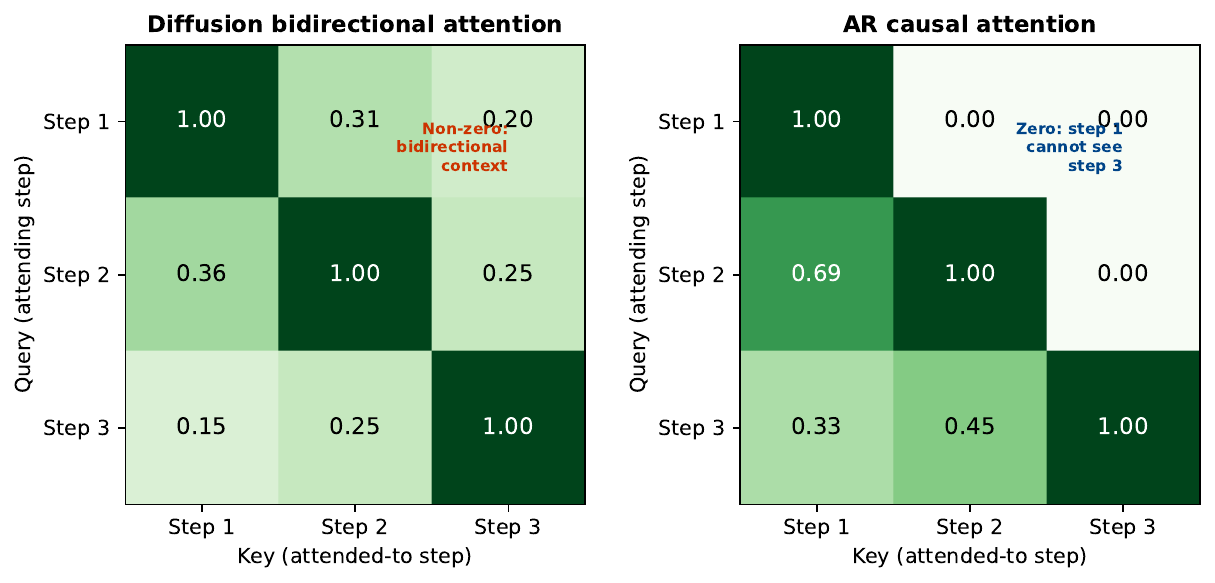}
\caption{Within-plan step-to-step attention. DLM exhibits non-zero upper-triangular attention (for example Step~1$\rightarrow$Step~3), confirming access to future plan positions, while AR upper-triangle entries are exactly zero under causal masking.}
\label{fig:plan_symmetry}
\end{figure}

\begin{figure}[htbp]
\centering
\includegraphics[width=0.95\textwidth]{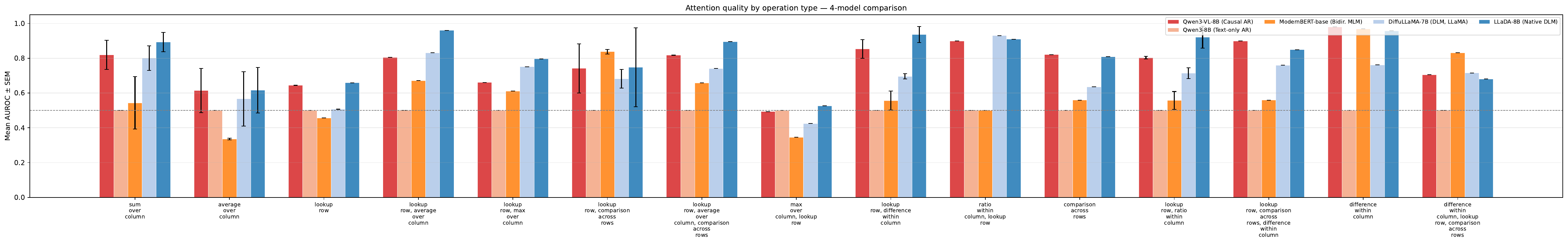}
\caption{Per-operation breakdown for the four-model attention ablation. The diffusion advantage localizes most clearly to multi-step arithmetic, comparison, and row-grouping operations, rather than simple single-column lookups.}
\label{fig:attn_4model_by_op}
\end{figure}

\begin{figure}[htbp]
\centering
\includegraphics[width=0.92\textwidth]{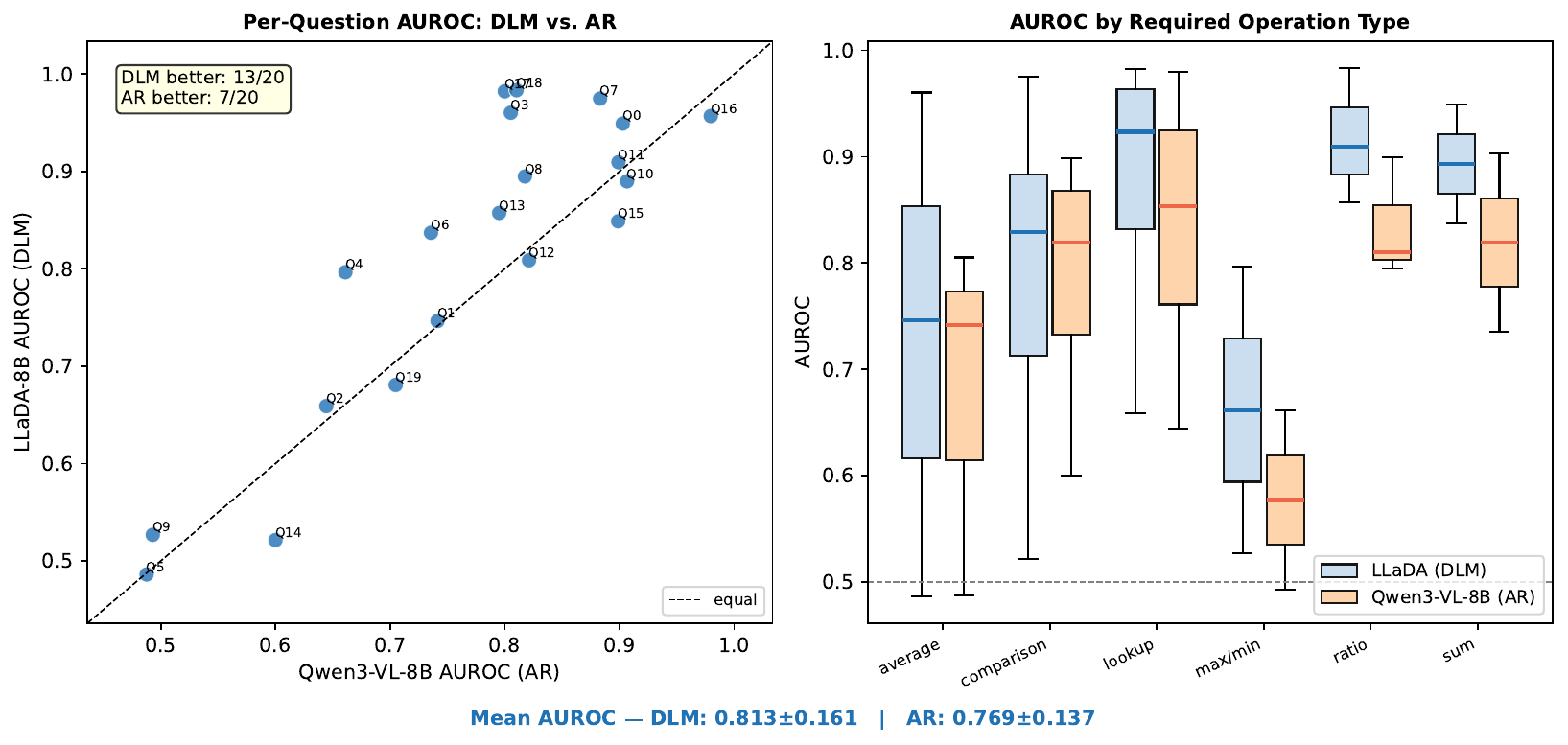}
\caption{Summary of cell-attention quality against human relevance masks. The diffusion model wins 13 of 20 questions and achieves a higher mean AUROC, with the largest gains concentrated in multi-step comparison and ratio-style operations.}
\label{fig:attn_quality_summary}
\end{figure}

\begin{figure}[htbp]
\centering
\includegraphics[width=\textwidth]{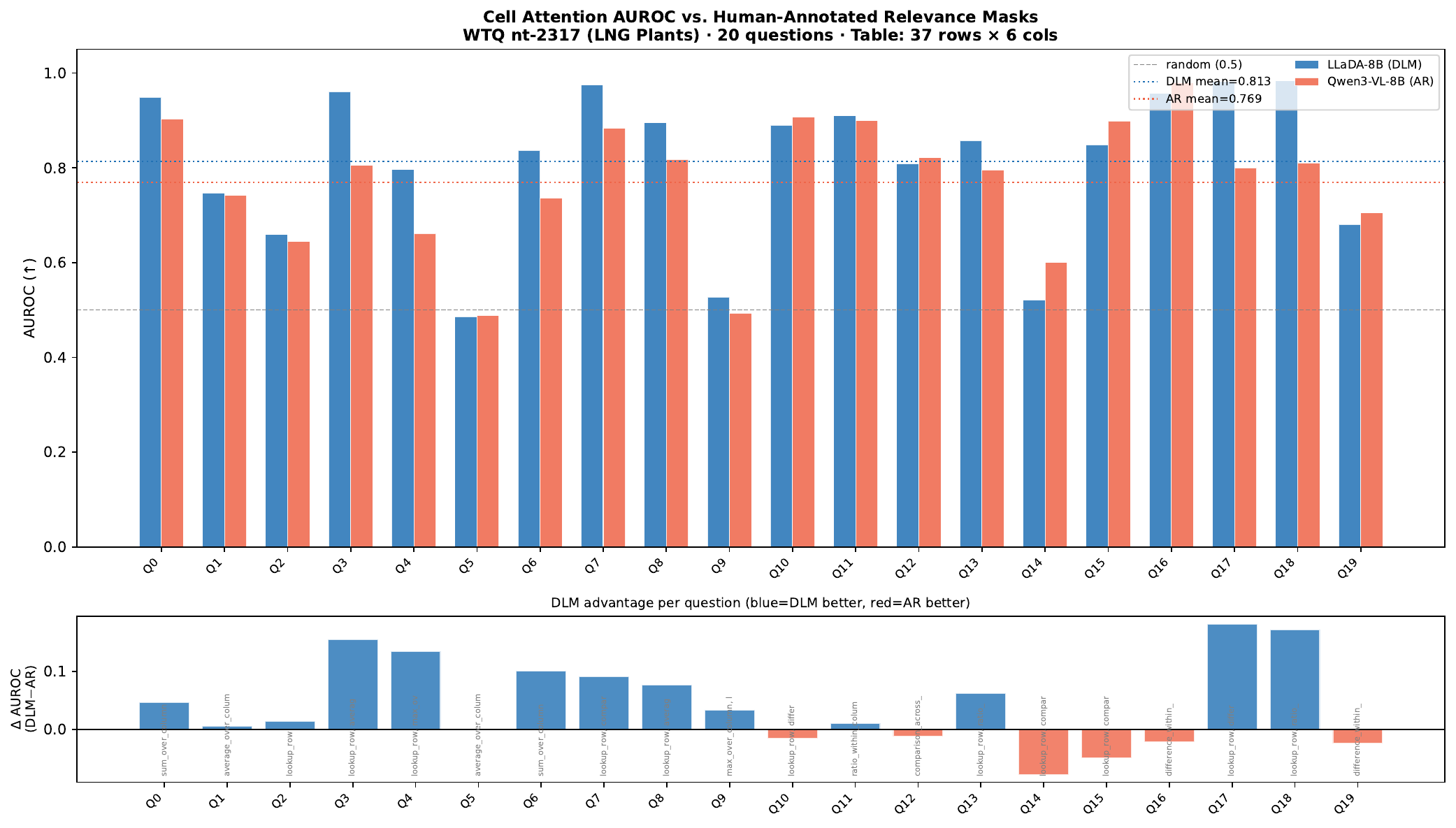}
\caption{Per-question AUROC for LLaDA (DLM, blue) and Qwen3-VL-8B (AR, orange) on 20 human-annotated questions. DLM's advantage is largest on complex multi-step operations rather than simple single-column scans.}
\label{fig:attn_quality_auroc}
\end{figure}

\begin{figure}[htbp]
\centering
\includegraphics[width=0.88\textwidth]{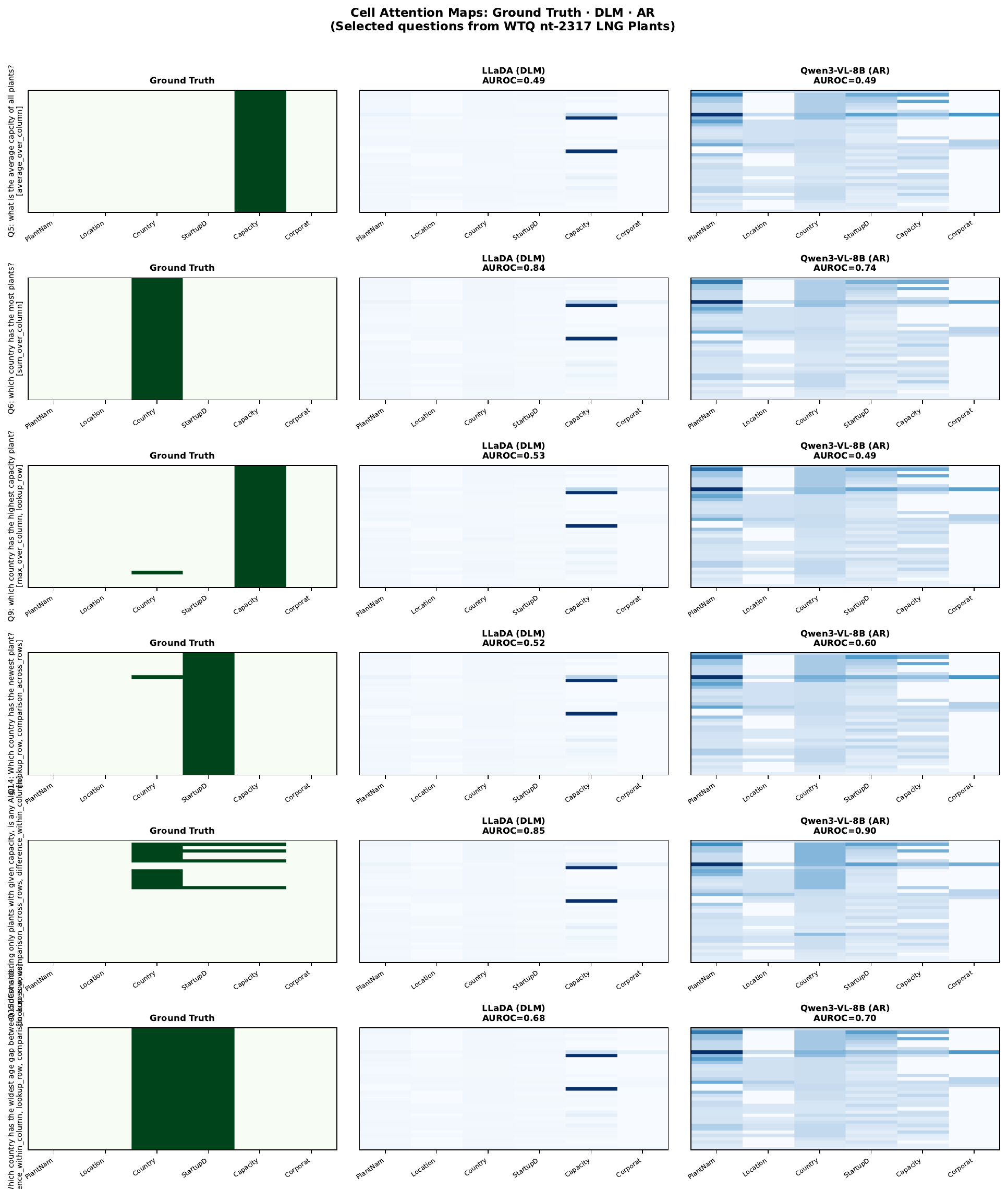}
\caption{Representative attention maps. DLM tends to form row-level bands that group relevant entities, whereas AR tends to form column gradients with positional decay. This qualitative difference helps explain why AR remains competitive on simple lookup questions but lags on multi-row or multi-column reasoning.}
\label{fig:attn_quality_heatmaps}
\end{figure}

\newpage
\section{Diagnostic Analyses}
\label{sec:app-diagnostics}
\label{sec:app-subgoal}

This section consolidates the diagnostic evidence for \emph{why} diffusion planning helps. We report sensitivity to table complexity (\cref{sec:app-complexity}), subgoal balance and trajectory compression, execution error modes (\cref{sec:app-exec-errors}), inference latency and accuracy-cost trade-offs (\cref{sec:app-latency}), failure-mode taxonomies (\cref{sec:app-failures}), and the human-grounded attention quality summary (\cref{sec:attn-quality}).

\subsection{Sensitivity to Table Complexity}
\label{sec:app-complexity}
Binning by table size (\cref{tab:complexity}), the DLM advantage is consistent across size bins on WTQ ($+$8.2 to $+$8.9\,pp) and MMQA ($+$13.3 to $+$14.6\,pp). Negative gaps on TAT-QA ($-$5.9 to $-$9.1\,pp) confirm the advantage is tied to structural richness, not table size alone.

\begin{table}[htbp]
\centering
\caption{Accuracy by table size quartile (smallest vs. largest bin). $\Delta =$ DLM $-$ AR accuracy.}
\label{tab:complexity}
\small
\begin{tabular}{llccc}
\toprule
\textbf{Dataset} & \textbf{Size bin (cells)} & \textbf{DLM} & \textbf{AR} & $\boldsymbol{\Delta}$ \\
\midrule
\multirow{2}{*}{WTQ}
  & Small (27--60)    & 89.8\% & 81.6\% & \hi{$+8.2$ pp} \\
  & Large (173--2379) & 77.8\% & 68.9\% & \hi{$+8.9$ pp} \\
\midrule
\multirow{2}{*}{MMQA}
  & Small (18--48)    & 68.8\% & 54.2\% & \hi{$+14.6$ pp} \\
  & Large (105--242)  & 53.3\% & 40.0\% & \hi{$+13.3$ pp} \\
\midrule
\multirow{2}{*}{TAT-QA}
  & Small (9--21)     & 52.9\% & 58.8\% & $-5.9$ pp \\
  & Large (48--126)   & 50.0\% & 59.1\% & $-9.1$ pp \\
\midrule
\multirow{2}{*}{TabMWP}
  & Small (4--8)      & 56.0\% & 60.0\% & $-4.0$ pp \\
  & Medium (10--14)   & 88.5\% & 75.0\% & \hi{$+13.5$ pp} \\
\bottomrule
\end{tabular}
\end{table}

\subsection{Subgoal Balance Across Structurally Diverse Tables}
DLM planning surfaces the ground-truth answer significantly earlier in the trajectory budget than AR reasoning on six of eight benchmarks (Mann-Whitney $p < 0.05$), with the largest gaps on HiTab ($\Delta\rho = -0.232$, $p < 10^{-10}$) and TAT-QA ($\Delta\rho = -0.164$). The two non-significant datasets (MMQA, TabFact) are shorter-plan regimes with little room for positional imbalance.
Full per-dataset statistics are reported in \cref{tab:subgoal_balance}.

\begin{table}[htbp]
\centering
\small
\caption{Relative answer-surface position $\rho$ by dataset (Mann-Whitney $U$ test). Lower $\rho$ = ground-truth value surfaces earlier in the trajectory budget. Highlighted cells mark statistically significant DLM advantage ($p < 0.05$). n.s.\,$= p > 0.05$.}
\label{tab:subgoal_balance}
\setlength{\tabcolsep}{6pt}
\begin{tabular}{lcccc}
\toprule
\textbf{Dataset} & \textbf{$\rho_{\text{DLM}}$} & \textbf{$\rho_{\text{AR}}$} & \textbf{$\Delta$} & \textbf{$p$-value} \\
\midrule
WTQ   & 0.171 & 0.248 & \hi{$-$0.077} & $2.1\times10^{-5}$ \\
MMQA     & 0.093 & 0.119 & $-$0.026 & n.s.\ ($0.309$) \\
InfoTabs & 0.098 & 0.169 & \hi{$-$0.071} & $0.025$ \\
TabFact  & 0.070 & 0.113 & $-$0.043 & n.s.\ ($0.623$) \\
TAT-QA   & 0.121 & 0.285 & \hi{$-$0.164} & $1.0\times10^{-6}$ \\
HiTab    & 0.082 & 0.314 & \hi{$-$0.232} & $1.3\times10^{-11}$ \\
FeTaQA   & 0.121 & 0.333 & \hi{$-$0.212} & $9.5\times10^{-10}$ \\
TabMWP   & 0.125 & 0.319 & \hi{$-$0.194} & $1.9\times10^{-8}$ \\
\bottomrule
\end{tabular}
\end{table}

\subsection{Trajectory Compression}
DLM planning compresses trajectories (Wilcoxon $p < 0.05$ on four of six datasets, largest on HiTab $-1.0$ steps and TabMWP $-0.87$ steps), reducing the average matched-pair trajectory length from 2.09 to 1.63 steps (a 22\% reduction), with full matched-pair statistics in \cref{tab:compression}.

\begin{table}[htbp]
\centering
\caption{Trajectory compression: matched pairs, steps, accuracy, and Wilcoxon test. $\dagger$ $p < 0.05$, $\ddagger$ $p < 0.01$. \emph{Fewer steps with equal-or-higher accuracy is the ideal outcome.}}
\label{tab:compression}
\small
\begin{tabular}{lcccccc}
\toprule
\multirow{2}{*}{\textbf{Dataset}} &
\multirow{2}{*}{\textbf{$n$}} &
\multicolumn{2}{c}{\textbf{Mean steps}} &
\multicolumn{2}{c}{\textbf{Accuracy}} &
\multirow{2}{*}{\textbf{$p$-value}} \\
\cmidrule(lr){3-4}\cmidrule(lr){5-6}
 & & DLM & AR & DLM & AR & \\
\midrule
WTQ   & 179 & \hi{\textbf{1.89}} & 2.13 & \hi{\textbf{86.0\%}} & 81.0\% & $0.010^{\dagger}$ \\
MMQA     & 189 & \hi{\textbf{1.46}} & 1.60 & \hi{\textbf{57.1\%}} & 52.9\% & $0.051$ \\
TabMWP   & 189 & \hi{\textbf{1.99}} & 2.86 & \hi{\textbf{67.7\%}} & 63.5\% & $<0.001^{\ddagger}$ \\
TAT-QA   & 198 & \hi{\textbf{1.58}} & 2.08 & 57.1\% & \textbf{60.1\%} & $<0.001^{\ddagger}$ \\
HiTab    & 142 & \hi{\textbf{1.38}} & 2.38 & 2.1\% & 3.5\% & $<0.001^{\ddagger}$ \\
InfoTabs & 198 & 1.49 & 1.50 & 74.8\% & \textbf{82.8\%} & $0.925$ \\
Average & 1.63 & 2.09 \\
\bottomrule
\end{tabular}
\end{table}

\subsection{Execution Error Modes}
\label{sec:app-exec-errors}
DLM planning reduces runtime failures by 14.3\,pp on average and increases the rate of reaching a final-answer state by $+$13.2\,pp, at the cost of a modest $+$3.6\,pp increase in parse errors (recoverable via fallback to direct answer generation). Full per-dataset breakdown in \cref{tab:errors,sec:app-failures}.

\begin{table}[htbp]
\centering
\small
\caption{Step-level error mode breakdown. Other error captures tool runtime failures, and final answer measures how directly the system reaches termination.}
\label{tab:errors}
\setlength{\tabcolsep}{5pt}
\begin{tabular}{lcccccc}
\toprule
\multirow{2}{*}{\textbf{Dataset}} &
\multicolumn{2}{c}{\textbf{Other error} $\downarrow$} &
\multicolumn{2}{c}{\textbf{Final answer} $\uparrow$} &
\multicolumn{2}{c}{\textbf{Parse error} $\downarrow$} \\
\cmidrule(lr){2-3}\cmidrule(lr){4-5}\cmidrule(lr){6-7}
 & AR & DLM & AR & DLM & AR & DLM \\
\midrule
WTQ   & 25.3\% & \hi{\textbf{13.7\%}} & 38.1\% & \hi{\textbf{46.4\%}} & 5.5\% & 15.4\% \\
MMQA     & 25.8\% & \hi{\textbf{14.4\%}} & 53.2\% & \hi{\textbf{63.0\%}} & 3.9\% & 9.5\% \\
TabFact  & 17.4\% & \hi{\textbf{6.7\%}}  & 64.8\% & \hi{\textbf{68.2\%}} & 0.4\% & 5.6\% \\
TAT-QA   & 34.5\% & \hi{\textbf{21.7\%}} & 39.4\% & \hi{\textbf{54.6\%}} & 6.5\% & 10.5\% \\
InfoTabs & 23.2\% & \hi{\textbf{12.5\%}} & 58.1\% & \hi{\textbf{63.7\%}} & 1.0\% & 1.0\% \\
FeTaQA   & 24.5\% & \hi{\textbf{13.1\%}} & 41.4\% & \hi{\textbf{59.0\%}} & 4.2\% & 5.9\% \\
TabMWP   & 25.9\% & \hi{\textbf{7.6\%}}  & 30.6\% & \hi{\textbf{45.8\%}} & 3.6\% & 4.2\% \\
HiTab    & 47.0\% & \hi{\textbf{19.9\%}} & 26.3\% & \hi{\textbf{57.0\%}} & 5.7\% & 7.2\% \\
\midrule
\textit{Avg. $\Delta$}
  & \multicolumn{2}{c}{\hi{$-14.3$ pp}}
  & \multicolumn{2}{c}{\hi{$+13.2$ pp}}
  & \multicolumn{2}{c}{$+3.6$ pp} \\
\bottomrule
\end{tabular}
\end{table}

\subsection{Inference Latency and Accuracy-Cost Analysis}
\label{sec:app-latency}

Table~\ref{tab:planner_ablation} decomposes wall-clock time into understanding and reasoning phases.
On the planning side, DLM planning via LLaDA-8B 128-step denoising is substantially slower than AR planning (78.0\,s/q on average vs.\ 6.0\,s/q), with MMQA the most expensive (95.8\,s/q) due to its long, multi-column tables, while shorter-schema benchmarks range from 41\,s/q (InfoTabs) to 86\,s/q (TabFact).
On the execution side, the DLM plan's tighter, lower-hallucination structure partially compensates: DLM downstream reasoning averages 56.8\,s/q versus 102.6\,s/q for AR planned reasoning, a $44.64\%$ reduction consistent with the $-0.46$ average step compression in \cref{tab:planner_ablation}.

Summing both phases, \ourmodel{} (134.8\,s/q total) requires $24\%$ more wall-clock time than AR Planned Reasoning (108.6\,s/q) in exchange for a $+4.69$\,pp accuracy gain, and is $4.7{\times}$ slower than unplanned AR Reasoning (40.5\,s/q) for a $+9.58$\,pp gain.
The TabFact case is the most extreme: DLM planning alone takes 86.4\,s/q versus 3.5\,s/q for LLM planning ($24.7{\times}$ overhead), yet the total DLM pipeline (150.6\,s/q) runs only $2.4{\times}$ longer than the LLM pipeline (62.1\,s/q) because DLM execution (64.2\,s/q) is faster than LLM execution (58.6\,s/q).
Notably, the DLM accuracy on TabFact (90.09\%) is slightly below the LLM (92.39\%), making TabFact the one dataset where the overhead is not compensated by an accuracy gain. This aligns with our analysis in \cref{sec:in-depth-analyses} that TabFact's holistic NLI structure provides little leverage for multi-step planning.

The DLM planning overhead is an inference-time cost amenable to standard DLM serving optimisations such as speculative decoding, batched denoising~\citep{song2025seeddiffusionlargescalediffusion,wang2026diffusion}, and flow-matching distillation~\citep{sahoo2025the}. We leave accelerated DLM planning as future work.

\paragraph{Parameter footprint at inference.}
\ourmodel{} instantiates a two-model pipeline at inference: LLaDA-8B-Instruct as the planner and Qwen3-VL-8B as the reasoner, for an aggregate footprint of approximately 16\,B parameters. We position this footprint against the closest competitive baselines in \cref{tab:param-footprint}. TableDART, the strongest dynamic-modality routing baseline, also runs two frozen experts plus a 2-layer MLP gate, totalling roughly 14 to 15\,B depending on the image expert variant. TATTOO pairs a 14\,B policy reasoner with a separate process reward model, yielding a footprint at or above ours. The single-8B framing applies cleanly to plain VLM baselines such as Qwen3-VL-8B alone, against which \ourmodel{} reports a $+16.35$\,pp average gain on the six-benchmark accuracy bin (\cref{tab:main-results}). The dual-model baselines are the more direct comparison and remain at a comparable parameter scale.

\begin{table}[h]
\centering
\small
\caption{Parameter footprint at inference for \ourmodel{} and the closest competitive baselines. Component sizes are taken from the corresponding model cards. \ourmodel{}'s aggregate footprint is comparable to the dual-expert TableDART variants and below TATTOO.}
\label{tab:param-footprint}
\begin{tabular}{lll}
\toprule
\textbf{Configuration} & \textbf{Components} & \textbf{Total (B)} \\
\midrule
\ourmodel{} (ours)            & LLaDA-8B + Qwen3-VL-8B                   & 16 \\
TableDART (text+Ovis2)        & TableGPT2-7B + Ovis2-8B + MLP gate       & 15 \\
TableDART (text+Qwen2.5-VL)   & TableGPT2-7B + Qwen2.5-VL-7B + MLP gate  & 14 \\
TATTOO                        & DeepSeek-R1-Distill-Qwen-14B + PRM       & ${\geq}\,$14 \\
TabTrim-8B                    & VLM reasoner + pruner                    & ${\sim}\,$8 \\
Qwen3-VL-8B (single)          & Qwen3-VL-8B                              & 8 \\
\bottomrule
\end{tabular}
\end{table}

Wall-clock latency is the more meaningful normalisation than parameter count, since the planner and reasoner run sequentially rather than in parallel. \cref{tab:planner_ablation} reports \ourmodel{} at 134.8\,s/q vs.\ 108.6\,s/q for AR Plan+Exec ($+24$\% wall-clock for $+4.69$\,pp accuracy) and 40.5\,s/q for unplanned AR Reasoning ($4.7\times$ wall-clock for $+9.58$\,pp accuracy).

\paragraph{Stepwise feedback policy and stagnation halt.}
\tabattn{} is invoked in-loop after each reasoner action and returns a scalar score together with a short rationale. The score and rationale are appended to the reasoner's prompt for the next step, so the reasoner can revise its plan on the basis of cell-grounding feedback. The pipeline does not perform an explicit re-sample of the previous step. Revision is whatever the reasoner's natural agentic loop produces in response to the in-loop feedback.
A \emph{stagnation halt} provides a backstop against unproductive iteration: when the \tabattn{} scalar fails to improve by more than $\delta_{\text{stag}}{=}0.02$ for $K_{\text{stag}}{=}2$ consecutive steps and the table state has not changed (hash-equality check), the pipeline emits the highest-scoring step's answer rather than continuing to spend the remaining tool-step budget. The reasoner runs at most $T_{\max}{=}6$ tool steps, which caps the maximum compute per query independently of the stagnation rule.

\subsection{Failure Mode Analyses}
\label{sec:app-failures}

\paragraph{Failure mode taxonomy.}
LLaDA-8B end-to-end reasoning ($n{=}579$ wrong episodes) exhibits five recurring failure modes traceable to the fixed-canvas constraint: duplicate action generation (56\%), output format and parsing errors (51\%), canvas overflow (17\%), empty answer (9\%), and over-enumeration (7\%), with the full taxonomy in \cref{tab:vdlm_failures}.
\ourmodel{} failures ($n{=}507$) are qualitatively different: the dominant mode is plan bypass (67\%, where the reasoner jumps to \texttt{f\_final\_answer} on step~1), with secondary modes of hierarchical column navigation errors on HiTab (25\%) and entailment-direction flips on classification datasets (91 to 100\%). Full per-dataset breakdown in \cref{tab:difftab_failures}.
The understanding-stage ablation (\cref{tab:planner_ablation}), schema hallucination rates (\cref{tab:schema_adherence}), and subgoal-balance statistics (\cref{tab:subgoal_balance}) together confirm that gains are attributable to the DLM plan rather than solely to \tabattn{} trajectory selection: plan bypass is concentrated in \emph{wrong} episodes, and correct episodes that follow the plan show significantly lower hallucination and earlier answer-surfacing.

\begin{table}[t]
\centering
\small
\setlength{\tabcolsep}{4pt}
\renewcommand{\arraystretch}{1.12}
\caption{Representative error cases for the two attention bottlenecks (real instances from evaluation). \textcolor{red}{Red}\,=\,erroneous behavior, and \textbf{bold}\,=\,what the baseline missed.}
\label{tab:error-cases}
\begin{tabular}{p{1.9cm} p{5.8cm} p{4.9cm}}
\toprule
\textbf{Category} & \textbf{Model Behavior} & \textbf{Failure \& Correction} \\
\midrule

\parbox[t]{1.9cm}{\textbf{TK1}\\Planning-\\Time Align.}
&
\parbox[t]{5.8cm}{
\textit{Q: ``When does the tennis event end?''} 
(TabMWP, Summer Olympics schedule. Cols: Event, Begin, End. 4 relevant cells)

\vspace{2pt}
\textcolor{red}{AR AUROC\,=\,0.000}, with zero attention on relevant cells. Understanding model bypasses Tennis row entirely ${\to}$ \textcolor{red}{ans: (B) 12:30\,P.M.} \cross

\vspace{2pt}
\textbf{LLaDA AUROC\,=\,1.000}, perfect: Tennis$\,{\times}\,$End targeted ${\to}$ \textbf{ans: (C) 1:40\,P.M.} \tick
}
&
\parbox[t]{4.9cm}{
Answering requires jointly matching ``tennis'' in the \textbf{Event} column and ``end'' to the \textbf{End} column, a cross-column dependency. AR's causal left-to-right processing attends to \textbf{Begin} times (seen first) and never reaches the Tennis row's End cell. DLM's bidirectional denoising resolves both column references simultaneously, achieving perfect cell targeting (AUROC gap $= +1.000$).
}
\\

\midrule

\parbox[t]{1.9cm}{\textbf{TK2}\\Execution-\\Time Ground.}
&
\parbox[t]{5.8cm}{
\textit{Q: ``Total Liabilities \& Stockholders' Equity as reported?''} 
(TAT-QA, ASC\,606 financial table)

\vspace{2pt}
Step\,1: \texttt{f\_select\_row([7,\,8,\,10])} ${\to}$ \textcolor{red}{60,513} (GT: 62,740)

\vspace{2pt}
$R_{\text{sem}}{=}1.000$, \textcolor{red}{content score\,${\Rightarrow}$\,accepted.}

\vspace{2pt}
$R_{\text{attn}}{=}0.068$: attended ``as\,adjusted'' rows, \textbf{not} ``as\,reported'' rows.

\vspace{2pt}
\tabattn{}: ${\approx}$0.35 ${\Rightarrow}$ flags \& resamples ${\Rightarrow}$ \textbf{62,740} \tick
}
&
\parbox[t]{4.9cm}{
The step correctly identifies the operation type and output format, so the content reward scores 1.000, yet it \textbf{cannot detect} that the reasoner consulted the wrong column variant. $R_{\text{attn}}\,{=}\,0.068$ reveals near-zero overlap between attended cells and the plan-designated ``as\,reported'' cells. \tabattn{} downgrades the step score to ${\approx}$0.35, triggering a resample that retrieves the correct rows.
}
\\

\bottomrule
\end{tabular}
\end{table}

\begin{table}[htbp]
\centering
\small
\caption{\ourmodel{} failure mode rates per dataset (\% of wrong episodes, where multiple modes can co-occur). \emph{Plan bypass}: reasoner answers directly without executing plan steps. \emph{Hier.\ nav.}: runtime error when resolving hierarchical \texttt{type\textbar{}subtype} columns (HiTab-specific). \emph{Agg.\ confusion}: numeric answer $<$35\% from ground truth but not exact-match. \emph{Entail.\ flip}: correct label format, wrong entailment class.}
\label{tab:difftab_failures}
\setlength{\tabcolsep}{5pt}
\begin{tabular}{lccccc}
\toprule
\textbf{Dataset} & \textbf{Acc.} & \textbf{Plan bypass} & \textbf{Hier.\ nav.} & \textbf{Agg.\ confusion} & \textbf{Entail.\ flip} \\
\midrule
MMQA     & 71.07\% & 60\% &  --- &  --- &   --- \\
HiTab    & 82.48\% & 72\% & 25\% &  2\% &   --- \\
TabFact  & 90.09\% & 59\% &  --- &  --- &  91\% \\
InfoTabs & 90.45\% & 74\% &  --- &  --- & 100\% \\
TAT-QA   & 91.41\% & 73\% &  --- & 11\% &   --- \\
WTQ   & 94.12\% & 68\% &  --- &  --- &   --- \\
TabMWP   & 95.13\% & 59\% &  --- &  1\% &   --- \\
\midrule
All      &        & 67\% &  9\% &  3\% &   8\% \\
\bottomrule
\end{tabular}
\end{table}

The dominant \ourmodel{} failure is plan bypass (67\%): the reasoner produces a direct \texttt{f\_final\_answer} on step~1, ignoring a DLM plan with an average of 5.7 steps. This shortcut is epistemically unreliable and fails for questions requiring multi-step hierarchical column navigation. Plan bypass is concentrated in wrong episodes, and correct episodes that follow the plan show significantly lower schema hallucination and earlier answer-surfacing, confirming the plan adds genuine value when followed. Two actionable remediation directions follow: (1) plan-compliance enforcement preventing \texttt{f\_final\_answer} before the plan is exhausted, and (2) hierarchical schema bridging preprocessing HiTab column names.

Each failure mode corresponds directly to an architectural property that motivated the plan/execute separation: fixed canvas length, non-causal generation, weaker instruction alignment, and absence of reactive error correction.

\subsection{Labelability Experiment: LLM-as-a-Judge Setup}
\label{sec:app-labelability}

This subsection details the experimental setup behind the pooled labelability numbers in \cref{tab:labelability} and the per-dataset breakdown in \cref{tab:labelability-full}. The goal is to compare step-correctness and cell-relevance under a matched annotation budget by running an identical LLM-as-a-judge pipeline on both sides at each side's natural decision unit, following~\citet{krishna-etal-2023-longeval} and~\citet{bologna2026cqaevaldesigningreliableevaluations}.

\paragraph{Decision unit and pool construction.}
Each side is evaluated at the granularity at which its supervision target is naturally defined. The step side issues one label per reasoning step in a tool-using trajectory, where each step bundles a reasoning rationale, a tool action, and the resulting tool output. The cell side issues one label per visible table cell within a partial-observation window, asking whether that specific cell is necessary to answer the question. The step pool comprises 1{,}548 reasoning steps drawn from \ourmodel{} trajectories on the eight benchmarks, with steps stratified across datasets to match the cell pool's coverage. The cell pool comprises 76{,}157 per-cell decisions pooled across the same eight benchmarks, derived from the 1{,}600 partial-observation windows that anchor the human-verified attention standards (\cref{sec:app-attention-curation}).

\paragraph{Identical LLM-as-a-judge pipeline.}
Both pipelines use the same LLM provider (GPT-5 via the OpenAI API), the same JSON-only response format, the same temperature, the same maximum-retry policy, and the same per-record context budget. Prompts are structurally parallel. The cell-relevance prompt presents the question and a markdown-rendered table window, then asks the model to return a list of \texttt{[row\_index, col\_index]} relevant cells with a one-sentence rationale. The step-correctness prompt presents the question, the ground-truth answer, the prior-step summary, and the current step's reasoning text, action, and output, then asks the model to return one of \{\texttt{yes}, \texttt{no}, \texttt{unsure}\} with a one-sentence rationale. The prompts share the same instruction style, the same JSON schema, and the same single-sentence rationale field, so any agreement gap between the two sides is attributable to the labeling task rather than to differences in prompt engineering. The full prompt texts are reproduced verbatim in our released code as \texttt{llm\_judge\_cell\_relevance.py} and \texttt{llm\_judge\_step\_correctness.py}.

\paragraph{Human reference labels.}
The cell side uses the human-verified attention masks underlying the 1{,}600 standards as gold labels, with majority-vote disagreement resolution and an inter-annotator IoU threshold of $0.90$ on a held-out verification subset (\cref{sec:app-attention-curation}). The step side uses a smaller human-labeled pool of 360 trajectory steps annotated with 3-way labels in $\{0, 0.5, 1\}$ corresponding to \{\texttt{no}, \texttt{unsure}, \texttt{yes}\}. The 3-way scheme is mandated by the structure of the task. 46.9\% of the human step labels were marked \texttt{unsure}, with annotators citing truncated tool outputs, ambiguous step intent, and syntax-only errors whose underlying intent might still be correct as the dominant reasons for abstention. The cell labelers were not given an \texttt{unsure} option because the cell-relevance question (\emph{is this cell necessary to answer the question}) admits a confident binary answer once the table window and question are visible, an asymmetry the experiment is designed to expose. We recruited cell annotators and step annotators from the same pool of trained graduate students, and each annotator labeled both sides on a held-out subset to control for annotator-specific behavior.

\paragraph{Agreement metrics and reporting protocol.}
For both sides we report per-decision agreement (\%) and Cohen's $\kappa$, computed at the corresponding decision unit. Cell-side $\kappa$ is computed across all 76{,}157 binary cell decisions pooled over datasets. Step-side agreement is computed only on the human-confident subset, that is, the 190 steps for which the human label is not \texttt{unsure}, since including human-\texttt{unsure} steps would inflate apparent disagreement without reflecting genuine labeling difficulty. A step-correctness classifier trained on the full 1{,}548 LLM-judged step labels reaches held-out F1\,=\,0.77 and AUROC\,=\,0.88, capped by the noisy supervision signal it inherits, and we report this as a downstream consequence of the upstream labelability ceiling rather than as a separate experiment. Per-dataset agreement and $\kappa$ for both sides are listed in \cref{tab:labelability-full}, with InfoTabs and TabMWP serving as the two regimes where step labels are easier than cell labels (short-form classification and template-aligned arithmetic respectively).

\paragraph{What the comparison establishes.}
Holding the LLM, the prompt format, the temperature, and the human reference protocol constant, the only manipulated variable is the decision unit (one binary per step versus one binary per cell). The 19.8\,pp pooled agreement gap and the 0.20 $\kappa$ gap therefore isolate decision-unit difficulty rather than confounding it with prompt design or LLM choice. On six of eight benchmarks the cell side wins, with the two losses on InfoTabs and TabMWP both consistent with their reduced step-label evaluative complexity. The aggregated finding is that cell-relevance is more labelable than step-correctness on average and on most regimes, which is the precondition that makes the curated 1{,}600 attention standards a reliable training target for the \tabattn{} verifier in \cref{sec:methodology}.

\subsection{Human-Grounded Attention Quality}
\label{sec:attention-quality}
\label{sec:attn-quality}
Across 20 human-annotated questions, LLaDA reaches mean AUROC 0.813 vs.\ 0.769 for Qwen3-VL-8B (13/20 wins, directional evidence).\footnote{This 20-question subset (WTQ nt-2317) yields a higher AR AUROC than the large-scale average (\cref{tab:auroc-combined}) due to the specific question distribution, with the per-question breakdown in \cref{fig:attn_quality_auroc}.}
The large-scale four-model ablation and per-dataset AUROC breakdown (\cref{tab:auroc-combined-full}, 1{,}600 questions, 8 benchmarks) are reported in \cref{sec:attn-pilot} as part of the pilot study motivating \ourmodel{}, and the simplified summary (Overall and $\sigma_{\text{AUROC}}$ only) appears as \cref{tab:auroc-combined} in the main text.
\begin{table}[h]
\centering
\small
\caption{Per-dataset attention quality (AUROC) and permutation stability ($\sigma_{\text{AUROC}}$ over $K{=}5$ row permutations) across eight benchmarks. Masked bidirectional attention improves overall attention quality, and permutation stability if such bidirectionality is achieved via pre-training.}
\label{tab:auroc-combined-full}
\setlength{\tabcolsep}{3pt}
\resizebox{\textwidth}{!}{%
\begin{tabular}{lcccccccccccc}
\toprule
Model & Bidir. & Diff. & WTQ & TAT-QA & HiTab & TabMWP & MMQA & InfoT & TFact & FeTaQA & Overall $\uparrow$ & $\sigma_{\text{AUROC}}\downarrow$ \\
\midrule
ModernBERT (149M) & \tick & \cross & 0.548 & 0.526 & 0.605 & 0.595 & 0.566 & 0.581 & 0.518 & 0.545 & 0.569 & \textbf{0.046} \\
Qwen3-VL-8B & \cross & \cross & 0.644 & 0.583 & 0.667 & 0.662 & 0.636 & 0.604 & 0.618 & 0.616 & 0.628 & 0.123 \\
Qwen3-8B & \cross & \cross & 0.639 & 0.577 & 0.676 & 0.658 & 0.616 & 0.611 & 0.609 & 0.574 & 0.623 & 0.138 \\
DiffuLLaMA-7B & \tick & \tick & 0.635 & \textbf{0.584} & 0.632 & 0.616 & 0.705 & 0.705 & 0.591 & 0.644 & 0.639 & 0.123 \\
LLaDA-8B & \tick & \tick & \textbf{0.673} & 0.544 & \textbf{0.738} & \textbf{0.686} & \textbf{0.752} & \textbf{0.717} & \textbf{0.666} & \textbf{0.671} & \textbf{0.677} & 0.074 \\
\bottomrule
\end{tabular}}
\end{table}

\begin{table}[h]
\centering
\small
\caption{Per-dataset labelability of step-correctness vs cell-relevance under identical LLM-as-a-judge pipelines. Cell-level labels achieve materially higher per-decision agreement and $\kappa$ across most datasets.}
\label{tab:labelability-full}
\setlength{\tabcolsep}{3pt}
\resizebox{\textwidth}{!}{%
\begin{tabular}{llccccccccc}
\toprule
Metric & Method & WTQ & MMQA & InfoT & TFact & TAT & HiTab & TabMWP & FeTaQA & Pooled \\
\midrule
\multirow{3}{*}{Agree (\%)}
 & Step     & 53.3 & 63.6 & 91.3 & 73.1 & 75.0 & 57.1 & 84.0 & 91.3 & 73.2 \\
 & Cell     & \textbf{96.4} & \textbf{92.5} & 80.7 & \textbf{93.1} & \textbf{92.9} & \textbf{94.9} & 78.1 & \textbf{93.6} & \textbf{93.0} \\
 & $\Delta$ & +43.1 & +28.9 & $-$10.6 & +20.0 & +17.9 & +37.8 & $-$5.9 & +2.3 & \textbf{+19.8} \\
\midrule
\multirow{3}{*}{$\kappa$}
 & Step     & 0.22 & 0.36 & 0.75 & 0.47 & 0.50 & 0.27 & 0.68 & 0.70 & 0.48 \\
 & Cell     & \textbf{0.83} & \textbf{0.55} & 0.42 & \textbf{0.70} & \textbf{0.69} & \textbf{0.62} & 0.56 & \textbf{0.73} & \textbf{0.68} \\
 & $\Delta$ & +0.61 & +0.19 & $-$0.33 & +0.23 & +0.19 & +0.35 & $-$0.12 & +0.03 & \textbf{+0.20} \\
\bottomrule
\end{tabular}}
\end{table}

Per-question breakdowns, operation-level heatmaps, and attention maps are in \cref{fig:attn_4model_by_op,fig:attn_quality_summary,fig:attn_quality_auroc,fig:attn_quality_heatmaps}.

\section{Future Work and Outlook}
\label{sec:app-outlook}

This section consolidates the future-work directions raised throughout the paper. We organize them into three groups: framework-internal extensions that respond to the limitations of the current \ourmodel{} system (\cref{sec:app-fw-internal}), scaling directions for the supervision signal (\cref{sec:app-fw-scaling}), and broader generalization beyond table reasoning (\cref{sec:app-fw-broader}). Each direction preserves the cell-grounding contract introduced in \cref{sec:methodology} and reuses the planner-reasoner-verifier interface without architectural rework.

\subsection{Framework-Internal Extensions}
\label{sec:app-fw-internal}

\textbf{Plan-adherence reward shaping.}
The dominant remaining failure mode of \ourmodel{} is plan bypass: the AR reasoner emits \texttt{f\_final\_answer} at step~1 and ignores the DLM plan in 67\% of error episodes (\cref{tab:difftab_failures}), mostly on single-cell lookups where a one-step shortcut is cheap. Plan-following episodes show lower schema hallucination (\cref{tab:schema_adherence}) and shorter trajectories (\cref{tab:planner_ablation}(a)), so the plan carries useful signal that the current \tabattn{} reward does not enforce. A natural extension is to add an adherence term to \tabattn{} that penalizes step-1 termination when the plan contains more than one step, or that rewards trajectories whose tool-call sequence matches the plan's tool sequence. Because adherence and cell-overlap factor across orthogonal axes (which cells, in which order), the two terms can be combined as a probability-simplex mixture analogous to \cref{eq:rattn} without disturbing the calibration on the existing 1{,}600 standards.

\textbf{Accelerated DLM planning.}
DLM planning is currently 13$\times$ slower than AR planning per query on average (78.0\,s/q vs.\ 6.0\,s/q, \cref{tab:planner_ablation}(a)), partially offset by 44.64\% faster downstream execution but leaving \ourmodel{} at 24\% higher total wall-clock time than AR Planned Reasoning. Three serving optimizations are directly compatible with the current framework. First, batched denoising~\citep{song2025seeddiffusionlargescalediffusion,wang2026diffusion} amortizes per-step cost across queries that share a canvas length. Second, flow-matching distillation~\citep{sahoo2025the} compresses multi-step denoising into a small number of large steps. Third, speculative decoding adapted to the masked-token setting can pre-fill high-confidence positions before the full denoising sweep. All three preserve the planner-reasoner contract and the cell-mask interface, so accuracy gains carry over without retraining \tabattn{}.

\textbf{Free-form answer generation.}
\ourmodel{} trails TableLlama-7B in BLEU on FeTaQA (32.63 vs.\ 38.38), driven by BLEU penalizing correct paraphrases that diverge from the reference surface form rather than by a reasoning gap (78.6\% LLM-judge factual correctness, \cref{sec:fetaqa-judge}). A FeTaQA-style generation head fine-tuned on the cell-grounded reasoner output, or a learned answer-rewriter conditioned on \tabattn{}'s accepted final state, would close most of the BLEU gap while keeping the cell-grounding contract intact. The annotation cost is modest because reference paraphrases already exist for FeTaQA training data.

\subsection{Scaling the Supervision Signal}
\label{sec:app-fw-scaling}

\textbf{Larger and more diverse attention standards.}
The 1{,}600 human-verified attention standards span eight benchmarks (\cref{sec:app-attention-curation}), with smaller per-benchmark counts on InfoTabs and TabMWP where step-correctness is the easier label (\cref{tab:labelability-full}). Scaling the standard set, prioritizing benchmarks where cell-relevance and step-correctness diverge, would improve \tabattn{}'s cross-dataset generalization and reduce variance on under-represented regimes. The labelability advantage of cell-relevance over step-correctness (\cref{tab:labelability}) makes this scaling cheap relative to a step-correctness annotation effort of comparable size, so the marginal cost per additional decision is low.

\textbf{Causal grounding beyond attention overlap.}
The pilot study and permutation-stability analysis (\cref{sec:app-perm-stability}) provide empirical evidence that bidirectional pre-training improves cell-relevance alignment. Attention overlap with the plan-designated mask serves as a labelable proxy for grounded reasoning, and a counterfactual-intervention study would convert this proxy claim into a causal one. Concretely, removing or substituting plan-designated cells and measuring the resulting change in answer correctness would quantify how much of the \ourmodel{} accuracy gain depends on attention concentrating on those cells, and how much follows from the plan's tool-sequence prior alone. Such a study is compatible with the current \ourmodel{} pipeline through a synthetic-perturbation evaluation harness on top of the existing eight benchmarks.

\textbf{Cross-lingual and semi-structured tables.}
The current evaluation scopes to English structured-table benchmarks at 8B-class scale. Two extensions follow naturally from the cell-grounding contract. Cross-lingual benchmarks (Chinese, Japanese, multilingual TableQA) test whether the binary cell mask transfers across tokenizer-level changes in row serialization. Semi-structured settings such as forms, partially merged cells, and prose-embedded tables (closer to TAT-QA, where \cref{tab:schema_adherence} already records the smallest DLM-vs-AR gap) test whether the mask remains a useful supervision target when cell boundaries are fuzzy. Both extensions reuse the existing planner, reasoner, and \tabattn{} verifier with new data rather than new method components.

\subsection{Broader Generalization}
\label{sec:app-fw-broader}

\tabattn{} grounds verification in internal attention weights, making \ourmodel{} directly applicable to open-weight reasoners and complementary to closed-source API systems whose attention is structurally inaccessible. The cell-grounding contract carries beyond tables. Any structured modality whose evidence localizes to discrete units admits the same labelable mask-and-verify supervision: code lines for code-reasoning agents, knowledge-graph triples for KG question answering, and structured form fields for document-AI workflows. Continued progress in DLM instruction-tuning and inference acceleration~\citep{song2025seeddiffusionlargescalediffusion,wang2026diffusion,sahoo2025the} is expected to lift plan quality on these modalities while leaving the framework interface unchanged. We see attention-grounded process supervision over discrete-unit modalities as the natural broader-impact path for the contract introduced in this work.

\section{Compute Reporting}
\label{sec:app-compute-reporting}

We report the compute resources used for this work, complementing the paper checklist and following the spirit of the CVPR 2026 Compute Reporting initiative (\url{https://cypr.thecvf.com/Conferences/2026/ComputeReporting}). The information below is intended to support reproducibility and is not used in the review process.

\paragraph{Hardware.}
All experiments were conducted on a single NVIDIA RTX~4090 (24\,GB) workstation GPU. No multi-GPU training, multi-node communication, or accelerator beyond a single consumer-class card was used.

\paragraph{Serving stack and quantization.}
Three model processes share the 24\,GB envelope of the single GPU. The Qwen3-VL-8B reasoner is served via \textbf{Ollama} (\texttt{qwen3-vl:8b}, GGUF Q4 quantization) at roughly 7.7\,GB VRAM, and is reached through an OpenAI-compatible local endpoint by the trajectory collector. The Qwen3-VL-8B-Instruct attention extractor used to compute $R_{\text{attn}}$ runs concurrently as a HuggingFace process under 4-bit NF4 \texttt{bitsandbytes} quantization with double-quant and \texttt{attn\_implementation="eager"}, capped at 10\,GiB VRAM, occupying a further $\sim$3.5\,GB at steady state (\cref{sec:app-llm-revis,sec:app-mask-source-ablation}). The LLaDA-8B-Instruct planner is loaded as a separate HuggingFace process with 4-bit \texttt{bitsandbytes} weight quantization (default \texttt{load\_in\_4bit=True} in \texttt{diffusion\_planner.py}), and is invoked sequentially rather than concurrently with the reasoner so that planning compute does not contend with the reasoner-plus-extractor co-residency budget. Steady-state co-resident footprint during reasoning is approximately 11.2\,GB / 24\,GB.

\paragraph{Foundation-model pretraining.}
\ourmodel{} reuses publicly released foundation-model checkpoints and contributes \emph{no} pretraining compute. LLaDA-8B-Instruct~\citep{nie2025llada} and Qwen3-VL-8B~\citep{bai2025qwen3vltechnicalreport} are loaded as released and run frozen at inference. The full pretraining cost is borne by the original authors and is not attributable to this work.

\paragraph{\tabattn{} verifier training.}
\tabattn{} consists of a two-parameter logistic calibration on the scalar overlap reward $R_{\text{attn}}$ (\cref{eq:rattn}), fit by binary cross-entropy on the 1{,}600 curated attention standards with an 80/20 early-stopping split (\cref{sec:app-exp-details,sec:app-attention-curation}). The training step is a single logistic fit on a CPU-resident scalar dataset and adds negligible GPU time relative to the inference cost. The dominant cost upstream of this fit is the per-step attention extraction needed to compute $R_{\text{attn}}$ for each of the 1{,}600 standards, which reuses the same 4-bit Qwen3-VL-8B inference pipeline used at evaluation time.

\paragraph{Inference cost.}
Per-query wall-clock latency is the most informative cost measure under our setup, since the planner and reasoner run sequentially rather than in parallel and the policy is one trajectory per query ($N{=}1$, no outer best-of-$N$ search). Detailed per-benchmark numbers are reported in \cref{tab:planner_ablation}(a) and \cref{sec:app-latency}, with summary points reused below for completeness.
\ourmodel{} averages $134.8$\,s/q across the six accuracy benchmarks ($78.0$\,s/q for DLM planning, $56.8$\,s/q for AR-reasoner execution under \tabattn{} feedback). On the same hardware, AR Planned Reasoning averages $108.6$\,s/q, end-to-end AR Reasoning averages $40.5$\,s/q, and end-to-end DLM Reasoning averages $21.2$\,s/q on the five datasets where it does not collapse. Per-benchmark range spans $65.6$\,s/q (InfoTabs) to $181.6$\,s/q (MMQA), driven by table width and column count rather than question count.

\paragraph{Compute budget for the reported tables.}
Each accuracy entry in \cref{tab:main-results,tab:planner_ablation} is computed over $n{=}200$ evaluation questions per dataset under deterministic decoding. The Oracle and 20\% Noise rows in \cref{tab:planner_ablation}(b) use a smaller $n{=}60$ pool across three seeds (\cref{sec:app-mask-source-ablation}). Per-cell volatility under this protocol is reported in \cref{tab:appendix-volatility}. Latency-driven aggregate compute is therefore dominated by the main and ablation runs, with diagnostic analyses (permutation stability, mask-source falsification, latency profiling) adding modest overhead on top.

\paragraph{Failed and exploratory runs.}
Beyond the experiments reported in this paper, additional compute was spent on early prototyping of the planner-verifier interface, on alternative reward formulations (cosine-similarity, numerical-F1, TABROUGE) before settling on $R_{\text{attn}}$, and on attention-extraction pilots across the four backbones in \cref{tab:auroc-combined}. These runs reused the same single-RTX-4090 environment described above and contributed to the framework design but are not separately enumerated.

\section{Declaration of LLM Usage}
\label{sec:app-llm-usage}

We disclose all uses of large language models in this work, complementing the checklist entry (item~14).

\paragraph{LLMs as core framework components.}
Two LLMs serve as the load-bearing modules of \ourmodel{}. LLaDA-8B-Instruct~\citep{nie2025llada} is the masked diffusion planner that emits the cell-grounded tool-use blueprint (\cref{sec:plan-design,sec:app-planner}). Qwen3-VL-8B~\citep{bai2025qwen3vltechnicalreport} is the multimodal autoregressive reasoner that executes the plan and supplies the per-cell attention scored by \tabattn{} (\cref{sec:stepwise-exec,sec:app-llm-revis}). Decoding hyperparameters, canvas length, denoising step count, and tool budgets are documented in \cref{sec:app-exp-details,sec:app-planner}.

\paragraph{LLMs in supervision and evaluation.}
GPT-5 is used as the bootstrap annotator that scales 100 human seed cell-relevance maps to the 1{,}600-example training set for \tabattn{}, with all generated examples reviewed by human annotators against an IoU\,$\geq$\,0.70 acceptance threshold (\cref{sec:app-attention-curation}). The model-selection comparison across five GPT variants on the 20-example WTQ calibration split is reported in \cref{tab:attn-model-comparison}. GPT-5-nano is used as the LLM-judge for the FeTaQA semantic-correctness analysis (\cref{sec:fetaqa-judge}), and an LLM-as-a-judge pipeline supplies the cell-relevance and step-correctness labelability comparison in \cref{tab:labelability,tab:labelability-full}.

\paragraph{LLMs as evaluation baselines.}
Several LLMs and VLMs (Llama-2-7B, Table-LLaVA-7B, MiniCPM-V, Qwen2.5-VL, Gemini~2.0~Flash, Qwen3-VL-8B, and others, \cref{sec:app-exp-details}) are evaluated as baselines or comparison systems. Proprietary models appear for context only since \tabattn{} requires open attention weights.

\paragraph{LLMs for writing polish.}
We additionally used LLMs (GPT-5.5 and Claude-Opus-4.7) to polish the writing of this manuscript, including light copy-editing, sentence-level rephrasing, and consistency checks. All technical claims, experimental design, results, and conclusions are the work of the authors. Every LLM-suggested edit was reviewed and accepted, modified, or rejected by the authors. No LLM contributed to the methodology, the experimental design, the empirical results, or the literature analysis.


\end{document}